\newtheorem{lemma}{Lemma}
\newtheorem{theorem}{Theorem}
\newtheorem{assumption}{Assumption}
\newtheorem{proposition}{Proposition}
\newtheorem{condition}{Condition}
\def\vkh{$V_h^k$}
\def\vkh1{$V_{h+1}^k$}
\def\vkh1'{$V_{h+1}^k(s')$}
\def\ovh{$V_{h}^*(\cdot)$}
\def\ovh1{$V_{h+1}^*$}
\def\ovh1'{$V_{h+1}^*(s')$}
\def\tvkh{$ \tilde{V}_h^k $}
\def\tvkh1{ $\tilde{V}_{h+1}^k$ }
\def\tvkh1'{$\tilde{V}_{h+1}^k(s')$}
\def\pkhs'{$P_{s_h^k,a_h^k,s'} $}
\def\hpksa{$\hat{P}^k_{s,a}$}
\def\hpksa'{$\hat{P}^k_{s,a,s'}$}
\def\hpkh{$\hat{P}^k_{s_h^k,a_h^k}$}
\def\hpkh'{$ \hat{P}^k_{s_h^k,a_h^k,s'} $}
\newcommand{\expect}{\mathbb{E}}
\newcommand{\indict}{\mathbb{I}}
\newcommand{\states}{\mathcal{S}}
\newcommand{\trans}{P}
\newcommand{\actions}{\mathcal{A}}
\newcommand{\mdp}{M}
\newcommand{\poly}{\mathrm{poly}}
\newcommand{\algoname}{SSTP}
\title{Nearly Minimax Optimal Reward-free Reinforcement Learning}
\author{
Zihan Zhang \\Tsinghua University \\ \texttt{zihan-zh17@mails.tsinghua.edu.cn} 
\and
Simon S. Du \\ University of Washington \\ \texttt{ssdu@cs.washington.edu}
\and
Xiangyang Ji \\Tsinghua University \\ \texttt{xyji@tsinghua.edu.cn}
		}
\begin{document}
	\maketitle
	\begin{abstract}
    We study the reward-free reinforcement learning framework, which is particularly suitable for batch reinforcement learning and scenarios where one needs policies for multiple reward functions.
This framework has two phases.
In the exploration phase, the agent collects trajectories by interacting with the environment without using any reward signal.
In the planning phase, the agent needs to return a near-optimal policy for arbitrary reward functions.
We give a new efficient algorithm, \textbf{S}taged \textbf{S}ampling + \textbf{T}runcated \textbf{P}lanning (\algoname), which interacts with the environment at most $O\left( \frac{S^2A}{\epsilon^2}\poly\log\left(\frac{SAH}{\epsilon}\right) \right)$ episodes in the exploration phase, and guarantees to output a near-optimal policy for arbitrary reward functions in the planning phase. 
Here, $S$ is the size of state space, $A$ is the size of action space, $H$ is the planning horizon, and $\epsilon$ is the target accuracy relative to the total reward.
Notably, our sample complexity scales only \emph{logarithmically} with $H$, in contrast to all existing results which scale \emph{polynomially} with $H$. 
Furthermore, this bound matches the minimax lower bound $\Omega\left(\frac{S^2A}{\epsilon^2}\right)$ up to logarithmic factors.

%
 
 Our results rely on three new techniques : 1) A new sufficient condition for the dataset to plan for an $\epsilon$-suboptimal policy
  ; 2) A new way to plan efficiently under the proposed condition using soft-truncated planning; 3) Constructing extended MDP to maximize the truncated accumulative rewards efficiently.
 
	\end{abstract}

\section{Introduction}\label{sec:intro}
Reinforcement learning (RL) studies the problem in which an agent aims to maximize its accumulative rewards  by interaction with an unknown environment. 
A major challenge in RL is \emph{exploration} for which the agent needs to strategically visit new states to learn transition and reward information therein. To execute efficient exploration, the agent must follow a well-designed adaptive strategy by which the agent is properly guided by the reward and transition information, other than the trivial random exploration. 
Provably algorithms have been proposed to help the agent visit new states efficiently with a fixed reward and transition model. 
See Section~\ref{sec:rel} for a review.

However, in various applications, it is necessary to re-design the reward function to incentivize the agent to learn new desired behavior 
 \citep{altman1999constrained,achiam2017constrained, tessler2018reward,miryoosefi2019reinforcement}.
To avoid repeatedly invoking the learning algorithm and interacting with the environment, it is desired to let the agent efficiently explore the environment \emph{without the reward signal} and collect data based on which the agent can compute a near-optimal policy for \emph{any} reward function.

The main challenge of this problem is that the agent needs to collect data that sufficiently covers the state space.
This problem was previous studied in \cite{brafman2002r,hazan2019provably,du2019provably}.
Recently, \citet{jin2020reward} formalized the setting, and named it reward-free RL.
In this setting, the agent first collects a dataset by interacting with the environment, and then is required to compute an $\epsilon$-optimal policy given any proper reward function.

\citet{jin2020reward} gave a formal theoretical treatment of this setting.
They designed a method which guarantees that by collecting $O\left(\left(\frac{S^2AH^3}{\epsilon^2}+\frac{S^4AH^6}{\epsilon}\right)\poly\log\left(SAH/\epsilon\right)\right)$  episodes, the agent is able to output an $\epsilon$-optimal policy, where $S$ is the number of states, $A$ is the number of actions, and $H$ is the planning horizon. \footnote{Because we consider reward function satisfying the total reward bounded by $1$ setting throughout this paper, we rescale the error $\epsilon$  to $\epsilon H$ in their bound.}
They also provided an $\Omega\left(\frac{S^2A}{\epsilon^2}\right)$ lower bound.
Recently,   \citet{kaufmann2020adaptive, menard2020fast} gave tighter sample complexity bound.
 We refer the readers to table \ref{tab:comparisons} for more details.




Unfortunately, it remains open what is the fundamental limit of the sample complexity of reward-free RL. 
In particular, compared to the $\Omega(\frac{S^2A}{\epsilon^2})$ lower bound, 
all existing upper bounds have a \emph{polynomial} dependence on $H$.
The gap between upper and lower bound can be huge for environments with a long horizon.
Conceptually, this gap represents that we still lack understanding on whether long horizon imposes significant hardness in reward-free RL.

\subsection{Our Contribution}
In this work, we break the $\poly\left(H\right)$-dependency barrier.
We design a new algorithm, \textbf{S}taged \textbf{S}ampling + \textbf{T}runcated \textbf{P}lanning (\algoname), which enjoys the following sample complexity guarantee.

%

\begin{theorem}\label{thm1}
For any $\epsilon,\delta \in (0,1)$, there exists an algorithm (SSTP, Algorithm~\ref{alg:main}) which can compute an $\epsilon$-optimal policy for any reward function that is non-negative and totally bounded by $1$, after 
$O\left( \left(\frac{SA }{\epsilon^2}\left(S+\log\left(\frac{1}{\delta}\right)\right)\right)\poly\log\left(SAH/\epsilon\right)\right)$ 
episodes of exploration with probability $1-\delta$.
\end{theorem}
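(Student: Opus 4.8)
The plan is to split the argument into a clean exploration phase and a planning phase, and to control the error via a carefully chosen truncation level.

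\textbf{Step 1: Reduction to an exploration guarantee via a sufficient condition.} First I would prove a structural lemma (the ``new sufficient condition'' promised in the abstract): if the collected dataset is such that, for every stage $h$ and every state-action pair $(s,a)$ that is ``reachable with non-trivial probability'' (say, the probability that an optimal-for-$r$ policy visits $(s,a)$ at step $h$ exceeds some threshold $\theta$), the empirical count $n^k(s,a)$ is at least roughly $\tilde O(S/\epsilon^2)$, then for \emph{any} reward function $r$ bounded by $1$ in total, running value iteration on the empirical model with an added exploration bonus yields an $\epsilon$-optimal policy. The key to getting only logarithmic $H$-dependence is that total reward being bounded by $1$ means the value function lies in $[0,1]$, so Bernstein-type bonuses contract: the sum over $h$ of the per-step statistical errors telescopes into something controlled by $\sqrt{\mathrm{Var}}$ terms whose total is $O(1)$ by a law-of-total-variance argument, not $O(H)$. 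The subtlety is handling the states that are \emph{not} well-covered: I would argue that the total probability mass any policy can place on such ``rarely reachable'' states is small (again using the total-reward-bounded-by-$1$ normalization plus the threshold $\theta$), so their contribution to the value is negligible — this is where soft-truncated planning enters, capping the contribution of under-explored states.

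\textbf{Step 2: Meeting the condition via staged sampling.} Next I would show the exploration algorithm actually produces a dataset satisfying the Step-1 condition with the claimed budget. The idea is to run in $\log(H/\epsilon)$-ish stages; in each stage the agent computes, from the current empirical model, which $(s,a)$ pairs are insufficiently visited, builds an \emph{extended MDP} whose reward is a proxy for ``probability of reaching an under-visited pair,'' and runs a standard regret-minimizing / optimistic exploration subroutine to drive visitation there. A potential/pigeonhole argument bounds the number of episodes needed before every relevant $(s,a)$ crosses the target count $\tilde O(S/\epsilon^2)$: there are at most $SA$ pairs and a reachable pair gets visited $\Omega(\theta)$ fraction of episodes once targeted, giving the $\frac{S^2A}{\epsilon^2}\,\poly\log$ total (the extra $\log(1/\delta)$ coming from the high-probability concentration events). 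I would invoke whatever optimistic-exploration regret bound the paper cites as a black box here.

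\textbf{Step 3: Assemble and set parameters.} Finally I would combine: choose the coverage threshold $\theta = \Theta(\epsilon/(S\,\poly\log))$ and the per-pair target count $m = \tilde\Theta(S/\epsilon^2)$, verify the union bound over all stages, all $(s,a)$, all $h$, and a suitable $\epsilon$-net of reward functions (or avoid the net by making the Step-1 bound hold uniformly in $r$ directly through the bonus construction), and conclude the total sample complexity is $O\big(\frac{SA}{\epsilon^2}(S+\log(1/\delta))\,\poly\log(SAH/\epsilon)\big)$ with failure probability $\delta$.

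\textbf{Main obstacle.} I expect the hard part to be Step 1 — specifically, proving the sufficient condition gives $\poly\log(H)$ rather than $\poly(H)$ dependence. Naively, value iteration error propagates with a factor of $H$ per level; beating this requires the variance-aware (Bernstein) analysis together with the observation that the truncated value function has total variance $O(1)$, and it requires the truncation to be ``soft'' so that it does not itself introduce an $\Omega(1)$ bias while still killing the contribution of the exponentially-many poorly-covered trajectories. Getting the truncation level, the bonus, and the law-of-total-variance bookkeeping to fit together cleanly — uniformly over all bounded reward functions — is the crux.
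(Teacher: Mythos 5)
Your high-level architecture (sufficient condition $\to$ staged exploration $\to$ soft truncation $\to$ Bernstein / law-of-total-variance planning) mirrors the paper, but the sufficient condition you actually state in Step 1 is the wrong one, and the rest of the plan inherits the problem. You require a \emph{uniform} count $\tilde O(S/\epsilon^2)$ for every pair whose reach probability exceeds a threshold $\theta$. That condition is not sufficient for an $\epsilon$-optimal plan with only $\mathrm{polylog}(H)$ samples per pair: in the standard suboptimality decomposition $\sum_{s,a,h} w_h(s,a)\bigl(\sqrt{\mathbb{V}(P_{s,a},V_{h+1})/N(s,a)}+1/N(s,a)\bigr)$, Cauchy--Schwarz with a uniform $N(s,a)\ge m$ gives $\sqrt{\sum_{s,a,h}w_h/m}\cdot\sqrt{\sum_{s,a,h}w_h\mathbb{V}}=\sqrt{H/m}\cdot O(1)$, since $\sum_{s,a,h}w_h=H$; pairs that are visited $\Theta(H)$ times per episode genuinely need $\Theta(H/\epsilon^2)$ samples. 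The paper's Condition~\ref{cond3} is precisely the stratified fix: pairs in $\mathcal{X}_i$ need $N_i\propto \frac{H(\iota+S\log)}{2^i\epsilon^2}$ samples (note the explicit $H$), but their maximal \emph{truncated} expected visit count is $\le H/2^i$, so each stratum costs only $\tilde O(SA(S+\iota)/\epsilon^2)$ episodes. Your Step-2 accounting also breaks under your own condition: a pair reachable with probability $\theta=\Theta(\epsilon/S)$ needs $m/\theta=\tilde\Theta(S^2/\epsilon^3)$ episodes to accumulate $m=\tilde\Theta(S/\epsilon^2)$ samples, so the pigeonhole does not give $\tilde O(S^2A/\epsilon^2)$.

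A second, related gap is that your truncation is keyed to reach probability (visited at least once), whereas the obstruction the paper must overcome concerns repeated visits within an episode: in its counter-example (a state $\tilde s$ with a self-loop), the reach probability is $H\epsilon_1$ but the expected visit count is $\Theta(H^2\epsilon_1)$, and the empirical counts cannot match expected visit counts in $o(H)$ episodes. This forces (i) truncating the \emph{visit count} at a carefully chosen level $Z_i\approx\min\{H/(2^i\epsilon),H\}$ together with the requirement $\sup_\pi\mathbb{P}[\#\text{visits to }\mathcal{X}_i>Z_i]\le\epsilon$, (ii) an exploration subroutine whose regret scales with $Z_i$ rather than $H$ --- the paper builds an expanded MDP over $\mathcal{S}\times[Z_i+1]$ with time-varying (Rmax-style) rewards and proves an MVP-type regret bound (Lemma~\ref{lemma:bd_Ri}); you cannot invoke a standard optimistic regret bound as a black box without reintroducing $\mathrm{poly}(H)$, and (iii) a planning step on an auxiliary MDP with an absorbing state leaking mass $1/Z_i$ (Lemma~\ref{lemma:approx}) to realize the truncation without exponential state blow-up. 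Your write-up correctly anticipates that the variance bookkeeping plus soft truncation is the crux, but without the visitation-proportional stratification and the $Z_i$-truncated quantities, the argument as proposed does not go through.
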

The key significance of our theorem is that we match the lower bound of $\Omega\left(\frac{SA\left(S+\log1/\delta\right)}{\epsilon^2}\right)$ up to logarithmic factors on $S,A,H,1/\epsilon$.
\footnote{The original bound is for the case the total reward is bounded by $H$, and here we scale down the total reward by a factor of $H$.}
\footnote{
	\cite{jin2020reward,kaufmann2020adaptive,menard2020fast} studied reward-free exploration on the non-stationary episodic MDP (i.e., the transition model depends on the level $h \in [H]$), where the lower bound of sample complexity is at least linear in $H$ because of the complexity of MDP is larger~\citep{jin2018q,zhang2020almost}.  In this paper,
	 we consider the episodic MDP with stationary transition, that is, the transition model is independent of the level. 
	This is often considered to be a more realistic model than the non-stationary transition model.
}
Importantly, our bound only depends logarithmically on the planning horizon $H$.
This is an exponential improvement over existing results, and demonstrates that long horizon poses little additional difficulty for reward-free RL.
Furthermore, our bound only requires the reward to be totally bounded (cf. Assumption~\ref{asmp:total_bound}), in contrast to uniformly bounded (cf. Assumption~\ref{asmp:uniform}), which is assumed in previous works.
See Section~\ref{sec:rel} for more discussions.








\section{Related Work}\label{sec:rel}

\begin{savenotes}
	\begin{table}[t]
		\centering
		\resizebox{1\columnwidth}{!}{%
			\renewcommand{\arraystretch}{2}
			\begin{tabular}{ |c|c|c|c|c|}
				\hline
				\textbf{Algorithm}  & \textbf{Sample Complexity}& \makecell{\textbf{Non-unif.}\\ \textbf{Reward}}&\makecell{\textbf{Log H}}  \\
				\hline
				\hhline{|=|=|=|=|=|}

				\makecell{\textsc{RF-RL-Explore} \\
					\cite{jin2020reward} } & $\tilde{O}\left(\frac{H^5S^2A}{\epsilon}\log^3(\frac{1}{\delta})+ \frac{H^3S^2A}{\epsilon^2}\log(\frac{1}{\delta}) \right)$ & No & No   \\
				\hline
				
				\makecell{\textsc{RF-UCRL}  \\
					\cite{kaufmann2020adaptive}
				}         &   $\tilde{O}\left( \frac{H^2SA}{\epsilon^2}(\log(\frac{1}{\delta})+S) \right)$ & No & No\\
				\hline
								\makecell{\textsc{RF-Express} \\
					\cite{menard2020fast}
				}         &   $\tilde{O}\left( \frac{HSA}{\epsilon^2}(\log(\frac{1}{\delta})+S) \right)$ & No & No\\
				\hline
				\makecell{ \algoname	\\ This Work	} &  $\tilde{O}\left(\frac{SA}{\epsilon^2}(\log(\frac{1}{\delta})+S)\right)$
				& Yes & Yes \\
				\hline 
			    \makecell{Lower Bound} &  $\Omega\left( \frac{SA}{\epsilon^2}(\log(\frac{1}{\delta})+S) \right)$ & - & -\\
			    \hline
			\end{tabular}
		}
		\caption{
			Sample complexity comparisons for state-of-the-art episodic RL algorithms.
			See Section~\ref{sec:rel} for discussions on this table.
			$\widetilde{O}$ omits logarithmic factors on $S,A,H,1/\epsilon$ but not $1/\delta$.
			\textbf{Sample Complexity}: number of episodes to find an $\epsilon$-suboptimal policy.
			\textbf{Non-unif. Reward}: Yes means the bound holds under Assumption~\ref{asmp:total_bound} (allows non-uniformly bounded reward), and No means the bound only holds under Assumption~\ref{asmp:uniform}.
			\textbf{Log H}: Whether the sample complexity bound depends logarithmically on $H$ instead of polynomially on $H$.
			\label{tab:comparisons}
		}
	\end{table}
\end{savenotes}


\paragraph{Reward-dependent exploration}
In reward-dependent exploration, the agent aims to learn an $\epsilon$-optimal policy under a fixed reward.
Some papers assumed there is a generative model which can be queried to provide a sample for any state-action pair $(s,a)$ \citep{kearns1999finite,azar2013minimax,sidford2018near,agarwal2019model,li2020breaking}, and the sample complexity is defined as the number of queries needed to compute an $\epsilon$-optimal policy. In the online setting \citep{brafman2002r,kakade2003sample,dann2015sample,dann2017unifying,dann2019policy,azar2017minimax,jin2018q,zanette2019tighter,kaufmann2020adaptive,zhang2020almost, wang2020long,zhang2020reinforcement}, the agent starts from a fixed initial distribution in each episode, and collects a trajectory by interacting with the environment. Then the sample complexity is given by the number of episodes that are necessary to learn an $\epsilon$-optimal policy.
\footnote{There are several different measurements for the online reward-dependent exploration, we refer readers to \cite{dann2019policy} for more details.}
The state-of-the-art result by \citet{zhang2020reinforcement} requires $\widetilde{O}\left(\frac{SA}{\epsilon^2} + \frac{S^2A}{\epsilon}\right) $ number of episodes.

. 





\paragraph{Reward Assumption and Dependency on $H$} 
 For the reward, the widely adopted assumption is $r_h\in [0,1]$ for all $h\in [H]$, which implies the total reward $\sum_{h=1}^H r_h\in [0,H]$. 
 However, as argued in \cite{kakade2003sample,jiang2018open}, the characterization of sample complexity should be independent of the scaling, i.e., the target suboptimality $\epsilon \in (0,1)$ should be a \emph{relative} quantity to measure the performance of an algorithm.
 To this end, we need to scale the total reward within $[0,1]$.
Then the assumption becomes:
 \begin{assumption}[Uniformly Bounded Reward]\label{asmp:uniform}
 	The reward satisfies that $r_h\in [0,1/H]$ for all $h\in [H]$.
 \end{assumption}
 
 Compared to Assumption~\ref{asmp:uniform}, the totally-bounded reward assumption  (Assumption~\ref{asmp:total_bound}) is more general. Therefore, any upper bound under
 Assumption~\ref{asmp:total_bound} implies an upper bound under Assumption~\ref{asmp:uniform}.
 In the view of practice, because environments under Assumption~\ref{asmp:total_bound} can have high one-step reward,
 it is more natural to consider Assumption~\ref{asmp:total_bound} in environments
 with sparse rewards, such as the Go game, which are often considered to be puzzling.
 In the view of theoretical basis, it is more complicated to design efficient algorithms under Assumption~\ref{asmp:total_bound} due to the global structure of the reward.
 \footnote{Under Assumption~\ref{asmp:total_bound}, the reward still satisfies $r_h \in [0,1]$, so if an algorithms enjoys an sample complexity bound under Assumption~\ref{asmp:uniform}, scaling up this bound by an $H^2$ for PAC bound,  one can also obtain a bound under Assumption~\ref{asmp:total_bound}.
 However, this reduction is highly suboptimal in terms of $H$, so when comparing with existing results, we display their original results and add a column indicating whether the bound is under Assumption~\ref{asmp:total_bound} or Assumption~\ref{asmp:uniform}.}

Recent work \citep{wang2020long,zhang2020reinforcement} made essential progress in reward-dependent exploration  under Assumption \ref{asmp:total_bound}, and obtained sample complexity bounds that only scale \emph{logarithmically} with $H$. \cite{wang2020long} proved a sample complexity bound of $\tilde{O}\left(\frac{S^5A^4\log(\frac{1}{\delta})}{\epsilon^3} \right)$ despite suffering exponential computational cost, and later \cite{zhang2020reinforcement} achieved a nearly sharp sample complexity bound of $\tilde{O}\left( \frac{SA\log(\frac{1}{\delta})}{\epsilon^2}+S^2A\log(\frac{1}{\delta})\right)$ with polynomial computational cost. 
We use some technical ideas from \cite{zhang2020reinforcement} (cf. Section~\ref{sec:sampling}).
However, because of the different problem settings, we still need additional efforts to establish near-tight sample complexity bound for reward-free exploration.

\paragraph{Reward-Free RL}
The main algorithm in \cite{jin2020reward} assigns only non-zero reward for each state at every turn, and utilizes a regret minimization algorithm \textsc{EULER} \citep{zanette2019tighter} to visit each state as much as possible. 
Since their algorithm only learns one state each time, their sample complexity bound is not tight with respect to $H$.  \citet{kaufmann2020adaptive} proposed \textsc{RF-UCRL} to achieve sample complexity of $\tilde{O}\left( \frac{SAH^2}{\epsilon^2}(S+\log(\frac{1}{\delta})) \right)$ by building upper confidence bounds for any reward function and any policy, and then taking the greedy policy accordingly. The later work by \citet{menard2020fast} constructed an exploration bonus of  $\frac{1}{n(s,a)}$ instead of the classical exploration bonus of $\frac{1}{\sqrt{n(s,a)}}$, where $n(s,a)$ is the visit count of $(s,a)$. Based on the novel bonus, they achieve sample complexity of $\tilde{O}(  \frac{SAH}{\epsilon^2}(S+\log(\frac{1}{\delta}))$.
Recently, these results have been extended to linear function approximation settings~\citep{wang2020reward,zanette2020provably}.
Reward-free is also related to another setting, reward-agnostic RL,  in which $N$ reward functions are considered in the planning phase.
\citet{zhang2020task} provided an algorithm which achieves $\tilde{O}\left(\frac{H^3SA\log(N)\log(1/\delta)}{\epsilon^2}\right)$ sample complexity.



\section{Preliminaries}\label{sec:pre}

\paragraph{Notations.}
Throughout this paper, we define $[N]$ to be the set $\{1, 2, \ldots, N\}$ for $N\in \mathbb{Z}_{+}$.
We use $\mathbb{I}[\mathcal{E}]$ to denote the indicator function for an event $\mathcal{E}$, i.e., $\indict[\mathcal{E}] = 1$ if $\mathcal{E}$ holds and $\indict[\mathcal{E}] = 0$ otherwise.
For notational convenience, we set $\iota  = \ln(2/\delta)$ throughout the paper.  
For two $n$-dimensional vectors $x$ and $y$, we use $xy$  to denote  $x^{\top}y$, use $\mathbb{ V}(x ,y) = \sum_{i}x_i y_i^2 $, and use $x^2$ to denote the vector $[x_1^2,x_2^2,...,x_n^2]^{\top}$ for $x = [x_1,x_2,...,x_n]^{\top}$. 
For two vectors $x,y$, $x \geq y$ denotes $x_i \geq y_i$ for all $i \in [n]$ and $x \leq y$ denotes $x_i \le y_i$ for all $i \in [n]$. We use $\textbf{1}$ to denote the $S$-dimensional vector $[1,...,1]^{\top}$ and $\textbf{1}_{s}$ to denote the $S$-dimensional vector $[0,...,1,...,0]^{\top}$ where the only non-zero element is in the $s$-th dimension.

\paragraph{Episodic Reinforcement Learning.}
We first describe the setting for standard episodic RL.
A finite-horizon Markov Decision Process (MDP) is a tuple $\mdp =\left(\states, \actions, \trans ,R, H, \mu\right)$.
$\states$ is the finite state space with cardinality $S$.
$\actions$ is the finite action space with cardinality $A$.
$\trans: \states \times \actions \rightarrow \Delta\left(\states\right)$ is the transition operator which takes a state-action pair and returns a distribution over states. For $h=1,2,...,H$,
$R_h : \states \times \actions \rightarrow \Delta\left( \mathbb{R} \right)$ is the reward distribution with a mean function $r_h:\states \times\actions \rightarrow \mathbb{R}$.
$H \in \mathbb{Z}_+$ is the planning horizon  (episode length).
 $\mu \in \Delta\left(\states\right)$ is the initial state distribution. 
 $\trans$, $R$ and $\mu$ are unknown.
For notational convenience, we use $P_{s,a}$ and $P_{s,a,s'}$ to denote $P(\cdot|s,a)$ and $P(s'|s,a)$ respectively.

A policy $\pi$ chooses an action $a$ based on the current state $s \in \states$ and the time step $h \in [H]$. 
Formally, we define $\pi = \{\pi_h\}_{h = 1}^H$ where for each $h \in [H]$, $\pi_h : \states \to \actions$ maps a given state to an action.
The policy $\pi$ induces a (random) trajectory $\{s_1,a_1,r_1,s_2,a_2,r_2,\ldots,s_{H},a_{H},r_{H} \}$,
where $s_1 \sim \mu$, $a_1 = \pi_1(s_1)$, $r_1 \sim R(s_1,a_1)$, $s_2 \sim \trans(\cdot|s_1,a_1)$, $a_2 = \pi_2(s_2)$, etc.

The goal of RL is to find a policy $\pi$ that maximizes the expected total
reward, i.e.
$
\max_\pi \expect_{\pi} \left[\sum_{h=1}^{H} r_h \right] 
$
where the expectation is over the initial distribution state $\mu$, the transition operator $P$ and the reward distribution $R$.

As for scaling, we make the following assumption about the reward.
 As we will discuss in Section~\ref{sec:rel}, this is a more general assumption than the assumption made in most previous works.
 \begin{assumption}[Bounded Total Reward]\label{asmp:total_bound}
 	The reward satisfies that $r_h\geq 0$ for all $h\in [H]$. Besides, $\sum_{h=1}^H r_h\leq 1$ almost surely.
 \end{assumption}
\noindent Given a policy $\pi$, a level $h \in [H]$ and a state-action pair
$(s,a) \in \states \times \actions$, the $Q$-function is defined as:
\[
Q_h^\pi(s,a) = \expect_{\pi}\left[\sum_{h' = h}^{H}r_{h'}\mid s_h =s, a_h = a\right].
\]
Similarly, given a policy $\pi$, a level $h \in [H]$, the value function of a given state
$s \in \states$ is defined as: 
\[
V_h^\pi(s)=\expect_{\pi}\left[\sum_{h' = h}^{H}r_{h'}\mid s_h =s,
  \right].
\]
Then Bellman equation establishes the following identities for policy $\pi$ and $(s,a,h) \in \states \times \actions \times [H]$
 \begin{align*}
Q_h^\pi(s,a) =r_h(s,a)+ P_{s,a}^{\top}V_{h+1}^\pi~~~~ V_{h}^\pi(s) = \max_{a}Q_{h}^\pi(s,a).
 \end{align*}
 Throughout the paper, we let $V_{H+1}(s) = 0$ and $Q_{H+1}(s,a) = 0$ for notational simplicity.
We use $Q^*_h$ and $V^*_h$ to denote the optimal $Q$-function and $V$-function at level $h \in [H]$, which satisfies for any state-action pair $(s,a) \in \states \times \actions$, $Q^*_h(s,a) = \max_{\pi}Q^{\pi}_h(s,a)$ and $V^*_h(s) =\max_{\pi}V^{\pi}_h(s)$.

\paragraph{Reward-Free Reinforcement Learning}
Now we formally describe reward-free RL.
 Let $\epsilon,\delta \in (0,1)$ be the thresholds of sub-optimality and failure probability. Reward-free RL consists of two phases. In the exploration phase, the algorithm collects a dataset $\mathcal{D}$ by interacting with the environment without reward information, and in the planning phase, given any reward function $r$ satisfying Assumption \ref{asmp:total_bound}, the agent is asked to output an $\epsilon$-optimal policy with probability at least $1-\delta$. 
 
\paragraph{Dataset}
Formally, a dataset $\mathcal{D} = \{(s_h^k,a_h^k,s_{h+1}^k) \}_{(h,k)\in [H]\times [K]} $ consists of the trajectories of $K$ episodes. 
We also define $\{ N_{s,a,s'}(\mathcal{D})\}_{(s,a,s')\in \mathcal{S}\times \mathcal{A}\times\mathcal{S} } $ to be the visit count and \\ $\{ P_{s,a,s'}(\mathcal{D}) = \frac{N_{s,a,s'}(\mathcal{D})}{\sum_{\tilde{s}}N_{s,a,\tilde{s}}(\mathcal{D}) }\}_{(s,a,s')\in \mathcal{S}\times \mathcal{A} \times \mathcal{S}} $ be the empirical transition probability computed by $\mathcal{D}$, 
where $P_{s,a,s'}(\mathcal{D})$ is defined as $ \frac{1}{S}$ if $\sum_{\tilde{s}}N_{s,a,\tilde{s}}(\mathcal{D}) = 0$. We further define $N_{s,a}(\mathcal{D}) = \sum_{\tilde{s}}N_{s,a,\tilde{s}}(\mathcal{D})$ and $P_{s,a}(\mathcal{D})$ be the vector such that the value of its $s'$-th dimension is $P_{s,a,s'}(\mathcal{D})$ for all $(s,a)\in \mathcal{S}\times \mathcal{A}$. With the notation defined above, we let $N(\mathcal{D})$ and $P(\mathcal{D})$ be respectively the shorthands of $\{ N_{s,a}(\mathcal{D})  \}_{(s,a)\in \mathcal{S}\times \mathcal{A}}$ and $\{ P_{s,a}(\mathcal{D})  \}_{(s,a)\in \mathcal{S}\times \mathcal{A}}$.
 The performance of an algorithm is measured by how many episodes $K$ used in the exploration phase to make sure the planning phase succeeds.



\section{Technique Overview}\label{sec:tec}





The proposed algorithm has two main components: the sampling phase and the planning phase. In a high-level view,
 we first propose a sufficient condition (see Condition \ref{cond3}) for the agent to use the collect samples to learn an $\epsilon$-optimal policy for any reward function satisfying Assumption \ref{asmp:total_bound}. 
Then we apply a modified version of Rmax~\citep{brafman2002r} to obtain samples to satisfy Condition \ref{cond3} in the sampling phase. 

\subsection{Planning Phase}

\subsubsection{A Tight Sufficient Condition}
To obtain a near-optimal policy for any given reward function, a sufficient condition is to collect $\overline{N}$ samples for each $(s,a)$ pair, where $\overline{N}$ is some polynomial function of $S,A$ and $1/\epsilon$. However, some $(s,a)$ pairs might be rarely visited with any policy so it is hard to get enough samples for such pairs. To address this problem, we observe that such state-action pairs contribute little to the accumulative reward.
As mentioned in \cite{jin2020reward},
if the maximal expected visit count of $(s,a)$ is $\lambda(s,a)$, then $\overline{N}\lambda(s,a)$ samples of $(s,a)$ is sufficient for us to compute a good policy. 
Instead of considering each $(s,a)$ pair one by one, we hope to divide the state-action space into a group of disjoint subsets, such that the maximal expect visit count of each subset is proportionally to minimal visit count in this subset. This poses a sufficient condition for the dataset in the plan phase. 
\begin{condition}\label{cond2} 
    Let $K= \left\lfloor \log_{2}(2H/\epsilon) \right\rfloor$. Given the dataset $\mathcal{D}$, the state-action space $\mathcal{S}\times\mathcal{A}$ could be divided into $K+1$ subsets $\mathcal{S}\times \mathcal{A}= \mathcal{X}_1 \cup \mathcal{X}_2 \cup ...\cup \mathcal{X}_{K+1}$, such that, \\
    (1) For any $1\leq i \leq K$, $N_{s,a}(\mathcal{D})\geq N_i: = 4\frac{SH\iota}{2^i \epsilon^2}$ for any $(s,a)\in \mathcal{X}_i$;\\
    (2) For each $1\leq i \leq K+1$, it holds that $\sup_{\pi}\mathbb{E}_{\pi}\left[ \sum_{h=1}^H\mathbb{I}\left[(s_h,a_h)\in \mathcal{X}_{i} \right] \right] \leq \frac{H}{2^i}$.
\end{condition}

The following proposition shows this condition is sufficient.
The proof of Proposition \ref{pro1} is postpone to Appendix~\ref{app:omf}.
\begin{proposition}\label{pro1}
Suppose Condition \ref{cond2} holds for the dataset $\mathcal{D}$. 
Given any reward function $r$ satisfying Assumption \ref{asmp:total_bound}, with probability $1-4S^2A(\log_2(T_0H)+2)\delta$, \textsc{Q-Computing}$( P(\mathcal{D}),N(\mathcal{D}) ,r)$ (see Algorithm \ref{alg4}) returns an $\epsilon$-optimal policy.
\end{proposition}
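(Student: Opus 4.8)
\textbf{Proof plan for Proposition \ref{pro1}.}
The plan is to compare the policy returned by \textsc{Q-Computing} on the empirical model $(P(\mathcal{D}), N(\mathcal{D}))$ against the optimal policy on the true model, and show the gap is at most $\epsilon$ by decomposing the value difference according to the partition $\mathcal{X}_1,\dots,\mathcal{X}_{K+1}$. The key structural idea is that each state-action pair falls into exactly one $\mathcal{X}_i$, and for $i\le K$ it carries a visit-count lower bound $N_i = 4SH\iota/(2^i\epsilon^2)$, while every $\mathcal{X}_i$ (including $i=K+1$) has maximal expected occupancy at most $H/2^i$. So I would first set up the ``good event'': a union bound over all $(s,a)$ and over the $O(\log(T_0H))$ relevant levels of the algorithm's internal recursion, on which the empirical transition $P_{s,a}(\mathcal{D})$ is close to $P_{s,a}$ in the appropriate sense — using Bernstein-type concentration so that the deviation of $P_{s,a}(\mathcal{D})^\top V - P_{s,a}^\top V$ is controlled by $\sqrt{\mathbb{V}(P_{s,a},V)\iota/N_{s,a}(\mathcal{D})} + \iota/N_{s,a}(\mathcal{D})$ for the value functions $V$ encountered. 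This accounts for the $4S^2A(\log_2(T_0H)+2)\delta$ failure probability in the statement.

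Next I would run the standard optimism/pessimism analysis of \textsc{Q-Computing}: argue that with the bonus bundled into the algorithm (Algorithm \ref{alg4}), the computed $Q$-values are optimistic, $Q_h(s,a) \ge Q_h^*(s,a)$, on the good event, and then bound the per-step error of the greedy policy $\hat\pi$ by the sum of bonuses along trajectories. Concretely, $V_1^*(\mu) - V_1^{\hat\pi}(\mu)$ telescopes into $\mathbb{E}_{\hat\pi}\big[\sum_{h=1}^H b_h(s_h,a_h)\big]$ plus lower-order terms, where $b_h(s,a)$ is the effective confidence width at $(s,a)$. For $(s,a)\in\mathcal{X}_i$ with $i\le K$, the width is at most roughly $\sqrt{SH\iota/(N_i)} = \sqrt{SH\iota \cdot 2^i\epsilon^2/(4SH\iota)} = \frac{\epsilon}{2}2^{i/2}$ after the $\sqrt{}$ — so I need the occupancy weighting $H/2^i$ to beat this; the bound $\mathbb{E}_{\hat\pi}[\sum_h \mathbb{I}[(s_h,a_h)\in\mathcal{X}_i]]\le H/2^i$ combined with the total-reward normalization $\sum_h r_h\le 1$ (Assumption \ref{asmp:total_bound}) is what lets the per-group contribution be geometric in $i$ and sum to $O(\epsilon)$. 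Here I would lean on the variance-aware analysis à la \cite{zhang2020reinforcement}: using the law of total variance, $\mathbb{E}_{\hat\pi}[\sum_h \mathbb{V}(P_{s_h,a_h},V_{h+1})] = O(1)$ under Assumption \ref{asmp:total_bound}, which removes an $H$ factor and is exactly why the bound is $\text{poly}\log H$ rather than $\text{poly}(H)$.

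For the leftover group $\mathcal{X}_{K+1}$, which has \emph{no} count guarantee, I would argue it contributes at most $\mathbb{E}_{\hat\pi}[\sum_h \mathbb{I}[(s_h,a_h)\in\mathcal{X}_{K+1}]] \le H/2^{K+1} \le \epsilon/2$ by the choice $K=\lfloor\log_2(2H/\epsilon)\rfloor$, and since each reward step is in $[0,1]$ the crude bound of $1$ per visit times expected occupancy $\epsilon/2$ (or a similarly small quantity after being careful about how \textsc{Q-Computing} truncates on this set) suffices — this is precisely the ``ignore the rarely-visited pairs'' trick of \cite{jin2020reward} made quantitative. Assembling: $V_1^* - V_1^{\hat\pi} \le \sum_{i=1}^{K}(\text{group-}i\text{ error}) + (\text{group-}(K{+}1)\text{ error}) \le \sum_{i=1}^K C\epsilon 2^{-i/2}\cdot(\text{const}) + \epsilon/2 = O(\epsilon)$, and rescaling the constants in $N_i$ gives exactly $\epsilon$.

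The main obstacle I anticipate is the variance-control step: getting the $H$-free bound requires carefully propagating the law-of-total-variance argument through the \emph{empirical} recursion of \textsc{Q-Computing} while the optimism bonuses are themselves built from empirical variances, so one must handle the circular dependence between $\mathbb{V}(P_{s,a},\hat V)$ and $\mathbb{V}(P_{s,a}(\mathcal{D}),\hat V)$ and the difference $\hat V - V^*$ — this is the delicate recursive error-amplification bookkeeping, and it is where the structure of Algorithm \ref{alg4} (e.g. the $\log_2(T_0 H)$ internal stages, which also explains that factor in the failure probability) really has to be used rather than treated as a black box. A secondary subtlety is making sure the partition property (2), which is stated for \emph{arbitrary} $\pi$ including the true optimum, is correctly invoked for both $\pi^*$ and the returned $\hat\pi$ simultaneously in the telescoping.
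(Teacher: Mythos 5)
Your proposal is correct and follows essentially the same route as the paper's proof: a Bernstein-type good event with the stated union bound, optimism of the computed $Q$, telescoping the suboptimality into occupancy-weighted bonuses, a partition-wise Cauchy--Schwarz combined with the law-of-total-variance bound $\sum_h w_h\mathbb{V}(P_{s,a},V_{h+1})\le 2+2\sum w_h\beta_h$ (resolved by solving the resulting quadratic inequality, which is exactly how the paper handles the ``circular'' empirical-variance issue you flag), and a crude occupancy bound for $\mathcal{X}_{K+1}$. The only inaccuracy is your claim that the group-$i$ contributions decay geometrically like $\epsilon 2^{-i/2}$: with $N_i=4SH\iota/(2^i\epsilon^2)$ and occupancy $H/2^i$ the $2^i$ factors cancel, so each group contributes $O(\epsilon)$ and the total is $O(K\epsilon)$, which is precisely why the final rescaling of $\epsilon$ (by the logarithmic factor $K$) that you already invoke is needed, matching the paper's conclusion $V_1^*-V_1^{\pi}\le O(K\epsilon+K^2\epsilon^2)$.
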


In previous work on reward-free exploration \citep{jin2020reward,kaufmann2020adaptive,menard2020fast}, sufficient conditions similar to Condition \ref{cond2} have been proposed to prove efficient reward-free exploration. However, to obtain a dataset satisfying Condition \ref{cond2}, the sample complexity bound is at least polynomial in $H$ in the worst case, which is the main barrier of previous work. 
We give a simple counter-example to explain why Condition \ref{cond2} is hard to be satisfied without a $\poly\left(H\right)$ number of episodes. Suppose there is a state $\tilde{s}$, such that for any other $(s,a) \in \states \times \actions$ $P_{s,a,\tilde{s}}=\epsilon_{1}$, and for any action $a$ $P_{\tilde{s},a,\tilde{s}}=1$. Direct computation gives that $\lambda(s) = \Theta(H^2\epsilon_{1} )$. However, the probability that the agent never visit $\tilde{s}$ in $N$ episodes is at least $(1-H\epsilon_{1})^N\approx e^{-NH\epsilon_{1}} \approx e^{-\frac{N\lambda(s)}{H}}$. In the case $N\ll H$, the expected visit count in $N$ episodes is $N\lambda(s)$, while the empirical visit count could be $0$ with constant probability, which implies the expected visited number and the empirical visit can be very different in the $N=o(H)$ regime.

To address this problem, we observe that in the example above, the probability the agent reaches $\tilde{s}$ is relatively small. If we simply ignore $\tilde{s}$, the regret due to this ignorance is at most $O(H\epsilon_{1})  = O(\lambda(s)/H)$ instead of original regret bound of $O(\lambda(s))$.
This poses our main novel condition to plan for a near-optimal policy  given any reward function satisfying Assumption~\ref{asmp:total_bound}.
This is one of our key technical contributions.

\begin{condition}\label{cond3}
	Let $K =\left\lfloor\log_2(2H/\epsilon)\right\rfloor$.	The state-action space $\mathcal{S}\times\mathcal{A}$ could be divided into $K+1$  subsets $\mathcal{S}\times\mathcal{A}= \mathcal{X}_1\cup \mathcal{X}_2\cup...\cup\mathcal{X}_{K+1}$, such that, \\
	(1) $N(s,a)\geq N_i= 4\frac{  H(\iota+6S\ln(SAH/\epsilon))}{2^i\epsilon^2}$ for any $(s,a)\in\mathcal{X}_i$ for $1\leq i\leq K$;\\
	(2) Let $Z_i =\max\{ \min\{ \frac{H}{2^i\epsilon},H \},1 \}$ for each $1\leq i \leq K+1$.
	For each $1\leq i \leq K+1$, it holds that\\ $\sup_{\pi}\mathbb{P}_{\pi}[\sum_{h=1}^H \mathbb{I}\left[(s_h,a_h)\in \mathcal{X}_i \right] >Z_i ]\leq \epsilon$ and $\sup_{\pi}\mathbb{E}_{\pi}\left[   \min\{\sum_{h=1}^H \mathbb{I} \left[ (s_h,a_h)\in \mathcal{X}_i \right],Z_i \} \right]\leq  \frac{H}{2^i}$.
\end{condition}

Under Condition \ref{cond3}, the state-action space are divided into $K+1$ subsets according to their visit counts. 
For the state-action pairs with visit counts in $[N_i,N_{i-1})$, different with the second requirement in Condition \ref{cond2} we require that the  maximal \emph{truncated} expected visit count is strictly bounded proportionally to their visit counts. Let $\mathcal{E}_i$ be the set of trajectories satisfying that $\sum_{h=1}^H \mathbb{I}\left[ (s_h,a_h)\in \mathcal{X}_i \right]>Z_i$.
We also requires that the probability of $\mathcal{E}_i$  is no larger than $\epsilon$ for any policy. In fact, we directly pay loss of $\sup_{\pi}\mathbb{P}_{\pi}[\mathcal{E}_i ]$ due to ignoring $\mathcal{E}_i$ when computing the value function. On the other hand,  $Z_i$ is far less than $H$ when $i$ is relatively large, which enables us to  collect samples to satisfy Condition \ref{cond3}. 

The selection of $Z_i$ is quite tricky.  In one hand, we need $Z_i$ large enough so that it is possible to ensure $\sup_{\pi}\mathbb{P}_{\pi}\left[\sum_{h=1}^H  \mathbb{I}\left[ (s_h,a_h)\in \mathcal{X}_i \right] \right]$ no larger than $\epsilon$ (for example, by choosing $Z_i=H+1$, we can easily make this probability $0$), and in the other hand, we need $Z_i$ small enough to get rid of polynomial dependence on $H$. One possible solution is to set $Z_i$ to scale linear as the maximal expected visit count of $\mathcal{X}_i$, which plays a crucial role in the analysis.


\subsubsection{Planning using an Auxiliary MDP}

Suppose Condition \ref{cond3} holds for some dataset $\mathcal{D} $ with the partition $\{ \mathcal{X}_{i}\}_{i=1}^{K+1}$. 
Because we only require the truncated maximal expected visit is properly bounded in Condition \ref{cond3},  standard planning method (e.g., backward update in Algorithm \ref{alg4}) cannot work trivially.
The main difficulty here is that, to apply the bounds of $\sup_{\pi}\mathbb{E}_{\pi}\left[   \max\{\sum_{h=1}^H \mathbb{I} \left[ (s_h,a_h)\in \mathcal{X}_i \right],Z_i \} \right]$ and $\sup_{\pi}\mathbb{P}_{\pi}\left[ \mathcal{E}_i \right]$, we should set the reward $0$ if $\mathcal{X}_i$ has been visited for more than $Z_i$ times in an episode. A naive solution is to encode the visit counts of $\{\mathcal{X}_i \}_{i=1}^{K+1}$ into the state space. However, in this approach, the size of the new state space is exponential in $S$, which leads to exponential computational cost.
Due to the reason above, to our best of knowledge, no existing algorithms can direct learn such a \emph{truncated} MDP.

To address this problem, we consider an auxiliary MDP $\mathcal{M}^{\dagger} = \left \langle \mathcal{S}\cup s_{\mathrm{end}}, \mathcal{A}, r^{\dagger}, \hat{P}^{\dagger} ,\mu^{\dagger} \right \rangle$. 
Here $s_{\mathrm{end}}$ is an additional absorbing state. The reward function $r^{\dagger}$ is the same as $r$ except for an additional column $0$ for $s_{\mathrm{end}}$, and the transition probability $\hat{P}^{\dagger}$ is given by $P^{\dagger}_{s,a} = (1-\frac{1}{Z_i})\hat{P}_{s,a}+\frac{1}{Z_{i}}\textbf{1}_{s_{\mathrm{end}}}$ for any $(s,a)\in \mathcal{X}_i$ and $\hat{P}_{s_{\mathrm{end}},a} = \textbf{1}_{s_{\mathrm{end}}}$ for any $a$. 
In words, we add an absorbing state to the original MDP, such that the agent would fall into $s_{\mathrm{end}}$ if it visit $\mathcal{X}_i$ for $Z_i$ times in expectation for some $1\leq i \leq K+1$. Instead of learning the \emph{truncated} MDP, we consider a \emph{soft-truncated} MDP, which exponentially reduces computational cost. For more details, we refer the readers to Section \ref{sec:pf_plan}.

\subsection{Sampling Phase}
\label{sec:sampling}
Having identified the sufficient condition, we need to design an algorithm to collect a set of samples that satisfy this condition.

We make the partition  $\mathcal{S}\times \mathcal{A} =\cup_{i=1}^{K+1}\mathcal{X}_i$ by specifying $\mathcal{X}_i$ for $i=1,2,...,K+1$ one by one.  We divide the learning process into $K$ stages.
Take the first stage as an example. At the beginning of the first stage, we assign reward $1$ to all $(s,a)\in \mathcal{S}\times \mathcal{A}$, and proceeds to learn with this reward. Like \textsc{Rmax}, whenever the visit count of some $(s,a)$ pair is equal to or larger than $N_1$, we say this $(s,a)$ is $\emph{known}$ and set $r(s,a)=0$.  We will discuss the problem of regret minimization for this MDP with time-varying reward function later and simply assume the regret is properly bounded.
Defining $\mathcal{X}_1$ be the set of \emph{known} state-action pairs after the first stage, the statements in Condition \ref{cond3} holds trivially. Beside, the length of each stage is properly designed. Combining this with the bound of regret, we show that the maximal expected visit count of the \emph{unknown} state-action pairs is properly bounded. Because $\mathcal{X}_2 \subset (\mathcal{X}_1)^{C}$, we learn that the second part in the second statement in Condition \ref{cond3} holds for $\mathcal{X}_2$. We then continue to learn the second subset $\mathcal{X}_2$ and so on.

Note that in arguments above, we do not introduce $Z_i$ because $Z_i=H$ for the beginning stages by definition.  In the case $Z_i <H$,  there are two major problems.

\paragraph{The regret minimization algorithm } Classical regret minimization algorithms like \citep{azar2017minimax,zanette2019tighter} works in the regime $Z_i=H$, where no truncation occurs. However, in the case $Z_i \ll H$, the regret bounds by these algorithms depends on $H$ polynomially. To address this problem, we constructed an expanded MDP with truncated cumulative reward (see definition in Section \ref{sec:pf_sam} ), where the $Q$-function is strictly bounded by $Z_i$. In this way, we obtain desired regret bounds. We would like to mention that our algorithm is somewhat similar to recent work~\citep{zhang2020reinforcement} which addresses  the regret minimization problem with total-bounded reward function. More precisely, after re-scaling, the reward function in our regret minimization problem is also total bounded by $1$ and each single reward is bounded by $1/Z_i$. Although the reward function might vary in different episodes, we can provide efficient regret bounds in a similar way to the analysis in~\citep{zhang2020reinforcement}.

\paragraph{Bound of $\mathbb{P}\left[ \mathcal{E}_{i+1} \right]$} By the upper bound of regret (see Lemma \ref{lemma:bd_Ri}), we show that the maximal truncated expected visit count $\sup_{\pi}\mathbb{E}\left[ \min\{ \sum_{h=1}^H \mathbb{I}\left[ (s_h,a_h)\in (\mathcal{X}_1\cup...\cup \mathcal{X}_{i})^{C} , Z_i\right] \right]$ is properly bounded. Noting that $Z_{i+1}<Z_i$, we have that
\begin{align}
&\mathbb{P}\left[ \mathcal{E}_{i+1} \right] \leq \sup_{\pi}\mathbb{P}\left[ \sum_{h=1}^H \mathbb{I}\left[(s_h,a_h) \in (\mathcal{X}_1\cup...\cup \mathcal{X}_{i})^{C}   \right]  >Z_{i+1}   \right] \nonumber
\\ &\leq  \sup_{\pi}\mathbb{P}\left[ \sum_{h=1}^H \mathbb{I}\left[(s_h,a_h) \in (\mathcal{X}_1\cup...\cup \mathcal{X}_{i})^{C}   \right]  >Z_{i}   \right] +\frac{1}{Z_{i+1}}\sup_{\pi}\mathbb{E}\left[ \min\{ \sum_{h=1}^H \mathbb{I}\left[ (s_h,a_h)\in (\mathcal{X}_1\cup...\cup \mathcal{X}_{i})^{C} , Z_i\right] \right]\nonumber
\\ & \leq  \sup_{\pi}\mathbb{P}\left[ \sum_{h=1}^H \mathbb{I}\left[(s_h,a_h) \in (\mathcal{X}_1\cup...\cup \mathcal{X}_{i-1})^{C}   \right]  >Z_{i}   \right] +\frac{1}{Z_{i+1}}\sup_{\pi}\mathbb{E}\left[ \min\{ \sum_{h=1}^H \mathbb{I}\left[ (s_h,a_h)\in (\mathcal{X}_1\cup...\cup \mathcal{X}_{i})^{C} , Z_i\right] \right].\label{eq_tec_sam_1}
\end{align}
By properly choosing the value of $Z_i$, we show that the second term in \textbf{RHS} of \eqref{eq_tec_sam_1} could be bounded by $O(\epsilon)$. Then by induction, we show that $\mathbb{P}\left[ \mathcal{E}_{i+1} \right]\leq K\epsilon$. Noting that $K = \left\lfloor \log_{2}(2H/\epsilon) \right\rfloor$ is a logarithmic term, we can bound the probability of $\mathbb{P}\left[ \mathcal{E}_{i+1} \right]$ properly.


Following the arguments above, we set the number of episodes in each stage  to be $T_0 :=C_1\frac{SA(\iota+S)l}{\epsilon^2}$ where $C_1$ is some large enough constant and $l$ is a  poly-logarithmic term in $(S,A,H,1/\epsilon)$. At the beginning of an episode in the $i$-th stage, we assign reward $1$ to a state-action pair if its visit number is less than $N_i$ and otherwise $0$. We then apply Algorithm \ref{alg2} to minimize regret in each stage, and finally obtain  $\mathcal{X}_1,\mathcal{X}_2,...,\mathcal{X}_{K+1}$.
 
 For more technical details, we refer the reader to Section \ref{sec:pf_sam} and \ref{sec:pf_plan}.

\section{Algorithm and Proofs}\label{sec:pro}
\begin{algorithm}[t]
\caption{\textsc{Main Algorithm: Staged Sampling + Truncated Planning}\label{alg:main} }
\begin{algorithmic}[1]
	\State $(\mathcal{D}, \{ \mathcal{X}_i \}_{i=1}^{K+1})\leftarrow$ \textsc{Staged Sampling} (Algorithm~\ref{alg1});
	\State Given any reward function $r$ satisfying Assumption~\ref{asmp:total_bound}, return $\pi\leftarrow$\textsc{Truncated Planning}$(\mathcal{D}, \{ \mathcal{X}_i \}_{i=1}^{K+1},r)$ (Algorithm~\ref{alg3}).
\end{algorithmic}
\end{algorithm}

Similar as in Section \ref{sec:tec}, our proof consists of two parts, one for the sampling phase and another for the planning phase. We propose the main lemmas for these two parts respectively.
\begin{lemma}\label{lemma:sample}
By running Algorithm \ref{alg1}, with probability $1-K\left( 2(\log_2(T_0 H) +1)\log_{2}(T_0H) + 4S^2A(\log_2(H)+2) \right)\delta$, we can collect a dataset $\mathcal{D}$ and obtain the partition $\{X_{i}\}_{i=1}^{K+1}$ such that Condition \ref{cond3} holds for the collected dataset $ \mathcal{D}$. Besides, we consumes at most $KT_0 = \tilde{O}(\frac{SA(\iota+S)}{\epsilon^2})$ episodes to run Algorithm~\ref{alg1}.
\end{lemma}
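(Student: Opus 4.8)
The plan is to analyze Algorithm~\ref{alg1} stage by stage. Recall that the algorithm runs $K = \lfloor \log_2(2H/\epsilon)\rfloor$ stages, each consisting of $T_0 = C_1\frac{SA(\iota+S)l}{\epsilon^2}$ episodes, and the $i$-th stage uses the indicator reward $r^{(i)}(s,a) = \mathbb{I}[N^k(s,a) < N_i]$ together with the regret-minimizing subroutine (Algorithm~\ref{alg2}) run on the expanded MDP with truncation level $Z_i$. The output partition is defined by declaring $\mathcal{X}_i$ to be the set of state-action pairs that become \emph{known} (visit count reaches $N_i$) during stage $i$ but were not already known from an earlier stage, with $\mathcal{X}_{K+1}$ collecting whatever remains. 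With this bookkeeping, Condition~\ref{cond3}(1) holds by construction: every $(s,a)\in\mathcal{X}_i$ with $i\le K$ has $N_{s,a}(\mathcal{D})\ge N_i$. The entire work is in verifying Condition~\ref{cond3}(2), i.e.\ the two statements bounding $\sup_\pi \mathbb{P}_\pi[\sum_h \mathbb{I}[(s_h,a_h)\in\mathcal{X}_i] > Z_i]\le\epsilon$ and $\sup_\pi\mathbb{E}_\pi[\min\{\sum_h\mathbb{I}[(s_h,a_h)\in\mathcal{X}_i],Z_i\}]\le H/2^i$.

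The key step is the regret bound for each stage. First I would invoke Lemma~\ref{lemma:bd_Ri} (the stage-wise regret bound for Algorithm~\ref{alg2} on the expanded/truncated MDP, analogous to the analysis in \citep{zhang2020reinforcement}), which states that over the $T_0$ episodes of stage $i$ the cumulative truncated reward collected is close to $T_0 \sup_\pi \mathbb{E}_\pi[\min\{\sum_h \mathbb{I}[(s_h,a_h)\in\mathcal{U}_i], Z_i\}]$, where $\mathcal{U}_i = (\mathcal{X}_1\cup\cdots\cup\mathcal{X}_{i-1})^C$ is the set of pairs still unknown at the start of stage $i$; the additive regret term is $\tilde O(\sqrt{SA(\iota+S)T_0}\cdot\text{stuff} + SA(\iota+S)\cdot\text{stuff})$ which, by the choice $T_0 = C_1 SA(\iota+S)l/\epsilon^2$ with $C_1$ large, is bounded by $\epsilon T_0/2^i$ (after the appropriate rescaling by $Z_i$). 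On the other hand, a pair in $\mathcal{U}_i$ can contribute to the reward at most $N_i$ times before it becomes known; so the total reward actually collected in stage $i$ is at most $SA\cdot N_i = SA\cdot 4H(\iota+6S\ln(SAH/\epsilon))/(2^i\epsilon^2)$, which by the choice of $T_0$ (choosing $l$ to dominate the logarithmic factors) is at most $\epsilon T_0/2^{i+1}$, say. Combining the regret lower bound on collected reward with this counting upper bound yields $\sup_\pi\mathbb{E}_\pi[\min\{\sum_h\mathbb{I}[(s_h,a_h)\in\mathcal{U}_i],Z_i\}]\le H/2^i$ (after unwinding the scaling), and since $\mathcal{X}_i\subseteq\mathcal{U}_i$ this gives the expectation statement in Condition~\ref{cond3}(2) for $\mathcal{X}_i$, and also sets up the bound for $\mathcal{U}_{i+1}$.

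Next I would establish the probability bound $\sup_\pi\mathbb{P}_\pi[\sum_h\mathbb{I}[(s_h,a_h)\in\mathcal{X}_i]>Z_i]\le\epsilon$ by the inductive argument sketched in equation~\eqref{eq_tec_sam_1} of the technique overview: using a union/Markov-type decomposition, $\mathbb{P}_\pi[\sum_h\mathbb{I}[(s_h,a_h)\in\mathcal{U}_{i+1}]>Z_{i+1}]$ is bounded by $\mathbb{P}_\pi[\sum_h\mathbb{I}[(s_h,a_h)\in\mathcal{U}_i]>Z_i] + \frac{1}{Z_{i+1}}\sup_\pi\mathbb{E}_\pi[\min\{\sum_h\mathbb{I}[(s_h,a_h)\in\mathcal{U}_i],Z_i\}]$, where the second term is $\le \frac{1}{Z_{i+1}}\cdot\frac{H}{2^i} = O(\epsilon)$ by the definition $Z_{i+1}=\max\{\min\{H/(2^{i+1}\epsilon),H\},1\}$ and the expectation bound just proven; iterating over $i$ stages accumulates at most $K\cdot O(\epsilon)$, and since the target accuracy can be taken a $\Theta(K)$ factor smaller (absorbing the $\log$ factor into $l$ and rescaling $\epsilon$), this gives $\le\epsilon$ as required. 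Finally, I would collect the failure probabilities: each stage contributes the failure probability of the regret subroutine (the $2(\log_2(T_0H)+1)\log_2(T_0H)\delta$ term from the martingale/variance concentration events in Algorithm~\ref{alg2}) plus the $4S^2A(\log_2(H)+2)\delta$ term from the empirical transition concentration (the same events as in Proposition~\ref{pro1}), and a union bound over the $K$ stages yields the stated probability. The episode count $KT_0 = \tilde O(SA(\iota+S)/\epsilon^2)$ is immediate.

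The main obstacle I anticipate is the stage-wise regret bound itself (Lemma~\ref{lemma:bd_Ri}): the reward function changes within a stage as pairs become known, the planning happens on an expanded MDP whose state encodes the truncation counter so that the $Q$-function stays bounded by $Z_i$ rather than $H$, and one must push a Bernstein/variance-aware analysis through this construction to get the $\sqrt{SA(\iota+S)T_0}$-type bound with only logarithmic $H$-dependence — this is where the ideas borrowed from \citep{zhang2020reinforcement} must be adapted to the time-varying reward and the truncation, and getting the constants to line up so that the regret is genuinely $\le\epsilon T_0/2^i$ is the delicate part. A secondary subtlety is making sure the $\Theta(K)$-factor rescaling of $\epsilon$ needed to absorb the inductive accumulation in the probability bound is consistent with the rescaling used in Proposition~\ref{pro1}, so that the final accuracy in Theorem~\ref{thm1} is still $\epsilon$.
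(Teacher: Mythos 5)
Your plan follows essentially the same route as the paper's proof: define $\mathcal{X}_i$ as the pairs that become known during stage $i$ (so Condition~\ref{cond3}(1) holds by construction), combine the stage-wise regret bound of Lemma~\ref{lemma:bd_Ri} with the counting bound that each still-unknown pair can contribute at most $O(N_i)$ reward to control the truncated expected visit count, run the Markov-type induction of \eqref{eq_tec_sam_1} for the probability statement with a $K\epsilon$ accumulation absorbed by rescaling, and union-bound the per-stage failure events over the $K$ stages; the count $KT_0$ is immediate. One indexing point needs fixing: since the reward within stage $i$ is non-increasing (pairs switch off as they become known), the regret-plus-counting argument only lower-bounds the collected reward by $T_0$ times the optimal value of the reward at the \emph{end} of stage $i$, so it controls $\sup_\pi\mathbb{E}_\pi\left[\min\{\sum_h\mathbb{I}[(s_h,a_h)\in\mathcal{U}_{i+1}],Z_i\}\right]\leq H/2^i$ for the end-of-stage unknown set $\mathcal{U}_{i+1}$, not for the start-of-stage set $\mathcal{U}_i$ as you state (for $i=1$ your version would assert that $\mathcal{S}\times\mathcal{A}$ has truncated value at most $H/2$, which is false). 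The paper accordingly bounds the $\mathcal{X}_i$ term by $\underline{u}_{i-1}\leq H/2^{i-1}$, a factor-$2$ looseness that, like the $(K+1)\epsilon$ accumulation, is absorbed into the rescaling you already invoke; with that one-stage shift your argument matches the paper's.
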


\begin{lemma}\label{lemma:plan}
Assuming Condition \ref{cond3} holds for the collected dataset $ \mathcal{D}$ with partition $\{\mathcal{X}_i\}_{i=1}^{K+1}$ , with probability $1-4S^2A(\log_2(T_0H)+2)\delta$, Algorithm \ref{alg3} can compute an $\epsilon$-optimal policy using these samples for any reward function $r$ satisfying Assumption \ref{asmp:total_bound}.
\end{lemma}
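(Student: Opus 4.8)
## Proof Proposal for Lemma~\ref{lemma:plan}

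The plan is to decompose Lemma~\ref{lemma:plan} into a \emph{planning-error} estimate against a clean $(S{+}1)$-state surrogate and a purely deterministic \emph{model-comparison} estimate between that surrogate and the true MDP $M$, so that all probability and all statistical error live in the first part. Algorithm~\ref{alg3} should build from $\mathcal D$ and $\{\mathcal X_i\}$ the soft-truncated MDP of Section~\ref{sec:pf_plan}: let $\mathcal M^\dagger$ be the MDP on $\mathcal S\cup\{s_{\mathrm{end}}\}$ with reward $r$ extended by $0$ on the absorbing state and true transitions $P^\dagger_{s,a}=(1-\tfrac1{Z_i})P_{s,a}+\tfrac1{Z_i}\mathbf 1_{s_{\mathrm{end}}}$ for $(s,a)\in\mathcal X_i$, and let $\hat{\mathcal M}^\dagger$ be the same object with $P$ replaced by the empirical $P(\mathcal D)$; the diversion is active only on the blocks with $Z_i<H$ and is vacuous when $Z_i=H$. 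Algorithm~\ref{alg3} runs \textsc{Q-Computing} (Algorithm~\ref{alg4}) on $\hat{\mathcal M}^\dagger$ and returns the greedy policy $\hat\pi$.

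\emph{Planning-error part.} This is Proposition~\ref{pro1} transported to $\hat{\mathcal M}^\dagger$: on the event of probability $1-4S^2A(\log_2(T_0H)+2)\delta$ from that proposition (the concentration events for $P(\mathcal D)$, which are reward-independent, so the bound is uniform over $r$), the computed value $\tilde V$ is optimistic for the true-transition surrogate, $\tilde V\ge V^*_{\mathcal M^\dagger}$, and $\tilde V-V^{\hat\pi}_{\mathcal M^\dagger}\lesssim\sup_\pi\mathbb E^{\mathcal M^\dagger}_\pi[\sum_h b_h(s_h,a_h)]$ for the Bernstein-type bonus $b_h\approx\sqrt{\mathbb V(\hat P^\dagger_{s,a},\tilde V_{h+1})\iota/N(s,a)}+\tfrac{S\iota}{N(s,a)}$. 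The one extra ingredient relative to Proposition~\ref{pro1} is the inequality $\mathbb E^{\mathcal M^\dagger}_\pi[\sum_h\mathbb I[(s_h,a_h)\in\mathcal X_i]]\le H/2^i$ for every $\pi$: combining the geometric absorption with the elementary bound $Z_i(1-(1-1/Z_i)^m)\le\min\{m,Z_i\}$ turns Condition~\ref{cond3}(2)'s truncated-visitation bound into an \emph{untruncated} one inside $\mathcal M^\dagger$, so the surrogate satisfies the analogue of Condition~\ref{cond2}(2). Given that, the choice $N_i=4H(\iota+6S\ln(SAH/\epsilon))/(2^i\epsilon^2)$ of Condition~\ref{cond3}(1) plays the role of the $\Theta(SH\iota/2^i\epsilon^2)$ visit budget in Proposition~\ref{pro1}, and the law of total variance (which replaces the usual $\mathrm{poly}(H)$ by the total-reward bound $1$, hence the $\log H$ scaling) yields $\sup_\pi\mathbb E^{\mathcal M^\dagger}_\pi[\sum_h b_h]=\tilde O(\epsilon)$; the block $\mathcal X_{K+1}$ (possibly $N(s,a)=0$) is absorbed via the crude bound $\mathbb E^{\mathcal M^\dagger}_\pi[\sum_h\mathbb I[(s_h,a_h)\in\mathcal X_{K+1}]]\le H/2^{K+1}=\tilde O(\epsilon)$.

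\emph{Model-comparison part (deterministic).} One direction is immediate from $r\ge 0$: since $s_{\mathrm{end}}$ is absorbing with zero reward, $V^{\pi}_{\mathcal M^\dagger}(s)\le V^{\pi}_M(s)$ for every $\pi$, so $V^{\hat\pi}_{\mathcal M^\dagger}\le V^{\hat\pi}_M$. For the converse I would run $\pi^*_M$ inside $\mathcal M^\dagger$ and unroll the Bellman recursion for the nonnegative gap $\Delta^{\pi}_h(s)=V^{\pi}_{h,M}(s)-V^{\pi}_{h,\mathcal M^\dagger}(s)$, which telescopes (the survival factors $1-1/Z_i$ stacking up) to $\Delta^{\pi}_1=\mathbb E^{\mathcal M^\dagger}_{\pi}[\sum_{h}\tfrac1{Z_{i(s_h,a_h)}}P_{s_h,a_h}^\top V^{\pi}_{h+1,M}]$, the survival-weighted reward forfeited at absorption, the sum running only over truncated blocks. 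Bounding $V^{\pi}_{h+1,M}\le 1$ and using the same inequality $\mathbb E^{\mathcal M^\dagger}_\pi[\sum_h\mathbb I[(s_h,a_h)\in\mathcal X_i]]\le H/2^i$ from the planning part, the $\mathcal X_i$-contribution is at most $\tfrac{H/2^i}{Z_i}=\epsilon$ by the choice $Z_i=H/(2^i\epsilon)$ on truncated blocks; summing over the $\le K$ such blocks gives $V^*_M-V^{\pi^*_M}_{\mathcal M^\dagger}=\tilde O(\epsilon)$, hence $\tilde V\ge V^*_{\mathcal M^\dagger}\ge V^{\pi^*_M}_{\mathcal M^\dagger}\ge V^*_M-\tilde O(\epsilon)$. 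Chaining, $V^{\hat\pi}_M\ge V^{\hat\pi}_{\mathcal M^\dagger}\ge\tilde V-\tilde O(\epsilon)\ge V^*_M-\tilde O(\epsilon)$; re-running the argument with $\epsilon$ replaced by $\epsilon/c$ for a fixed poly-logarithmic $c$ (which perturbs $N_i$ and $Z_i$ only by poly-log factors, harmless for Lemma~\ref{lemma:sample}) makes the suboptimality at most $\epsilon$.

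I expect the crux to be the model-comparison part, and within it the interplay between the soft (geometric-mean-$Z_i$) truncation that is actually used and the hard-truncation quantities that Condition~\ref{cond3}(2) supplies: one must (i) check that on the blocks with $Z_i=H$ the diversion costs nothing because there Condition~\ref{cond3}(2) already bounds the untruncated visit count directly, and (ii) on the remaining blocks carry the $V^{\pi}_{h+1,M}$ factors through the telescoping and charge each forfeited reward only the weight it actually receives, rather than prematurely bounding by the total reward $1$ — the naive bound $\Delta^{\pi}_1\le\mathbb E^M_\pi[\sum_h 1/Z_{i(s_h,a_h)}]$ phrased in terms of $M$-visitation is too lossy, since raw (untruncated) $M$-visit counts are not controlled by Condition~\ref{cond3}. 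A secondary but structural point is that everything above hinges on $\mathcal M^\dagger$ having only $S+1$ states — the obvious alternative of encoding the truncation counters into the state is exponential in $S$ — so one has to verify that \textsc{Q-Computing} on $\mathcal M^\dagger$ genuinely realizes the intended truncation-in-expectation, which is exactly what legitimizes both parts.
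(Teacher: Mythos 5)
Your overall architecture coincides with the paper's: build the soft-truncated surrogate $\mathcal M^{\dagger}$, run \textsc{Q-Computing} on its empirical version, prove optimism and an occupancy-weighted bonus bound on the reward-independent concentration event (this is exactly where the $1-4S^2A(\log_2(T_0H)+2)\delta$ comes from), and separately compare $\mathcal M^{\dagger}$ with $M$. Where you differ is in the two sub-lemmas. For the comparison you use an exact Bellman-difference telescoping, $\Delta^{\pi}_1=\mathbb E^{\mathcal M^{\dagger}}_{\pi}\bigl[\sum_h Z_{i(s_h,a_h)}^{-1}P_{s_h,a_h}V^{\pi}_{h+1,M}\bigr]$, whereas the paper's Lemma~\ref{lemma:approx} argues at the level of trajectory events ($\mathcal E_i$, $\mathcal U_i$) and invokes both clauses of Condition~\ref{cond3}(2); for the occupancy bound you use the coupling identity $Z_i(1-(1-1/Z_i)^m)\le\min\{m,Z_i\}$ to get $\mathbb E^{\mathcal M^{\dagger}}_{\pi}[\sum_h\mathbb I[(s_h,a_h)\in\mathcal X_i]]\le H/2^i$, whereas the paper bounds $\omega_i^{\dagger}(\pi)\le O(HK/2^i)$ through the stopping time $t_{Z_i}$ together with the probability clause. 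Both substitutes are correct, slightly sharper, and use only the truncated-expectation clause; the rest (law of total variance with total reward $1$, Cauchy--Schwarz over blocks, the crude $H/2^{K+1}=O(\epsilon)$ bound for $\mathcal X_{K+1}$, rescaling $\epsilon$) matches the paper.

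The one genuine soft spot is your treatment of the blocks where $Z_i=H$. You assert the diversion is ``vacuous'' there, but Algorithm~\ref{alg3} applies $\hat P^{\dagger}_{s,a}=(1-\tfrac1{Z_i})\hat P_{s,a}+\tfrac1{Z_i}\mathbf 1_{s_{\mathrm{end}}}$ to \emph{every} block, so on a capped block each visit still carries absorption probability $1/H$. Feeding this into your own telescoping identity, the forfeited-reward contribution of such a block is only bounded by $\tfrac1H\cdot\tfrac H{2^i}=2^{-i}$, and the cap binds exactly when $2^{-i}\ge\epsilon$, so for $i=1$ this is a constant rather than $O(\epsilon)$; your justification ``Condition~\ref{cond3}(2) already bounds the untruncated visit count'' yields $2^{-i}$, not $0$, so check~(i) in your closing paragraph would not close as stated. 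The argument does go through once you make explicit the modification you are implicitly using: apply no absorption on capped blocks, or equivalently take $Z_i=\max\{H/(2^i\epsilon),1\}$ without the $\min\{\cdot,H\}$, in which case both your planning part and your comparison part are sound. Note that the paper's own Lemma~\ref{lemma:approx} has the same weak step --- its chain bounding $\mathbb P_{\pi,\mathcal M^{\dagger}}[\mathcal U_i]$ by $2\epsilon$ requires $Z_i\ge\frac{H}{2^{i+1}\epsilon}$, which fails precisely on the capped blocks --- so your route, with the modification stated explicitly, repairs this rather than inheriting it; but as written, ``vacuous when $Z_i=H$'' misdescribes Algorithm~\ref{alg3} and leaves a real gap.
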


Theorem \ref{thm1} follows by combining Lemma \ref{lemma:sample} with Lemma \ref{lemma:plan} and replacing $\delta$ by $\mathrm{poly}(S,A,1/\epsilon,\log(H))\delta$. The rest part of this section is devoted to the proofs of Lemma \ref{lemma:sample} and Lemma \ref{lemma:plan}.

\subsection{Sampling Phase: Proof of Lemma \ref{lemma:sample}}\label{sec:pf_sam}

\begin{algorithm}[t]
	\caption{\textsc{Staged Sampling} \label{alg1}
		}
	\begin{algorithmic}[1]
		\Statex \textbf{Initialize:  $\mathcal{D}\leftarrow \emptyset$, $\mathcal{Y}_1 \leftarrow \mathcal{S}\times \mathcal{A}$}  
		\For{$i=1,2,...,K$}
		\State $(D_{i} ,\mathcal{Y}_{i+1}) \leftarrow$ \textsc{TRVRL}($i$, $\mathcal{Y}_i$);
		\State $\mathcal{D}\leftarrow  \mathcal{D}\cup \mathcal{D}_i$;
		\State $\mathcal{X}_i \leftarrow \mathcal{Y}_i /\mathcal{Y}_{i+1}$
		\EndFor
		\Statex $\mathcal{X}_{K+1}\leftarrow \mathcal{Y}_{K+1}$;
        \State Return $(\mathcal{D}, \{\mathcal{X}_i\}_{i=1}^{K+1})$.
	\end{algorithmic}
\end{algorithm}

\begin{algorithm}
	\caption{\textsc{\textbf{T}runcated \textbf{R}eward-\textbf{V}arying \textbf{R}einforcement \textbf{L}earning (TRVRL)}  \label{alg2}
	}
	\begin{algorithmic}[1]
	\Statex \textbf{Input:} The stage index $i$, the unknown set $\mathcal{Y}$, $\epsilon_1 =\min\{\frac{\iota}{T_0H},\frac{\iota^2}{T_0^2H^3} \}$, $\iota_1 =\iota+ S\ln(1/\epsilon_1) $;
		\Statex \textbf{Initialize:} Trigger set $\mathcal{L} \leftarrow \{ 2^{j-1}| 2^{j}\leq T_0H, j=1,2,\ldots \}$; $N_i \leftarrow 4\frac{S\iota H}{2^i \epsilon^2}$; $\mathcal{D}\leftarrow \emptyset$.
		\For{$(s,a,s',h)\in \mathcal{S}\times  \mathcal{A}\times\mathcal{S}$}
		\State $N(s,a)\leftarrow 0$; $n(s,a)\leftarrow 0$; \State $N(s,a,s')\leftarrow 0$; $\hat{P}_{s,a,s'}\leftarrow 0$.
		\EndFor 
		\For {$k=1,2,..., T_0$}
		\For {$h=1,2,...,H$}
		\State Observe $(s_{h}^k,z_h^k)$;
		\State Take action $ a_h^k= \arg\max_{a}Q_h(s_h^k,z_h^k,a)$; \label{line:choose_action}
		\State Receive reward $r_h^k$ and observe $(s_{h+1}^k, z_{h+1}^k )$.
		\State Set $(s,a,s',r)\leftarrow (s_h^k,a_h^k,s_{h+1}^k,r_h^k)$;
		\State Set $N(s,a) \leftarrow  N( s,a )+1$, \,$N(s,a,s') \leftarrow   N(s,a,s')+1$.
		\Statex \verb|\\| \emph{Update empirical transition probability}
		\If {$N(s,a)\in \mathcal{L}$}  \label{line:rp_update_start}
		\State Set $\hat{P}_{s,a,\tilde{s}} \leftarrow  N(s,a,\tilde{s}) /N(s,a)$ for all $\tilde{s} \in \mathcal{S}$.
		\State Set $n(s,a)\leftarrow N(s,a)$;
		\State Set TRIGGERED = TRUE.
		\EndIf \label{line:rp_update_end}
		\EndFor
		\State $\mathcal{D}\leftarrow \mathcal{D}\cup \{(s_1^k,a_1^k,s_2^k,...,s_{H}^k,a_H^k,s_{H+1}^k)\}$;
		\Statex \verb|\\| \emph{Update $Q$-function when the probability is updated or the unknown set changes}
		\State $\mathcal{Y}^{k+1} \leftarrow \{(s,a)| N(s,a)\leq N_i,(s,a)\in \mathcal{Y} \}$
		\If {TRIGGERED or $\mathcal{Y}^{k+1} \neq \mathcal{Y}^{k}$}
		\For{$(s,a)\in \mathcal{S}\times\mathcal{A}\times\mathcal{S}$}
		\If{$(s,a)\in \mathcal{Y}^{k+1}$}
		\State Set $\hat{P}_{s,z,a,\tilde{s},z'} \leftarrow \mathbb{I}[z'=z+1]\cdot \hat{P}_{s,a,\tilde{s}}$ for $1\leq z\leq Z_i$ and  all $\tilde{s}\in \mathcal{S}$;
		\State Set $\hat{P}_{s,Z_i+1,a,\tilde{s},Z_i+1}\leftarrow \hat{P}_{s,a,\tilde{s}}$ for  all $\tilde{s}\in \mathcal{S}$;
		\Else  
		\State Set $\hat{P}_{s,z,a,\tilde{s},z'} \leftarrow \mathbb{I}[z'=z]\cdot \hat{P}_{s,a,\tilde{s}}$ for $1\leq z\leq Z_i+1$ and  all $\tilde{s}\in \mathcal{S}$;
		\EndIf
		\EndFor
		\For{$(s,z,a)\in \mathcal{S}\times[Z_i+1] \times \mathcal{A}$}
		\State Set $V_{H+1}(s,z,a)\leftarrow 0$;
		\EndFor
		\For{$h=H,H-1,...,1$}
		\For{$(s,z,a)\in \mathcal{S}\times[Z_i+1] \times \mathcal{A}$}
		\State 			\vspace{-0.5cm} 	
		Set
		\begin{align} 
		~~~& r^{k+1}(s,z,a) = \mathbb{I}\left[(s,a)\in \mathcal{Y}^{k+1}\cap z\leq Z_i \right],\nonumber
		\\&b_h(s,z,a)\leftarrow  \sqrt{\frac{ 4  \mathbb{ V}(\hat{P}_{s,z,a} ,V_{h+1}) \iota_1  }{ \max\{n(s,a),1 \} }}+ \frac{14Z_i\iota_1}{3 \max\{n(s,a) ,1\}  }+3\epsilon_1,  \label{equpdate1}  
		\\ 	\hspace{-20ex} 	& Q_h(s,z,a)\leftarrow \min\{    r^{k+1}(s,z,a)+\hat{P}_{s,z,a} V_{h+1} +b_h(s,z,a)    ,Z_i\}, \label{equpdate2}
		\\& V_h(s,z) \leftarrow \max_{a}Q_h(s,z,a).\nonumber
		\end{align}
		\vspace{-3ex}
		\EndFor
		\EndFor
		\State Set TRIGGERED = FALSE
		\EndIf
		
		\EndFor
		\State \textbf{Return} $(\mathcal{D},\mathcal{Y}^{T_0+1})$.
	\end{algorithmic}
\end{algorithm}

As mentioned in Section \ref{sec:tec}, we aim to collect samples such that Condition \ref{cond3} holds. Our algorithm proceeds in $K+1$ stages, where each stage consists of $T_0$ episodes. Therefore, at most $KT_0=\tilde{O}(\frac{SA(\iota+S)}{\epsilon^2})$ episodes are needed to run Algorithm~\ref{alg1}.
In an episode, saying the $k$-th episode in the $i$-th stage, we define $\mathcal{Y}_k = \{(s,a)| N^k(s,a)< 2N_{i} \}$ to be the set of unknown state-action pairs. In particular, we define $\mathcal{Y}_{i}: = \mathcal{Y}^{\overline{k}(i)}$ where $\overline{k}(i)$ is the first episode in the $i$-th stage.

To learn the \emph{unknown} state-action pairs, we adopt the idea of Rmax by setting reward function to be $r(s,a)  = \mathbb{I}\left[(s,a)\in \mathcal{Y}_k \right]$.
However, by the definition of Condition \ref{cond3}, it suffices to assign reward $1$ to the first $Z_i$ visits to $\mathcal{Y}_i$.  So it corresponds to learn a policy to maximize 
\begin{align}
    \mathbb{E}_{\pi}\left[ \min\{\sum_{h=1}^H  \mathbb{I} \left[(s_h,a_h)\in \mathcal{Y}^k \right],Z_i\} \right]. \nonumber
\end{align}
To address this learning problem, we consider an expanded MDP $\mathcal{M}^k = \left\langle \mathcal{S}^k,\mathcal{A}^k,  P^k ,r^k, \mu^k \right\rangle$, where 
\begin{align}
& \mathcal{S}^k  = \mathcal{S}\times [Z_i+1];\nonumber \\
& \mathcal{A}^k = \mathcal{A} ; \nonumber\\
&r^k(s,z,a) = \mathbb{I}[(s,a)\in \mathcal{Y}^k, z\leq Z_i],\quad  \forall (s,z,a)\in \mathcal{S}\times [Z_i+1] \times\mathcal{A} ; \nonumber\\
& P^k(s',z'|s,z,a) = P(s'|s,a)\mathbb{I}\left[z'=z+1\cap (s,a)\in \mathcal{Y}^k \right]+ P(s'|s,a)\mathbb{I}\left[z'=z\cap (s,a)\notin \mathcal{Y}^k \right],\nonumber
\\& \quad \quad \quad \quad \forall (s,a)\in \mathcal{S}\times \mathcal{A}, 1\leq z\leq Z_i; \nonumber \\
&P^k(s',Z_i+1|s,Z_i+1,a) = P(s'|s,a), \quad  \forall (s,a)\in \mathcal{S}\times\mathcal{A}; \nonumber\\
& \mu^k(s,z) = \mu(s)\mathbb{I}\left[z=1 \right].\nonumber
\end{align}

Roughly speaking, a state in $\mathcal{M}^k$ not only represent its position in $\mathcal{S}$, but also records the reward the agent has collected in current episode. We then define the pseudo regret in the $i$-th stage as:
\begin{align}
 R_{i}:= \sum_{k \text{ in stage } i} (\sup_{\pi}\mathbb{E}_{\pi}[\sum_{h=1}^H r_h^k]  -\sum_{h=1}^H r_h^k ),\nonumber
\end{align}
where $r_h^k$ is a shorthand of $r^k(s_h^k,z_h^k,a_h^k)$. We show that $R_{i}$ could be bounded properly in a similar way to \cite{zhang2020reinforcement}.
\begin{lemma}\label{lemma:bd_Ri}
For any $1\leq i\leq K$, by running Algorithm \ref{alg2} with input $i$, with  probability $1-\left( 2(\log_2(T_0 H) +1)\log_{2}(T_0H) + 4SA(\log_2(Z_i)+2) \right)\delta$,  $R_i$ is bounded by  
\begin{align}
& O\left(Z_i \sqrt{SAl_i( \iota + S\ln(T_0^3H^4/\iota^3)) T_0}+ Z_il_i ( \iota + S\ln(T_0^3 H^4/\iota^3)) SA\right)
\\& = O\left(Z_i \sqrt{SAl_i( \iota + S\ln(SAH/\epsilon)) T_0}+ Z_il_i ( \iota + S\ln(SAH/\epsilon)) SA\right) 
\end{align}
where $l_i = \log_2(Z_i)$.
\end{lemma}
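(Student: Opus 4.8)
\textbf{Proof proposal for Lemma~\ref{lemma:bd_Ri}.}

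The plan is to reduce the regret analysis in the $i$-th stage to a regret bound for the expanded MDP $\mathcal{M}^k$, and then to invoke a variance-aware, total-reward-bounded regret analysis in the style of \cite{zhang2020reinforcement}. First I would observe that in $\mathcal{M}^k$ the optimal value function is bounded by $Z_i$ rather than by $H$: since each visit to $\mathcal{Y}^k$ increments the counter $z$, and the reward vanishes once $z > Z_i$, no trajectory can accumulate more than $Z_i$ reward; moreover the one-step reward is in $\{0,1\}$, so after re-scaling by $1/Z_i$ the reward function is totally bounded by $1$ with each increment at most $1/Z_i$. This is exactly the regime handled in \cite{zhang2020reinforcement}: a stationary-transition MDP with total reward bounded by $1$ and per-step reward bounded by $1/Z_i$, for which the regret over $T_0$ episodes scales like $\widetilde{O}(\sqrt{SA \cdot (\text{total-reward budget}) \cdot T_0} + SA)$ up to log factors, i.e. here $\widetilde{O}(\sqrt{SA \cdot T_0} + SA)$ in the re-scaled problem, hence $\widetilde{O}(Z_i\sqrt{SA T_0} + Z_i SA)$ in the original scaling. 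The log-covering terms $\iota + S\ln(\cdots)$ come from the union bound over the $\mathcal{L}$-triggered confidence sets for the empirical transitions $\hat P_{s,a}$ and from the $S$-dimensional Bernstein/empirical-Bernstein concentration needed for the variance terms $\mathbb{V}(\hat P_{s,z,a},V_{h+1})$.

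The key steps, in order, are: (1) establish the high-probability event under which, for every $(s,a)$ and every trigger time in $\mathcal{L}$, the empirical transition $\hat P_{s,a}$ is close to $P_{s,a}$ in a Bernstein sense — this contributes the factor $(\log_2(T_0H)+1)\log_2(T_0H)$ to the failure probability through the $|\mathcal{L}| = O(\log_2(T_0H))$ trigger points and the doubling structure, and $4SA(\log_2(Z_i)+2)$ through the per-level recursion of the value function over $[Z_i+1]$ counter values; (2) verify that on this event the bonuses $b_h(s,z,a)$ in \eqref{equpdate1} are valid optimistic corrections, so that $Q_h$ computed by \eqref{equpdate2} dominates $Q_h^*$ of $\mathcal{M}^k$ (standard optimism induction, using the truncation $\min\{\cdot,Z_i\}$ which is consistent because $Q^* \le Z_i$); (3) expand the per-episode regret $\sup_\pi \mathbb{E}_\pi[\sum_h r_h^k] - \sum_h r_h^k$ via the optimistic $Q$-values into a sum of bonuses along the realized trajectory plus a martingale difference term, and sum over the $T_0$ episodes of stage $i$; (4) bound the accumulated bonus using a Cauchy–Schwarz / pigeonhole argument over visit counts together with the law-of-total-variance identity (so that $\sum \mathbb{V}(\hat P, V_{h+1})$ telescopes to $O(Z_i^2)$ per trajectory rather than $O(Z_i^2 H)$), which is precisely where the $Z_i\sqrt{SA l_i(\iota+S\ln(\cdots))T_0}$ main term and the lower-order $Z_i l_i (\iota + S\ln(\cdots))SA$ term arise; (5) convert $\ln(T_0^3H^4/\iota^3)$ to $\ln(SAH/\epsilon)$ using $T_0 = \mathrm{poly}(S,A,1/\epsilon,\log H)$ and $\iota = \ln(2/\delta)$, absorbing constants.

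The main obstacle I expect is step (4): controlling $\sum_{k,h} \sqrt{\mathbb{V}(\hat P_{s_h^k,z_h^k,a_h^k}, V_{h+1})/\max\{n^k,1\}}$ \emph{without} paying an extra $\sqrt{H}$. The naive bound would use $\mathbb{V}(\cdot, V_{h+1}) \le Z_i^2$ and $\sum_h 1/\sqrt{n^k} = O(\sqrt{SAT_0 H})$-type counting, giving a spurious polynomial $H$ factor. Avoiding this requires the recursion-based / "total variance" argument of \cite{zhang2020reinforcement}: one must show that the sum of on-trajectory conditional variances of the (truncated) return is $O(Z_i^2)$ per episode by a careful Bellman-type decomposition adapted to the expanded state $(s,z)$, and that the error from replacing $P$ by $\hat P$ inside these variances is controlled by the confidence event from step (1). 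Handling the time-varying reward $r^k$ (the unknown set $\mathcal{Y}^k$ shrinks as visit counts grow, and $\mathcal{M}^k$ changes with $k$) adds bookkeeping: one must check that the monotone shrinkage of $\mathcal{Y}^k$ only decreases the optimal value, so the benchmark $\sup_\pi \mathbb{E}_\pi[\sum_h r_h^k]$ is itself non-increasing in $k$ and the optimism/telescoping still goes through. These are the technically delicate points; the rest is a fairly standard optimistic-RL regret computation.
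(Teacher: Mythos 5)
Your proposal follows essentially the same route as the paper's proof: a Bernstein/empirical-Bernstein confidence event over an $\epsilon_1$-net of value vectors (which is where the $\iota+S\ln(\cdot)$ factor comes from), optimism of the truncated $Q$-update in the expanded MDP, a decomposition of the stage regret into accumulated bonuses plus a martingale term, pigeonhole over doubling visit counts, and the recursive higher-moment/total-variance argument in the style of \cite{zhang2020reinforcement} to bound $\sum_{k,h}\mathbb{V}(P^{k,\ddagger}_{s_h^k,z_h^k,a_h^k},V^k_{h+1})$ by $O(T_0Z_i^2)$ so that no polynomial factor of $H$ appears. The details you gloss over are handled in the paper as bookkeeping rather than new ideas: concentration is transferred from the base transitions $\hat{P}_{s,a}$ to the expanded transitions $\hat{P}^{k,\ddagger}_{s,z,a}$ by noting each $(s,z,a,s')$ has a unique successor counter $z'$ (Lemma~\ref{lemma:aux1}), mid-episode trigger updates are absorbed via the indicator $I(k,h)$ at a cost of $Z_i|\mathcal{K}^{C}|$, and the failure-probability factors attach slightly differently than you describe (the $(\log_2(T_0H)+1)\log_2(T_0H)$ term comes from the martingale bounds at each level of the moment recursion, while $4SA(\log_2(Z_i)+2)$ comes from the per-$(s,a)$, per-doubling-level Bernstein bounds).
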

The proof of Lemma \ref{lemma:bd_Ri} is postponed to Appendix.\ref{sec:mpf_lemma_bd_Ri} due to limitation of space.

Let $i$ be fixed. Recall that $\overline{k}_i$ is the first episode in the $i$-th stage. We define 
$u_{k} = \sup_{\pi} \mathbb{E}_{\pi}\left[ \sum_{h=1}^H r_h^{k} \right]$, $\overline{u}^i = u_{\overline{k}(i)}$ and $\underline{u}^{i} = u_{\underline{k}(i)}$ where $\underline{k}(i)$ is the index of the last episode in the $i$-th stage. Because $r^k$ is non-increasing in $k$, $\mu^k$ is also non-increasing in $k$.
 If $\underline{u}^{i}>\frac{H}{2^i}$, then by Lemma \ref{lemma:bd_Ri} and the definition of $T_0$ we have that there exists a constant $C_2$ such that
 \begin{align}
      \sum_{k \text{ in stage } i}\sum_{h=1}^H r_h^k \geq T_0\underline{u}_i - C_2(Z_i \sqrt{SAl_i( \iota + S\ln(SAH/\epsilon)) T_0}+ Z_il_i ( \iota + S\ln(SAH/\epsilon)) SA).\nonumber
 \end{align}
 By choosing $C_ 1\geq 8C_2$, we have that
\begin{align}
    \sum_{k \text{ in stage } i}\sum_{h=1}^H r_h^k & \geq T_0\underline{u}_i - C_2(Z_i \sqrt{SAl_i( \iota + S\ln(SAH/\epsilon)) T_0}+ Z_il_i ( \iota + S\ln(SAH/\epsilon)) SA\iota)
\\ & \geq \frac{C_1}{2}\frac{SAH(\iota+6S\ln(SAH/\epsilon))}{2^i \epsilon^2} 
 \\&   >\frac{C_1}{8}SAN_i.\nonumber
\end{align}
By choosing $C_1\geq 16$, we learn that $\sum_{k \text{ in stage } i}\sum_{h=1}^H r_h^k> 2SAN_i$. On the other hand, each $(s,a)$ could provide at most $N_i$ rewards in the $i$-th stage, which implies that $\sum_{k \text{ in stage } i}\sum_{h=1}^H r_h^k\leq 2SAN_i$. This leads to a contradiction. We then have that $\underline{u}_i \leq \frac{H}{2^i}$.

Again because the reward function is non-increasing in $k$, we have that 
\begin{align} \overline{u}_{i+1} \leq \underline{u}_{i}\leq \frac{H}{2^i} . \label{eq3} \end{align}

Define $p_{i} = \sup_{\pi} \mathbb{P}_{\pi} \left[   \sum_{h=1}^H \mathbb{I}\left[ (s_h,a_h)\in \mathcal{Y}_i \right]> Z_i    \right]$. Then we have that
\begin{align}
p_{i+1} &\leq  \sup_{\pi} \mathbb{P}_{\pi} \left[  Z_{i+1}< \sum_{h=1}^H \mathbb{I}\left[ (s_h,a_h)\in \mathcal{Y}_{i+1} \right]\leq Z_i   \right] +\sup_{\pi} \mathbb{P}_{\pi} \left[   \sum_{h=1}^H \mathbb{I}\left[ (s_h,a_h)\in \mathcal{Y}_{i+1} \right]> Z_{i}    \right]\nonumber
\\ & \leq \mathbb{I}\left[2^{i+1}\epsilon \geq 1\right]\frac{ \underline{u}_i}{Z_{i+1}} +p_{i}\nonumber
\\ & \leq \epsilon+p_{i} .\nonumber
\end{align}
By induction, we can obtain that $p_{i}\leq i\epsilon\leq (K+1)\epsilon$ and $\sum_{i=1}^{K}p_{i}\leq (K+1)^2\epsilon$.   
We claim that Condition \ref{cond3} holds by defining $\mathcal{X}_i = \mathcal{Y}_i/\mathcal{Y}_{i+1}$ for $1\leq i \leq K$ and $\mathcal{X}_{K+1} = \mathcal{Y}_{K+1}$. \\
(1)  By the definition of $\mathcal{Y}_{i+1}$ , we learn that for any $(s,a)\in \mathcal{X}_{i}$, $N(s,a)\geq 2N_{i+1}\geq N_i$.\\
(2) By the arguments above, we have that for each $1\leq i\leq K+1$.
\[\sup_{\pi}\mathbb{P}_{\pi}\left[\sum_{h=1}^H \mathbb{I}\left[(s_h,a_h)\in \mathcal{X}_i >Z_i \right]  \right] \leq \sup_{\pi}\mathbb{P}_{\pi}\left[\sum_{h=1}^H \mathbb{I}\left[(s_h,a_h)\in \mathcal{Y}_i >Z_i \right]  \right]=p_i \leq (K+1)\epsilon \]
and
\[ \sup_{\pi}\mathbb{E}_{\pi}\left[ \max\{ \sum_{h=1}^H \mathbb{I}\left[(s_h,a_h)\in \mathcal{X}_i \right],Z_i \}   \right]\leq \overline{u}_i \leq \underline{u}_{i-1}\leq \frac{H}{2^{i-1}}.  \]
Noting that there are exactly $K$ stages and each stage consists of $T_0 = C_1\frac{SA(\iota+6S\ln(SAH/\epsilon))\log_2(H)}{\epsilon^2}$ episodes, we prove that we can collect a dataset satisfying Condition \ref{cond3} within 
$$C_1\frac{SAK(\iota+6S\ln(SAH/\epsilon))\log_2(H)}{\epsilon^2} = C_1\frac{SA(\iota+6S\ln(SAH/\epsilon))\ln(H/\epsilon)\ln(H)}{\epsilon^2} = \tilde{O}(\frac{SA(S+\ln(1/\delta))}{\epsilon^2}) $$ episodes.

\subsection{Planning Phase: Proof of Lemma \ref{lemma:plan}}\label{sec:pf_plan}

\begin{algorithm}[t]
	\caption{Truncated Planning\label{alg3}	}
	\begin{algorithmic}[1]
		\Statex \textbf{Input:} The partition $\{\mathcal{X}_i\}_{i=1}^{K+1}$; the dataset $\mathcal{D}$; the reward function $r$.
		\State  \textbf{Initialize:} $r(s_{\mathrm{end}},a)\leftarrow 0$; $P(\cdot|s_{\mathrm{end}},a)\leftarrow \textbf{1}_{s_{\mathrm{end}} }$ for all $a\in \mathcal{A}$;\\
		\hspace{4ex}\quad \quad \quad  $\hat{P}\leftarrow \{ P_{s,a}(\mathcal{D}) \}_{(s,a)\in \mathcal{S}\times \mathcal{A}}$; $N\leftarrow \{ N_{s,a}(\mathcal{D})\}_{(s,a)\in \mathcal{S}\times \mathcal{A}}$.
		\For{$i=1,2,...,K+1$}
		\State $\hat{P}^{\dagger}_{s,a}\leftarrow (1-\frac{1}{Z_{i}})\hat{P}_{s,a} + \frac{1}{Z_i}\textbf{1}_{s_{\mathrm{end}}}$ for any $(s,a)\in \mathcal{X}_i$;
		\EndFor
	\State $Q\leftarrow \textsc{Q-Computing}(\hat{P}^{\dagger},N,r)$ \State $\pi_h(s)\leftarrow \arg\max_{a}Q_h(s,a)$, $\forall s,h $;
 	\State \textbf{Return} $\pi$.
	\end{algorithmic}
\end{algorithm}

\begin{algorithm}[t]
	\caption{Q-Computing\label{alg4}	}
	\begin{algorithmic}[1]
		\Statex \textbf{Input: $P$, $N$, $r$, $\epsilon_1 =\min\{\frac{\iota}{T_0H},\frac{\iota^2}{T_0^2H^3} \}$, $\iota_1 =\iota+ S\ln(1/\epsilon_1) $;}
		\For{$(s,a,h)\in \mathcal{S}\times\mathcal{A}\times[H]$}
		\State $Q_h(s,a)\leftarrow 1$;
		\EndFor
		\For{$(a,h)\in \mathcal{A}\times\mathcal{H}$}
		\State{ $Q_h(s_{\mathrm{end}},a)\leftarrow 0$;}
		\EndFor
		\For{$h=H,H-1,...,1$}
		\For{$(s,a)\in \mathcal{S}\times \mathcal{A}$}
		\State 		\vspace{-0.5cm}  
		\begin{align}
 \hspace{-20ex}&	b_h(s,a)\leftarrow 2\sqrt{\frac{ \mathbb{V}(\hat{P}_{s,a},V_{h+1} )\iota_1}{N(s,a)}}+ \frac{14\iota_1}{3N(s,a)}; \label{eq:alg4_update1}
	\\\hspace{-20ex}&	   Q_h(s,a)\leftarrow \min\{ r(s,a)+ \hat{P}_{s,a}V_{h+1}+b_h(s,a),1 \}+3\epsilon_1; \label{eq:alg4_update2}
		   \end{align}
		   		  \vspace{-3ex}
		\EndFor
		\EndFor
	\State \textbf{Return:} $\{Q_h(s,a) \}_{(s,a,h)\in \mathcal{S}\times \mathcal{A}\times [H]}$.
	\end{algorithmic}
\end{algorithm}

 Suppose we have a dataset $\mathcal{D}$ satisfying Condition \ref{cond3} with partition $\{\mathcal{X}_i\}_{i=1}^{K+1}$. Let $\hat{P}_{s,a}$ and $N(s,a)$ be the shorthand of $P_{s,a}(\mathcal{D})$ and $N_(s,a)(\mathcal{D})$ respectively.
 Denote the empirical transition and visit count of $(s,a)$ as $\hat{P}_{s,a}$ and $N(s,a)$ respectively.
 
Let $\epsilon_1 \min\{\frac{\iota}{T_0H} ,\frac{\iota^2}{T_0^2H^3} \} $ , $\delta_1 = \delta \epsilon_1^{S}$ and $\iota_1 = \ln(1/\delta_1) \leq \iota+ S(\ln(T_0H/\iota)+\ln(T_0^2H^3/\iota^2)) = \iota + S\ln(T_0^3H^4/\iota^3) $.
Let $\mathcal{L} = \{[i_1\epsilon_1,i_2\epsilon_1,...,i_S \epsilon_1]^{T} | i_1,i_2,...,i_{S}\in \mathbb{Z}  \} \cap [0,1]^{S}$. 
 Define $\mathcal{G}$ to be the event where
\begin{align}
 & |(P_{s,a}-\hat{P}_{s,a})x|\leq \sqrt{\frac{2\mathbb{V}(P_{s,a}, x) \iota_1 }{N_i}} +\frac{\iota_1}{3N_i},\quad \forall x\in \mathcal{L}, \forall (s,a)\in \mathcal{X}_i, i=1,2,...,K ;\label{eq_goodevent1}
    \\ & |(P_{s,a}-\hat{P}_{s,a})x|\leq \sqrt{\frac{4\mathbb{V}(\hat{P}_{s,a}, x) \iota_1 }{N_i}} +\frac{14\iota_1}{3N_i},\quad \forall x\in \mathcal{L},  \forall (s,a)\in \mathcal{X}_i, i=1,2,...,K . \label{eq_goodevent2} 
\end{align}
hold.
 By Bernstein's inequality and empirical Bernstein inequality (Lemma \ref{empirical bernstein}), via a union bound, we have that $\mathbb{P}\left[ \mathcal{G} \right]\geq 1-4S^2A(\log_2(T_0H)+2)|\mathcal{L}|\delta_1 \geq 1-4S^2A(\log_2(T_0H)+2)\delta$. We will continue to prove conditioned on $\mathcal{G}$. We first establish a concentration bound based on $\mathcal{G}$.
\begin{lemma}\label{lemma:aux0}
    Conditioned on $\mathcal{G}$, for  any $v\in [0,1]^{S}$, any $i\in [K]$ and any $(s,a)\in \mathcal{X}_i$, it holds that
    \begin{align}
        |(P_{s,a}-\hat{P}_{s,a})v|\leq \min\{ \sqrt{\frac{2\mathbb{V}(P_{s,a},v)\iota_1}{N_i}}+\frac{\iota_1}{3N_i}+3\epsilon_1 ,  \quad \sqrt{\frac{4\mathbb{V}(\hat{P}_{s,a},v)\iota_1}{N_i}}+\frac{14\iota_1}{3N_i}+3\epsilon_1  \}.
    \end{align}
\end{lemma}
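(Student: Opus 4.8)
## Proof Plan for Lemma~\ref{lemma:aux0}

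The plan is to bound $|(P_{s,a}-\hat{P}_{s,a})v|$ for an arbitrary $v \in [0,1]^S$ by rounding $v$ to the nearest point in the grid $\mathcal{L}$, applying the already-established event $\mathcal{G}$ to that grid point, and then controlling the rounding error. First I would fix $i \in [K]$ and $(s,a) \in \mathcal{X}_i$, and let $\tilde{v} \in \mathcal{L}$ be a grid point with $\|v - \tilde{v}\|_\infty \le \epsilon_1$ (such a point exists since $\mathcal{L}$ is the $\epsilon_1$-grid on $[0,1]^S$ and $v \in [0,1]^S$; one can take componentwise rounding). Writing $v = \tilde{v} + (v - \tilde{v})$ and using linearity, $|(P_{s,a}-\hat{P}_{s,a})v| \le |(P_{s,a}-\hat{P}_{s,a})\tilde{v}| + |(P_{s,a}-\hat{P}_{s,a})(v-\tilde{v})|$. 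The second term is at most $\|P_{s,a}-\hat{P}_{s,a}\|_1 \cdot \|v-\tilde{v}\|_\infty \le 2\epsilon_1$, since both are probability vectors.

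Next I would handle the first term using the two inclusions in $\mathcal{G}$. For the estimate with the true variance, $\mathcal{G}$ (specifically \eqref{eq_goodevent1}) gives $|(P_{s,a}-\hat{P}_{s,a})\tilde{v}| \le \sqrt{2\mathbb{V}(P_{s,a},\tilde{v})\iota_1/N_i} + \iota_1/(3N_i)$, and similarly \eqref{eq_goodevent2} gives the empirical-variance version. The remaining task is to replace $\mathbb{V}(P_{s,a},\tilde{v})$ by $\mathbb{V}(P_{s,a},v)$ (resp. the hatted versions). Since $\mathbb{V}(p,x) = p x^2 - (px)^2$ is a smooth function of $x$ and $x^2$ ranges in $[0,1]$ with $\|v^2 - \tilde{v}^2\|_\infty \le 2\|v-\tilde{v}\|_\infty \le 2\epsilon_1$, one gets $|\mathbb{V}(p,v) - \mathbb{V}(p,\tilde{v})| \le O(\epsilon_1)$ for any probability vector $p$; then $\sqrt{a} \le \sqrt{b} + \sqrt{|a-b|}$ converts this into an additive $\sqrt{2\epsilon_1 \iota_1 / N_i}$-type term. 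Collecting constants, and using $\epsilon_1 \le 1$ and $\iota_1/N_i \le 1$ (true for the relevant ranges, or absorbable), all these lower-order pieces can be folded into the stated $+3\epsilon_1$ slack; this is essentially a bookkeeping step to check the constant $3$ suffices, possibly after noting $\epsilon_1$ is chosen small enough (polynomially in $1/(T_0 H)$) that $\sqrt{\epsilon_1}$-type terms are dominated.

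The main obstacle, such as it is, lies in that last constant-chasing step: verifying that the rounding error in the \emph{variance} (which enters under a square root, and after the substitution $x \mapsto x^2$ costs a factor $2$) together with the $\ell_1$-rounding error in the linear term genuinely fits inside a clean $3\epsilon_1$ additive term, rather than something like $C\sqrt{\epsilon_1 \iota_1/N_i} + 2\epsilon_1$. I expect the resolution is that $\epsilon_1$ is deliberately taken so small — recall $\epsilon_1 = \min\{\iota/(T_0 H), \iota^2/(T_0^2 H^3)\}$, so that $\sqrt{\epsilon_1} \le \iota/(T_0 H) \cdot (\text{something}) $ — that $\sqrt{\epsilon_1 \iota_1}$ is itself $\le \epsilon_1$ up to the $\iota_1$ factor, or more simply that these terms are all $\le \epsilon_1$ individually and three of them appear. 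If the constant genuinely needs to be larger the fix is cosmetic (enlarge $3\epsilon_1$), so this does not threaten the argument. I would present the two bounds (true-variance and empirical-variance) in parallel, take the minimum at the end, and note that the choice of $\epsilon_1$ makes the slack terms negligible.
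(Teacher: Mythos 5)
Your overall route is exactly the paper's (its proof is a one-liner: apply the good-event inequalities \eqref{eq_goodevent1}--\eqref{eq_goodevent2} to the projection of $v$ onto $\mathcal{L}$ and absorb the rounding error), and your treatment of the linear rounding term $|(P_{s,a}-\hat P_{s,a})(v-\tilde v)|\le 2\epsilon_1$ is fine. The gap is in how you transfer the variance from $\tilde v$ to $v$. You bound $|\mathbb{V}(p,v)-\mathbb{V}(p,\tilde v)|\le O(\epsilon_1)$ and then pay $\sqrt{O(\epsilon_1)\,\iota_1/N_i}$ under the square root, hoping this is dominated by $\epsilon_1$ because $\epsilon_1$ is tiny. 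That resolution fails quantitatively: the term $\sqrt{\epsilon_1\iota_1/N_i}$ is at most $\epsilon_1$ only if $\iota_1/N_i\le\epsilon_1$, whereas here $\iota_1/N_i$ is of order $2^i\epsilon^2/H$ (up to $\epsilon/2$ for $i=K$) while $\epsilon_1\le\iota^2/(T_0^2H^3)\lesssim \epsilon^4/H^3$, so $\iota_1/N_i\gg\epsilon_1$ and the leftover term is of order $\sqrt{\epsilon_1\epsilon}\gg\epsilon_1$. In particular it is not a constant multiple of $\epsilon_1$ at all, so your fallback ``enlarge $3\epsilon_1$ to $C\epsilon_1$'' does not repair the statement as written (one would have to change the slack to a $\sqrt{\epsilon_1}$-type quantity and recheck every downstream accumulation).

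The fix is standard and avoids the square-root loss entirely: $x\mapsto\sqrt{\mathbb{V}(p,x)}$ is the $L^2(p)$-norm of the centered vector, hence a seminorm, so $\sqrt{\mathbb{V}(p,\tilde v)}\le\sqrt{\mathbb{V}(p,v)}+\sqrt{\mathbb{V}(p,\tilde v-v)}\le\sqrt{\mathbb{V}(p,v)}+\|\tilde v-v\|_\infty\le\sqrt{\mathbb{V}(p,v)}+\epsilon_1$. Plugging this into the good-event bound, the variance-rounding cost is $\epsilon_1\sqrt{2\iota_1/N_i}\le\epsilon_1$, since $N_i\ge 4H(\iota+6S\ln(SAH/\epsilon))/(2^i\epsilon^2)\ge 2\iota_1/\epsilon\ge 2\iota_1$ (using $2^i\le 2H/\epsilon$ and the paper's observation that $\iota+6S\ln(SAH/\epsilon)\ge\iota_1$). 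Together with the $2\epsilon_1$ from the linear term this fits inside the stated $3\epsilon_1$, and the identical argument with $\hat P_{s,a}$ in place of $P_{s,a}$ gives the empirical-variance branch; taking the minimum of the two bounds finishes the proof.
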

 \begin{proof}
 The conclusion follows easily by using \eqref{eq_goodevent1} and \eqref{eq_goodevent2} with the projection of $v$ on $\mathcal{L}$.
 \end{proof}
 
As mentioned in Section \ref{sec:tec}, we consider the reward-free auxiliary MDP $\mathcal{M}^{\dagger}=\left \langle \mathcal{S}\cup \{s_{\mathrm{end}}\}, \mathcal{A}, P^{\dagger},\mu\right\rangle$.  
  The transition function $P^{\dagger}_{s,a} = (1-\frac{1}{Z_i})P_{s,a}+\frac{1}{Z_{i}}\textbf{1}_{s_{\mathrm{end}}}$ for  all $(s,a)\in \mathcal{X}_i$ and $P^{\dagger}_{s_{\mathrm{end}},a} =\textbf{1}_{s_{\mathrm{end}} }$ for any $a$. 
  We first show that for any policy, the value function of $\mathcal{M}^{\dagger}$ is  $\widetilde{O}(\epsilon)$-closed to that of $\mathcal{M}$. 
  
 \begin{lemma}\label{lemma:approx}
 	For any policy $\pi$ and reward function $r$ satisfying Assumption \ref{asmp:total_bound} and $r_{s_{\mathrm{end}}}=0$, define $V^{\pi}_{1}$ and $V^{\dagger\pi}_{1}$ be the value function under $\mathcal{M}$ and $\mathcal{M}^{\dagger}$ with $\pi$ respectively. We then have 
 	\[V^{\dagger\pi}_{1}\leq V^{\pi}_{1}\leq V^{\dagger\pi}_{1}+ 4(K+1)^2\epsilon.\]
 \end{lemma}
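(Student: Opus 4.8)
\textbf{Lower bound $V^{\dagger\pi}_{1}\le V^{\pi}_{1}$.} The plan is a coupling argument. Run the $\mathcal{M}$-chain under $\pi$ to produce a trajectory $(s_1,a_1,r_1,\dots,s_H,a_H,r_H)$ and, independently at each step $h$, flip a coin that is ``kill'' with probability $1/Z_{i(s_h,a_h)}$, where $i(s,a)$ denotes the index with $(s,a)\in\mathcal{X}_{i(s,a)}$; let $\tau$ be the first ``kill'' step (or $H$ if none). Conditioning on ``not killed at step $h$'' leaves the next-state law equal to $P_{s_h,a_h}$, so the process that follows this trajectory up to step $\tau$ and then sits at $s_{\mathrm{end}}$ has exactly the law of the $\mathcal{M}^{\dagger}$-chain under $\pi$, and collects total reward $\sum_{h\le\tau}r_h$. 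Since $r_h\ge0$ and $r_{s_{\mathrm{end}}}=0$, this is $\le\sum_{h\le H}r_h$; taking expectations gives the claim.

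\textbf{Upper bound: set-up.} Under the same coupling, $V^{\pi}_{1}-V^{\dagger\pi}_{1}=\expect_\pi[\sum_{h>\tau}r_h]=\expect_\pi[\sum_{h=1}^H r_h\,q_h]$, where $q_h$ (conditional on the drawn $\mathcal{M}$-trajectory) equals $1-\prod_{h'<h}(1-1/Z_{i(s_{h'},a_{h'})})$, the chance of being killed before step $h$. I would split on the over-visit event $\mathcal{E}=\bigcup_{i=1}^{K+1}\{\,\sum_h\indict[(s_h,a_h)\in\mathcal{X}_i]>Z_i\,\}$. On $\mathcal{E}$, bound $\sum_h r_h q_h\le\sum_h r_h\le1$ and use Condition~\ref{cond3} (the per-level bound on $\prob_\pi[\cdot]$) summed over the $K+1$ levels, contributing $O((K+1)^2\epsilon)$. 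On $\mathcal{E}^c$ every $\mathcal{X}_i$ is visited at most $Z_i$ times, so $q_h\le\sum_{h'<h}1/Z_{i(s_{h'},a_{h'})}$; swapping the order of summation, $\sum_h r_h q_h\le\sum_{i=1}^{K+1}\frac1{Z_i}\sum_{h':(s_{h'},a_{h'})\in\mathcal{X}_i}\sum_{h>h'}r_h$, and since $\sum_{h>h'}r_h\le1$ the expectation of the $i$-th term is at most $\frac1{Z_i}\expect_\pi[\min\{\sum_h\indict[(s_h,a_h)\in\mathcal{X}_i],Z_i\}]\le\frac{H/2^i}{Z_i}$ by Condition~\ref{cond3}(2). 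For the ``middle'' levels, where $Z_i=\Theta(H/(2^i\epsilon))$ is the active branch of the clamp $\max\{\min\{\cdot\},1\}$, this is $\Theta(\epsilon)$ and sums to $O(K\epsilon)$ over the $O(K)$ such levels.

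\textbf{The main obstacle.} The crux is the ``early'' levels $i\lesssim\log_2(1/\epsilon)$, where $Z_i$ is clamped to $H$ and the crude bound above yields only $2^{-i}$ per level, i.e.\ $\Theta(1)$ after summation; this is exactly the place where a naive argument would fail, so it must be handled more carefully. For these levels one cannot afford to bound the tail reward $\sum_{h>h'}r_h$ by $1$: I would instead exploit that the visit-count budgets $\expect_\pi[\sum_h\indict[(s_h,a_h)\in\mathcal{X}_i]]\le H/2^i$ hold \emph{simultaneously} on all of the geometrically shrinking early sets, so that reward collected at a step $h$ after which the accumulated kill-weight $\sum_{h'<h}1/Z_{i(s_{h'},a_{h'})}$ is non-negligible must be collected at states that themselves sit in early sets and are therefore already ``charged''. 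Concretely I would do this with a layered/telescoping argument: introduce intermediate MDPs $\mathcal{M}^{(j)}$ that apply the soft-kill only on $\mathcal{X}_1,\dots,\mathcal{X}_j$, write $V^{\pi}_{1}-V^{\dagger\pi}_{1}=\sum_{j=1}^{K+1}(V^{(j-1)\pi}_{1}-V^{(j)\pi}_{1})$, and for each increment use that the $\mathcal{M}^{(j-1)}$-trajectory is a pathwise truncation of the $\mathcal{M}$-trajectory, so the number of visits to $\mathcal{X}_j$ in $\mathcal{M}^{(j-1)}$ is stochastically dominated by that in $\mathcal{M}$; combining the over-visit probability bound of Condition~\ref{cond3}(2) with the truncated-visit bound \emph{inside the killed chain} $\mathcal{M}^{(j-1)}$ should control each increment by $O((K+1)\epsilon)$, and summing over $j$ produces the $4(K+1)^2\epsilon$ of the statement. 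I expect essentially all the work to be concentrated in making this last estimate go through without reintroducing a $\poly(H)$ factor -- which is precisely why the clamped definition of $Z_i$ is chosen as it is.
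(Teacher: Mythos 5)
The parts you actually execute are correct and coincide with the paper's own argument: the lower bound is immediate from nonnegativity of the reward, and for the upper bound the paper likewise splits off the over-visit event $\mathcal{E}=\cup_{i}\mathcal{E}_i$ (paying $O((K+1)^2\epsilon)$ there) and bounds the probability of being absorbed into $s_{\mathrm{end}}$ from level $i$ by $\frac{1}{Z_i}\,\mathbb{E}_{\pi,\mathcal{M}}\bigl[\min\{\sum_{h}\mathbb{I}[(s_h,a_h)\in\mathcal{X}_i],Z_i\}\bigr]\le \frac{H/2^i}{Z_i}$, which is exactly your per-level kill-mass bound; the paper phrases this through the events $\mathcal{U}_i$ and the auxiliary variables $\tau_i,\gamma_i$ and then compares the two path measures on $\mathcal{E}^{C}$, whereas your explicit coupling is a cleaner way of saying the same thing.

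The gap is your ``main obstacle'' paragraph, which is a plan rather than a proof, and I do not believe the plan can be carried out as sketched. For a level $j$ where the clamp is active ($Z_j=H$), the absorption mass is $\frac{1}{H}$ times the expected number of visits to $\mathcal{X}_j$, and Condition~\ref{cond3} caps this only by $2^{-j}$; summing over the clamped levels gives $\Theta(1)$, and no telescoping through intermediate MDPs $\mathcal{M}^{(j)}$ or stochastic-domination rearrangement changes that number, because it is the correct order of the probability of reaching $s_{\mathrm{end}}$ from those levels (a deterministic chain that spends its first $\sim H/2$ steps in $\mathcal{X}_1$ is absorbed with probability about $1-e^{-1/2}$ while satisfying Condition~\ref{cond3}). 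To recover an $O((K+1)^2\epsilon)$ bound one would have to show that the \emph{reward} lost on those absorbed paths is small, and that does not follow from the visit-count budgets alone (in the chain example, place the entire reward after the $\mathcal{X}_1$ segment), so your per-increment $O((K+1)\epsilon)$ estimate has no justification. For comparison, the paper does nothing extra at this point either: it asserts $\frac{1}{Z_i}\mathbb{E}_{\pi,\mathcal{M}}[\min\{\cdot,Z_i\}]\le 2\epsilon$ uniformly in $i$, which is valid only when $Z_i=\frac{H}{2^i\epsilon}$, i.e.\ when the clamp is inactive; for clamped levels that step yields only $2^{-i}$. So you have correctly located the weak point, but neither your telescoping sketch nor the one-line bound in the paper closes it; closing it appears to require either not applying the absorbing modification to levels with $Z_i=H$, or importing structure from the sampling phase beyond what Condition~\ref{cond3} records.
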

\begin{proof}
For notational convenience, we use $\mathbb{E}_{\pi,\mathcal{M}}$ ($\mathbb{P}_{\pi,\mathcal{M}}$) and $\mathcal{E}_{\pi,\mathcal{M}^{\dagger}}$ ($\mathbb{P}_{\pi,\mathcal{M}^{\dagger}}$) to denote the expectation (probability) under $\mathcal{M}$ and $\mathcal{M}^{\dagger}$ following $\pi$ respectively.

The left side is obvious because the reward is always positive. Now we prove for the right side. Recall that $\mathcal{E}_i$ is the set of trajectories satisfying that $\sum_{h=1}^H \mathbb{I}\left[ (s_h,a_h)\in \mathcal{X}_i \right]>Z_i$.
Define $\mathcal{E} = \cup_{i=1}^{K+1} \mathcal{E}_i$ and $\overline{\mathcal{U}}$ be the set of trajectories satisfying that $ \{s_{H+1} = s_{\mathrm{end}} $. Define $\mathcal{U} = \overline{\mathcal{U}}\cap \mathcal{E}^{C}$
We claim that $\mathbb{P}_{\pi,\mathcal{M}^{\dagger} }\left[\mathcal{U}\right]\leq C(K+1)\epsilon$.  To prove this claim, we define $\mathcal{U}_{i}$ be the set of trajectories satisfying that the state before $s_{\mathrm{end}}$ is in $\mathcal{X}_i$. 
Because $\mathcal{U}_{i}$ only depends on the next states of first $Z_{i}$ visits to $\mathcal{X}_{i}$, we define $\tau_i$ be the first time among the $Z_i$ visits that  the next state of $\mathcal{X}_i$ is $s_{\mathrm{end}}$ ($\tau_i=Z_{i}+1$ if no such event occurs), and $\gamma_i$ be the number of visits to $\mathcal{X}_i$ in this episode conditioned on the event $\tau_i = Z_i+1$. So $\gamma_i$ is independent of $\tau_i$.
\begin{align}
\mathbb{P}_{\pi, \mathcal{M}^{\dagger}}\left[ \mathcal{U}_i\right] &  = \mathbb{E}_{\pi,\mathcal{M}^{\dagger}}\left[   \mathbb{I}\left[   \tau_i \leq \gamma_i  \right]    \right]\nonumber
\\ & = \sum_{j=1}^{Z_{i}}\mathbb{P}_{\pi,\mathcal{M}^{\dagger}}\left[ \gamma_i = j \right]\mathbb{E}_{\pi,\mathcal{M}^{\dagger}}\left[\tau_i\leq j \right]\nonumber
\\ & = \sum_{j=1}^{Z_{i}}\frac{j}{Z_{i}} \mathbb{P}_{\pi,\mathcal{M}^{\dagger}}\left[ \gamma_i = j \right] \nonumber
\\ & \leq \frac{1}{Z_i} \mathbb{E}_{\pi,\mathcal{M}}\left[ \max\{\sum_{h=1}^{H}\mathbb{I}\left[ (s_h,a_h)\in \mathcal{X}_i \right] ,Z_i \} \right] \nonumber
\\ & \leq 2\epsilon.\nonumber
\end{align}
Define $\mathcal{V}$ be the set of  trajectories satisfying $s_{H+1}\neq s_{\mathrm{end}}$. For a trajectory  $\Gamma = (s_1,a_1,s_2,a_2,...,s_H,a_H,s_{H+1})$ in $\mathcal{V}$, we define $r(\Gamma) = \sum_{h=1}^H r(s_h,a_h)$.  By definition, we have that
\begin{align}
V_{1}^{\pi} & = \sum_{\Gamma \in \mathcal{V}}r(\Gamma)\mathbb{P}_{\pi, \mathcal{M}}(\Gamma) \nonumber
\\  & \leq \sum_{\Gamma \in \mathcal{E}^C\cap \mathcal{V}}r(\Gamma)\mathbb{P}_{\pi,\mathcal{M}}(\Gamma) +2(K+1)^2\epsilon\nonumber
\\ & \leq V_{1}^{\dagger\pi} + \sum_{\Gamma \in \mathcal{E}^C\cap \mathcal{V}}r(\Gamma) (\mathbb{P}_{\pi,\mathcal{M}}(\Gamma)- \mathbb{P}_{\pi,\mathcal{M}^{\dagger}}(\Gamma) ) +2(K+1)^2\epsilon \nonumber
\\ & \leq V_{1}^{\dagger\pi}  + \sum_{\Gamma \in \mathcal{E}^C\cap \mathcal{V}} (\mathbb{P}_{\pi ,\mathcal{M}}(\Gamma)- \mathbb{P}_{\pi, \mathcal{M}^{\dagger}}(\Gamma) )  +2(K+1)^2\epsilon \nonumber 
\\ & \leq V_{1}^{\dagger\pi}  +\mathbb{P}_{\pi,\mathcal{M}^{\dagger}}\left[ \mathcal{U}\right]+2(K+1)^2\epsilon \label{eq5}
\\ & \leq V_{1}^{\dagger \pi}+4(K+1)^2\epsilon.\label{eq6}
\end{align}
Here, Inequality \eqref{eq5} is by the fact that $\sum_{\Gamma \in \mathcal{E}^C\cap \mathcal{V}} \mathbb{P}_{\pi,\mathcal{M}^{\dagger}}  = 1-\mathbb{P}_{\pi,\mathcal{M}^{\dagger}}\left[U \right] -\mathbb{P}_{\pi,\mathcal{M}^{\dagger}}\left[ \mathcal{E}\right]\geq 1- \mathbb{P}_{\pi,\mathcal{M}^{\dagger}}\left[U \right] - \mathbb{P}_{\pi,\mathcal{M} }\left[ \mathcal{E}\right] = \sum_{\Gamma \in \mathcal{E}^{C}\cap \mathcal{V} }\mathbb{P}_{\pi,\mathcal{M}}(\Gamma)- \mathbb{P}_{\pi,\mathcal{M}^{\dagger}}\left[\mathcal{U} \right] $. The proof is completed.
\end{proof}

Instead of learning $\mathcal{M}$, we aim to learn $\mathcal{M}^{\dagger}$.
Let $\hat{P}_{s,a}$ be the empirical transition computed by the collected samples. As described in Algorithm \ref{alg3}, for each $1\leq i\leq K+1$, we define $\hat{P}^{\dagger}_{s,a} = (1-\frac{1}{Z_i})\hat{P}_{s,a} +\frac{1}{Z_{i}}\textbf{1}_{s_{\mathrm{end}}}$ for $(s,a)$ in $\mathcal{X}_i$.   We update backward the $Q$-function and value function for the MDP $\tilde{M}$ as below. 
\begin{align}
& V_{H+1}  = 0\nonumber
\\ & b_{h}(s,a) = 2\sqrt{\frac{ \mathbb{V}(\hat{P}^{\dagger}_{s,a}, V_{h+1})\iota_1 }{N(s,a)}} +14 \frac{\iota_1}{N(s,a)}+3\epsilon_1 ,\quad \forall (s,a);\nonumber
\\ & Q_h(s,a) = \min\{   r_h(s,a)+ \hat{P}^{\dagger}_{s,a}V_{h+1} +b_h(s,a)    ,1\} ,\quad \forall (s,a);\label{eq:update}
\\ & V_h(s) = \max_{a}Q_h(s,a),\quad  \forall s.\nonumber
\end{align}
The final output policy $\pi$ is induced by the $Q$-function above. We first verify the $Q$-function is optimistic, i.e.,
\begin{lemma}\label{lemma:Qopt}
	Conditioned on $\mathcal{G}$, $Q_h(s,a)\geq Q^{\dagger*}_h(s,a)$ for any $(s,a,h)$.
\end{lemma}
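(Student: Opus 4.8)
The plan is to prove optimism by backward induction on $h$, from $h=H+1$ down to $h=1$, showing $Q_h(s,a)\geq Q^{\dagger*}_h(s,a)$ for all $(s,a)$, which simultaneously gives $V_h(s)\geq V^{\dagger*}_h(s)$ by taking the maximum over $a$. The base case $h=H+1$ is immediate since both quantities are $0$. For the inductive step, I would distinguish two cases depending on whether the clipping at $1$ in \eqref{eq:update} is active. If $Q_h(s,a)=1$, then since the reward is totally bounded by $1$ (Assumption~\ref{asmp:total_bound}) and $s_{\mathrm{end}}$ carries zero reward, we have $Q^{\dagger*}_h(s,a)\leq 1 = Q_h(s,a)$ trivially. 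So the substantive case is when $Q_h(s,a)=r_h(s,a)+\hat{P}^{\dagger}_{s,a}V_{h+1}+b_h(s,a)+3\epsilon_1$ (or rather the un-clipped expression, keeping in mind the exact placement of the $+3\epsilon_1$ term in Algorithm~\ref{alg4}).

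In that case I would write $Q^{\dagger*}_h(s,a) = r_h(s,a) + P^{\dagger}_{s,a}V^{\dagger*}_{h+1}$ via the Bellman equation for $\mathcal{M}^{\dagger}$, and then bound
\[
Q_h(s,a) - Q^{\dagger*}_h(s,a) \geq \hat{P}^{\dagger}_{s,a}V_{h+1} - P^{\dagger}_{s,a}V^{\dagger*}_{h+1} + b_h(s,a).
\]
By the induction hypothesis $V_{h+1}\geq V^{\dagger*}_{h+1}$ pointwise, so $\hat{P}^{\dagger}_{s,a}V_{h+1}\geq \hat{P}^{\dagger}_{s,a}V^{\dagger*}_{h+1}$, and it suffices to show $(\hat{P}^{\dagger}_{s,a}-P^{\dagger}_{s,a})V^{\dagger*}_{h+1} + b_h(s,a)\geq 0$, i.e. that the bonus $b_h$ dominates the estimation error $|(\hat{P}^{\dagger}_{s,a}-P^{\dagger}_{s,a})V^{\dagger*}_{h+1}|$. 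The key observation is that $\hat{P}^{\dagger}_{s,a}-P^{\dagger}_{s,a} = (1-\tfrac{1}{Z_i})(\hat{P}_{s,a}-P_{s,a})$ for $(s,a)\in\mathcal{X}_i$, since the $\tfrac{1}{Z_i}\mathbf{1}_{s_{\mathrm{end}}}$ components cancel; hence $|(\hat{P}^{\dagger}_{s,a}-P^{\dagger}_{s,a})V^{\dagger*}_{h+1}| \leq |(\hat{P}_{s,a}-P_{s,a})\tilde v|$ where $\tilde v$ is the restriction of $V^{\dagger*}_{h+1}$ to the original states $\mathcal{S}$, a vector in $[0,1]^S$. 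Now Lemma~\ref{lemma:aux0} applies to $\tilde v$ and gives
\[
|(\hat{P}_{s,a}-P_{s,a})\tilde v| \leq \sqrt{\tfrac{4\mathbb{V}(\hat{P}_{s,a},\tilde v)\iota_1}{N_i}} + \tfrac{14\iota_1}{3N_i} + 3\epsilon_1.
\]
Then I need to relate $\mathbb{V}(\hat{P}_{s,a},\tilde v)$ back to $\mathbb{V}(\hat{P}^{\dagger}_{s,a},V_{h+1})$, which appears in the definition of $b_h$, and to replace $N(s,a)$ by $N_i$ (using $N(s,a)\geq N_i$ from Condition~\ref{cond3}(1)). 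Since $V_{h+1}\geq V^{\dagger*}_{h+1}\geq 0$ and the variance of a clipped/extended vector under $\hat{P}^{\dagger}$ should not be much smaller than that of $\tilde v$ under $\hat{P}$ (the $s_{\mathrm{end}}$ coordinate has value $0$ and the rescaling by $1-1/Z_i$ only shrinks things), a short computation shows $\mathbb{V}(\hat{P}_{s,a},\tilde v)\leq \mathbb{V}(\hat{P}^{\dagger}_{s,a},V_{h+1}) + (\text{lower-order})$ or can be absorbed into the constant in the bonus; this is exactly why $b_h$ carries the factor $2$ in front of the square root and the $14$ in the linear term — it is engineered to dominate.

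The main obstacle I anticipate is this last bookkeeping step: carefully verifying that the bonus $b_h(s,a)$ as defined in \eqref{eq:alg4_update1} (with variance taken under $\hat{P}^{\dagger}_{s,a}$ against $V_{h+1}$, and denominator $N(s,a)$) genuinely upper-bounds the concentration error from Lemma~\ref{lemma:aux0} (with variance under $\hat{P}_{s,a}$ against $\tilde v$, denominator $N_i$). This requires (a) the monotonicity $N(s,a)\geq N_i$ on $\mathcal{X}_i$, (b) controlling how empirical variance changes when passing from $\tilde v$ under $\hat P$ to $V_{h+1}$ under $\hat P^{\dagger}$ — using that $V_{h+1}\geq \tilde v$ pointwise on $\mathcal{S}$ and that adding the zero-valued absorbing coordinate with the rescaling can only affect the variance in a controlled way — and (c) handling the edge case $\mathcal{X}_{K+1}$ where there is no visit-count lower bound, but there $Z_{K+1}$ is chosen so $P^{\dagger}=P$ effectively or the contribution is already $O(\epsilon)$ by Condition~\ref{cond3}(2); I would double-check whether the lemma statement implicitly restricts to $i\leq K$ or whether $\mathcal{X}_{K+1}$ needs separate treatment. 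Everything else is a routine optimism induction.
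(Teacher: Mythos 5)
Your overall skeleton (backward induction, clipping handled via $Q^{\dagger*}_h\le 1$, Bellman equation for $\mathcal{M}^{\dagger}$, cancellation of the $\frac{1}{Z_i}\mathbf{1}_{s_{\mathrm{end}}}$ parts) matches the paper, but there is a genuine gap at exactly the step you flagged as bookkeeping, and it is not bookkeeping. Because you split the error as $\hat{P}^{\dagger}_{s,a}(V_{h+1}-V^{\dagger*}_{h+1}) + (\hat{P}^{\dagger}_{s,a}-P^{\dagger}_{s,a})V^{\dagger*}_{h+1}$, you are forced to dominate $|(\hat{P}_{s,a}-P_{s,a})\tilde v|$ with $\tilde v = V^{\dagger*}_{h+1}$, and hence to show $\mathbb{V}(\hat{P}_{s,a},\tilde v)\lesssim \mathbb{V}(\hat{P}^{\dagger}_{s,a},V_{h+1})$. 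Pointwise domination $V_{h+1}\ge \tilde v$ gives no such control: variance is not monotone. Concretely, if the clipping makes $V_{h+1}\equiv 1$ on $\mathcal{S}$ while $V^{\dagger*}_{h+1}$ takes values $0$ and $1$ each with $\hat{P}_{s,a}$-probability $1/2$, then $\mathbb{V}(\hat{P}_{s,a},\tilde v)=1/4$ but $\mathbb{V}(\hat{P}^{\dagger}_{s,a},V_{h+1})\approx 1/Z_i$, so the bonus built from the latter cannot absorb the error on the former and the induction step fails. Repairing this along your route would require an Azar-style correction relating $\mathbb{V}(\cdot,V^{\dagger*}_{h+1})$ to $\mathbb{V}(\cdot,V_{h+1})$ plus the gap $V_{h+1}-V^{\dagger*}_{h+1}$, i.e.\ a different bonus or a substantially heavier induction than what Algorithm~\ref{alg4} supports.

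The paper avoids this entirely by using the other standard decomposition: it writes $Q_h(s,a)-Q^{\dagger*}_h(s,a)\ge \min\{b_h(s,a)+(\hat{P}^{\dagger}_{s,a}-P^{\dagger}_{s,a})V_{h+1}+P^{\dagger}_{s,a}(V_{h+1}-V^{\dagger*}_{h+1}),0\}$, drops the last term by the induction hypothesis (monotonicity under the \emph{true} $P^{\dagger}_{s,a}$), and then needs concentration only for $(\hat{P}^{\dagger}_{s,a}-P^{\dagger}_{s,a})V_{h+1}$ --- the \emph{same} vector $V_{h+1}$ whose empirical variance sits inside $b_h$, so dominance is immediate (comparing $\mathbb{V}(\hat{P}_{s,a},V_{h+1})$ with $\mathbb{V}(\hat{P}^{\dagger}_{s,a},V_{h+1})$ for one fixed vector is a trivial mixture computation). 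The data-dependence of $V_{h+1}$ is harmless precisely because the good event $\mathcal{G}$ (Lemma~\ref{lemma:aux0}) is uniform over all $v\in[0,1]^S$ via the $\epsilon_1$-net $\mathcal{L}$; that is also what the extra $3\epsilon_1$ slack in the algorithm is for. So the fix is to swap your induction step so the transition-estimation error acts on $V_{h+1}$ rather than on $V^{\dagger*}_{h+1}$. Your side worry about $\mathcal{X}_{K+1}$ dissolves for a different reason than you guessed: $Z_{K+1}=1$, so $\hat{P}^{\dagger}_{s,a}=P^{\dagger}_{s,a}=\mathbf{1}_{s_{\mathrm{end}}}$ exactly on $\mathcal{X}_{K+1}$ and no concentration is needed there.
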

We postpone the proof of Lemma \ref{lemma:Qopt} to Appendix.\ref{sec:mpf_lemma_Qopt} due to limitation of space.

Without loss of generality, we assume $\mu = \textbf{1}_{s_1}$.
Now we bound the gap $V_{1}^*(s_1)-V^{\pi}_1(s_1)$. Assuming $\mathcal{G}$ holds, for any $(s,a)\in \mathcal{S}\times\mathcal{A}$, we have
\begin{align}
Q_{h}(s,a)-r_h(s,a)-P^{\dagger}_{s,a}V_{h+1}& \leq b_h(s,a)+ (\hat{P}^{\dagger}_{s,a}-P_{s,a}^{\dagger})V_{h+1} \nonumber
\\& = b_h(s,a) +(1-\frac{1}{Z_{i(s,a)}})\sum_{s'}\left(\hat{P}_{s,a,s'}-P_{s,a,s'}\right)\cdot\left(V_{h+1}(s')- P^{\dagger}_{s,a}V_{h+1} \right) \nonumber
\\ & \leq b_h(s,a) + \sqrt{\frac{ 2\mathbb{V}(P^{\dagger},V_{h+1})\iota_1 }{N(s,a)}}+\frac{14\iota_1}{3N(s,a)}+3\epsilon_1\nonumber
\\ & =  3\sqrt{\frac{ \mathbb{V}(\hat{P}^{\dagger}_{s,a},V_{h+1} )\iota_1 }{N(s,a)}}   +\sqrt{ \frac{2 \mathbb{V}(P_{s,a}^{\dagger}, V_{h+1} )\iota_1}{N(s,a)}}+         \frac{5\iota_1}{N(s,a)}+6\epsilon_1.\label{eq:local9}
\end{align}

Define $u$ be the vector such that $u_{s} = (V_{h+1}(s') -P^{\dagger}_{s,a}V_{h+1} )^2$. By Lemma~\ref{lemma:aux0}, we have that
\begin{align}
   \mathbb{V}(\hat{P}^{\dagger}_{s,a}, V_{h+1}) & \leq  \hat{P}^{\dagger}_{s,a}u^2\nonumber
   \\ & \leq P^{\dagger}_{s,a}u^2+ \sqrt{\frac{\mathbb{V}(P^{\dagger}_{s,a},u^2 ) \iota_1 }{N(s,a)}}+ \frac{\iota_1}{3N(s,a)}+3\epsilon_1
  \\ & \leq \frac{3}{2}P^{\dagger}_{s,a}u^2 +\frac{4\iota_1}{3N(s,a)}+ 3\epsilon_{1} 
  \\& = \frac{3}{2}\mathbb{V}(P^{\dagger}_{s,a},V_{h+1})+\frac{4\iota_1}{3N(s,a)}+ 3\epsilon_{1} .\label{eq:local10}
\end{align}


Define $\beta_h(s,a): =\min\{6 \sqrt{ \frac{ \mathbb{V}(P_{s,a}^{\dagger}, V_{h+1} )\iota_1}{N(s,a)}}  + \frac{9\iota_1}{N(s,a)}+9\epsilon_1, 1 \}.$ Combining \eqref{eq:local9}, \eqref{eq:local10} and the trivial bound $Q_h(s,a)\leq 1$, we have that
\begin{align}
    Q_{h}(s,a)-r_h(s,a)-P^{\dagger}_{s,a}V_{h+1}& \leq \beta_h(s,a).\label{eq:glo4}
\end{align}
Then we have
\begin{align}
V_{1}^{\dagger*}(s_1)-V^{\dagger\pi}_1(s_1) & \leq  V_1(s_1)-V^{\dagger\pi}_1(s_1)\nonumber
\\ & \leq \beta_1(s_1,a_1) +P^{\dagger}_{s_1,a_1}(V_{2}-V_{2}^{\dagger\pi} ) \nonumber
\\ & \leq ...
\\ & \leq \sum_{s,a,h}w_h(s,a,\pi) \beta_h(s,a),\nonumber
\end{align}
where $w_h(s,a,\pi): = \mathbb{E}_{\pi,\mathcal{M}^{\dagger}}\left[ \mathbb{I}\left[ (s_h,a_h) =(s,a)\right]  \right]$ .
Define $\omega_{i}^{\dagger}(\pi) = \sum_{(s,a)\in \mathcal{X}_i}\sum_h w_h(s,a,\pi)$ for $1\leq i \leq K+1$. Define $t_{Z_i}$ be $Z_i$-th time such that $(s_h,a_h)\in \mathcal{X}_i$ if $\sum_{h=1}^H \mathbb{I}\left[(s_h,a_h)\in \mathcal{X}_i \right]\geq Z_i$ and otherwise $H+1$.
We have
\begin{align}
&\omega_{i}^{\dagger}(\pi)  = \mathbb{E}_{\pi,\mathcal{M}^{\dagger}} \left[\sum_{h=1}^H \mathbb{I}\left[(s_h,a_h)\in \mathcal{X}_i \right]  \right] \nonumber
\\ & = \mathbb{E}_{\pi, \mathcal{M}^{\dagger}} \left[\min \{\sum_{h=1}^H \mathbb{I}\left[(s_h,a_h)\in \mathcal{X}_i \right],Z_{i}  \} \right]  +    \mathbb{E}_{\pi, \mathcal{M}^{\dagger}} \left[ \mathbb{I}\left[\sum_{h=1}^H \mathbb{I}\left[(s_h,a_h)\in \mathcal{X}_i \right]  \geq Z_i  \right]\cdot \left(\sum_{h=1}^H \mathbb{I}\left[(s_h,a_h)\in \mathcal{X}_i \right] -Z_i\right)\right]\nonumber\nonumber
\\ & \leq  \mathbb{E}_{\pi, \mathcal{M}} \left[\min \{\sum_{h=1}^H \mathbb{I}\left[(s_h,a_h)\in \mathcal{X}_i \right],Z_{i}  \}\right]  +   \mathbb{E}_{\pi, \mathcal{M}^{\dagger}} \left[ \mathbb{I}\left[\sum_{h=1}^H \mathbb{I}\left[(s_h,a_h)\in \mathcal{X}_i \right]  \geq Z_i  \right]\cdot \left(\sum_{h=1}^H \mathbb{I}\left[(s_h,a_h)\in \mathcal{X}_i \right] -Z_i\right)\right]\nonumber
\\ & \leq O\left( \frac{H}{2^i}\right) +\sum_{h'=1}^H \mathbb{P}(t_{Z_i}=h')\mathbb{E}_{\pi,\mathcal{M}^{\dagger}}\left[ \sum_{h=h'+1}^H \mathbb{I}\left[(s_h,a_h)\in \mathcal{X}_i \right]  \Big|t_{Z_i}=h' \right]\nonumber
\\ & \leq  O\left( \frac{H}{2^i}\right)  +\sum_{h'=1}^H \mathbb{P}(t_{Z_i}=h')\frac{1}{Z_i}\nonumber
\\ & \leq  O\left(\frac{H}{2^i}+Z_iK\epsilon \right)\nonumber
\\ & \leq O\left(   \frac{HK}{2^i}   \right).\nonumber
\end{align}
 
Combining this with the fact that $\iota+6S\ln(SAH/\epsilon)\geq  \iota +S(4\ln(H)+3\ln(T_0/\iota))) =\iota +S\ln(T_0^3H^4/\iota^3) \iota_1$ (assuming $S,A,H\geq 10$), we have
\begin{align}
&\sum_{s,a,h} w_h(s,a,\pi)\beta_h(s,a) \nonumber \\& \leq  O\left( \sum_{(s,a)\notin \mathcal{X}_{K+1}}\sum_{h}w_h(s,a,\pi)\sqrt{\frac{ \mathbb{V}(P^{\dagger}_{s,a},V_{h+1} ) \iota_1}{ N(s,a) }}+ \sum_{(s,a)\notin \mathcal{X}_{K+1}}\sum_h w_h(s,a,\pi)\frac{\iota_1}{N(s,a)} \right) + w^{\dagger}_{K+1}(\pi)\nonumber
\\ & \leq \sum_{i=1}^K O\left(  \sum_{(s,a)\in \mathcal{X}_i }\sum_h w_h(s,a,\pi)\sqrt{\frac{ \mathbb{V}(P^{\dagger}_{s,a},V_{h+1} )\iota_1 }{ N_i}}  +   \sum_{(s,a)\in \mathcal{X}_i }\sum_h w_h(s,a,\pi)\frac{\iota_1}{N_i} \right) + O(K\epsilon)\nonumber
\\ & \leq \sum_{i=1}^K O\left(           \sqrt{\frac{w^{\dagger}_i(\pi)}{N_i}}\cdot \sqrt{ \sum_{h=1}^H w_h(s,a,\pi) \mathbb{V}(P^{\dagger}_{s,a}, V_{h+1})\iota_1 }  + \frac{w_{i}^{\dagger}(\pi)\iota_1}{N_i}   \right) +O(K\epsilon). \nonumber
\end{align}
Note that
\begin{align}
& \sum_{h=1}^H w_h(s,a,\pi) \mathbb{V}(P^{\dagger}_{s,a},V_{h+1})  \nonumber
\\ &  = \mathbb{E}_{\pi,\mathcal{M}^{\dagger}}\left[   \sum_{h=1}^H\left( P^{\dagger}_{s,a}(V_{h+1})^2 - (  P^{\dagger}_{s,a} V_{h+1}   )^2   \right)  \right] \nonumber
\\ &  \leq \mathbb{E}_{\pi,\mathcal{M}^{\dagger} } \left[ \sum_{h=1}^{H} (V_{h}(s_{h})  )^2  - (  P^{\dagger}_{s,a} V_{h+1}   )^2    \right] \nonumber
\\ & \leq 2\mathbb{E}_{\pi,\mathcal{M}^{\dagger}}\left[   \sum_{h=1}^H \left(r(s_h,a_h) +\beta_h(s_h,a_h) \right)\right]\nonumber 
\\ & \leq  2 + 2\sum_{s,a,h}w_{h}(s,a,\pi)\beta_h(s,a) .\nonumber
\end{align}
We then have
\begin{align}
\sum_{s,a,h}w_h(s,a,\pi)\beta_h(s,a) &\leq O\left( M\epsilon \sqrt{2+2\sum_{s,a,h}w_h(s,a,\pi)\beta_h(s,a)} +M^2\epsilon^2  +M\epsilon \right) .\nonumber
\end{align}
By solving the inequality $x\leq O(K\epsilon \sqrt{2+2x}+K^2\epsilon^2)$, we learn that 
\begin{align}
V^{\dagger*}_1(s_1)-V^{\dagger \pi}_1(s_1)\leq \sum_{s,a,h}w_h(s,a,\pi)\beta_h(s,a)\leq O\left(K\epsilon + K^2\epsilon^2  \right) .\nonumber
\end{align}
Recall that by Lemma \ref{lemma:approx}, we have $|V_{1}^{\pi}(s_1)-V^{\dagger \pi}_1(s_1)|\leq O\left((K+1)^2\epsilon \right)$ and $|V^{\dagger*}_1(s_1) -V^{*}_1(s_1)|\leq O\left((K+1)^2\epsilon \right)$. We then finally conclude that
\begin{align}
    V^{*}_1(s_1)-V^{ \pi}_1(s_1)\leq \sum_{s,a,h}w_h(s,a,\pi)\beta_h(s,a)\leq O\left(K^2\epsilon \right)   .\nonumber
\end{align}
By rescaling $\epsilon$, the proof is completed.

\bibliography{ref}
\bibliographystyle{plainnat}

\newpage
\appendix
\paragraph{Organization} In Appendix~\ref{app:tec_lemma}, we present some basic technical lemmas. In Appendix~\ref{sec:mpf_sam} and \ref{sec:mpf_plan} we give the missing proofs in Section~\ref{sec:pro}. The left missing proofs are presented in Appendix~\ref{app:omf}.

\section{Technical Lemmas}\label{app:tec_lemma}
\begin{lemma}[Bennet's Inequality]\label{bennet}
Let $Z,Z_1,...,Z_n$  be i.i.d. random variables with values in $[0,1]$ and let $\delta>0$. Define $\mathbb{V}Z = \mathbb{E}\left[(Z-\mathbb{E}Z)^2 \right]$. Then we have
\begin{align}
\mathbb{P}\left[ \left|\mathbb{E}\left[Z\right]-\frac{1}{n}\sum_{i=1}^n Z_i  \right| > \sqrt{\frac{  2\mathbb{V}Z \ln(2/\delta)}{n}} +\frac{\ln(2/\delta)}{n} \right]]\leq \delta.\nonumber
\end{align}
\end{lemma}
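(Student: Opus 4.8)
The plan is to follow the classical Cram\'er--Chernoff argument. First I would reduce to a one-sided, centered statement: set $X_i := Z_i - \mathbb{E}Z$, so that $\mathbb{E}X_i = 0$, $|X_i|\le 1$ (since $Z_i\in[0,1]$ forces $\mathbb{E}Z\in[0,1]$), and $\mathbb{E}X_i^2 = \mathbb{V}Z$. Writing $\sigma^2 := \mathbb{V}Z$ and $L := \ln(2/\delta)$, it suffices to prove $\mathbb{P}\left[\frac{1}{n}\sum_{i=1}^n X_i > \sqrt{2\sigma^2 L / n} + L/n\right] \le \delta/2$: the same estimate applied to $-X_1,\dots,-X_n$ (which share the same mean, range, and variance) controls the lower tail, and a union bound over the two tails gives the two-sided claim.

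For the one-sided bound I would first apply Markov's inequality to $\exp(\lambda \sum_i X_i)$ with $\lambda>0$, using independence, to get $\mathbb{P}\left[\sum_i X_i > nt\right] \le e^{-\lambda n t}\,(\mathbb{E}\, e^{\lambda X_1})^n$. The key moment-generating-function estimate is that $|X_1|\le 1$ gives $\mathbb{E}[X_1^k] \le \mathbb{E}[X_1^2] = \sigma^2$ for every integer $k\ge 2$, so expanding $e^{\lambda X_1}$ term by term (the interchange justified by dominated convergence) yields $\mathbb{E}\, e^{\lambda X_1} \le 1 + \sigma^2(e^{\lambda}-1-\lambda) \le \exp\!\big(\sigma^2(e^{\lambda}-1-\lambda)\big)$. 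Substituting back and optimizing over $\lambda>0$ (the optimizer is $\lambda = \ln(1 + t/\sigma^2)$) produces Bennett's inequality $\mathbb{P}\left[\frac{1}{n}\sum_i X_i > t\right] \le \exp\!\big(-n\sigma^2 h(t/\sigma^2)\big)$ with $h(u) = (1+u)\ln(1+u) - u$.

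Finally I would convert Bennett's form into the stated additive form. Using the elementary inequality $h(u) \ge \frac{u^2}{2(1+u/3)}$ for $u \ge 0$, the tail bound becomes $\exp\!\big(-\frac{nt^2}{2(\sigma^2 + t/3)}\big)$; setting this equal to $\delta/2$ and solving the resulting quadratic in $t$ shows the bound holds whenever $t \ge \sqrt{2\sigma^2 L/n} + \frac{2L}{3n}$, and since $\frac{2L}{3n} \le \frac{L}{n}$ it holds in particular at the threshold in the lemma statement. There is no serious obstacle here since this is a standard concentration inequality; the only points requiring care are the justification of the termwise MGF expansion and the verification of the scalar inequality $h(u) \ge \frac{u^2}{2(1+u/3)}$ (checked by comparing the two sides and their derivatives at $u=0$, or directly from the power series of $h$). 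Alternatively one can bypass $h$ entirely: bound $\mathbb{E}\, e^{\lambda X_1} \le \exp\!\big(\frac{\lambda^2 \sigma^2/2}{1-\lambda/3}\big)$ for $0<\lambda<3$, insert this into the Chernoff bound, optimize over $\lambda$, and invert directly to the Bernstein form.
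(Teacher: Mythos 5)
Your proposal is correct: the Cram\'er--Chernoff argument with the moment bound $\mathbb{E}[X_1^k]\le\sigma^2$ for $k\ge 2$, Bennett's function $h(u)=(1+u)\ln(1+u)-u$, the inequality $h(u)\ge \tfrac{u^2}{2(1+u/3)}$, and the quadratic inversion indeed give the one-sided tail at threshold $\sqrt{2\sigma^2 L/n}+\tfrac{2L}{3n}\le\sqrt{2\sigma^2 L/n}+\tfrac{L}{n}$, and the union bound over both tails with $L=\ln(2/\delta)$ yields the stated two-sided bound. The paper states this lemma as a standard concentration inequality without proof, and your derivation is exactly the classical Bennett-to-Bernstein argument it implicitly relies on, so there is nothing to reconcile.
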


\begin{lemma}[Theorem 4 in  \cite{maurer2009empirical}  ]\label{empirical bernstein}
Let $Z,Z_1,...,Z_n$ ($n\geq 2$) be i.i.d. random variables with values in $[0,1]$ and let $\delta>0$. Define $\bar{Z} = \frac{1}{n}\sum_{i=1}^n Z_{i}$ and $\hat{V}_n  = \frac{1}{n}\sum_{i=1}^n (Z_i- \bar{Z})^2$. Then we have
\begin{align}
\mathbb{P}\left[ \left|\mathbb{E}\left[Z\right]-\frac{1}{n}\sum_{i=1}^n Z_i  \right| > \sqrt{\frac{  2\hat{V}_n \ln(2/\delta)}{n-1}} +\frac{7\ln(2/\delta)}{3(n-1)} \right] \leq \delta.\nonumber
\end{align}
\end{lemma}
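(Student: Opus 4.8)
The plan is to deduce the empirical Bernstein inequality by combining the population Bernstein inequality (Lemma~\ref{bennet}) with a one-sided, high-probability bound that controls the true variance $\sigma^2:=\mathrm{Var}(Z)$ in terms of the empirical variance $\hat V_n$. First I would apply Lemma~\ref{bennet} with confidence $\delta/2$ to the i.i.d. sample $Z_1,\dots,Z_n$: with probability at least $1-\delta/2$,
\[
\left|\mathbb{E}[Z]-\bar Z\right|\ \le\ \sqrt{\frac{2\sigma^2\ln(4/\delta)}{n}}+\frac{\ln(4/\delta)}{n}.
\]
It then remains to show that, on a further event of probability at least $1-\delta/2$, one has $\sigma^2\le\frac{n}{n-1}\hat V_n+O\!\big(\tfrac{\ln(1/\delta)}{n-1}\big)$, since substituting this into the display, using $\sqrt{a+b}\le\sqrt a+\sqrt b$, taking a union bound, and rescaling $\delta$ by an absolute constant produces the stated inequality (the constant $7/3$ absorbs the lower-order terms).

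For the variance comparison I would proceed as follows. Writing $\mu=\mathbb{E}[Z]$, the identity $\sum_i(Z_i-\bar Z)^2=\sum_i(Z_i-\mu)^2-n(\bar Z-\mu)^2$ gives $\hat V_n\ge\tfrac1n\sum_i(Z_i-\mu)^2-(\bar Z-\mu)^2$. The variables $W_i:=(Z_i-\mu)^2$ are i.i.d., lie in $[0,1]$, have mean $\sigma^2$, and have variance at most $\mathbb{E}[W_i^2]\le\mathbb{E}[W_i]=\sigma^2$; a second application of Lemma~\ref{bennet} then lower-bounds $\tfrac1n\sum_iW_i\ge\sigma^2-\sqrt{2\sigma^2\ln(8/\delta)/n}-\ln(8/\delta)/n$ with probability $\ge1-\delta/4$, while Hoeffding's inequality gives $(\bar Z-\mu)^2\le\ln(8/\delta)/(2n)$ with probability $\ge1-\delta/4$. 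Combining these, $\hat V_n\ge\sigma^2-\sqrt{2\sigma^2\ln(8/\delta)/n}-\tfrac32\ln(8/\delta)/n$ with probability $\ge1-\delta/2$; viewing this as a quadratic inequality in $\sigma$ (complete the square in $\sqrt{\sigma^2}$) and solving yields the required upper bound on $\sigma^2$ up to absolute constants, and squaring/substituting into the population bound closes the argument.

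The main obstacle is the tracking of absolute constants. Even with $\mathrm{Var}(W_i)\le\sigma^2$ exploited so that one uses Bernstein rather than Hoeffding on the $W_i$ (which keeps the final remainder at $O(\ln/n)$ rather than $O((\ln/n)^{3/4})$), solving the quadratic and squaring back loses a factor of $\sqrt2$ in front of $\sqrt{\hat V_n\ln/n}$ relative to the stated bound, because the comparison error enters under a square root. Recovering the tight coefficient requires either treating separately the regime $\sigma^2\lesssim\ln/n$ (where the linear term already dominates the Bernstein bound) and the regime $\sigma^2\gtrsim\ln/n$ (where $\hat V_n=(1\pm o(1))\sigma^2$), or — following Maurer and Pontil — obtaining the sharp one-sided comparison $\sqrt{\hat V_n}\ge\sigma-\sqrt{2\ln(1/\delta)/(n-1)}$ directly from a self-bounding / exponential Efron–Stein concentration inequality applied to the map $(z_1,\dots,z_n)\mapsto\sqrt{\tfrac1{n-1}\sum_i(z_i-\bar z)^2}$, which changes by at most $\sqrt{2/(n-1)}$ when a single coordinate is moved within $[0,1]$ and whose Efron–Stein variance proxy is $O(1/n)$, and then squaring and substituting. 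I would pursue the Maurer–Pontil route, since it isolates the single delicate estimate while keeping every constant explicit.
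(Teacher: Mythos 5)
The paper does not actually prove this lemma: it is imported verbatim (up to the normalization of the variance estimator) from Theorem~4 of \cite{maurer2009empirical}, and your committed route --- Bernstein's inequality for $\bar Z$ around $\mathbb{E}Z$ with the true variance $\sigma^2$, plus the one-sided comparison $\sigma\le\sqrt{V_n}+\sqrt{2\ln(1/\delta)/(n-1)}$ for the unbiased estimator $V_n=\frac{1}{n-1}\sum_i(Z_i-\bar Z)^2$, a union bound, and the bookkeeping $\sqrt{2/(n-1)}\cdot\sqrt{2/n}\,\ln(\cdot)+\ln(\cdot)/(3n)\le\tfrac{7}{3}\ln(\cdot)/(n-1)$ --- is exactly how that theorem is obtained there. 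Your diagnosis that the naive ``estimate $\sigma^2$ by $\hat V_n$ and solve a quadratic'' route loses a factor $\sqrt2$ is also correct. Two points of hygiene: state the comparison for $V_n$ (the $1/(n-1)$-normalized quantity, which is the map you actually wrote), since then $\sqrt{2V_n\ln(2/\delta)/n}=\sqrt{2\hat V_n\ln(2/\delta)/(n-1)}$ drops out with no further manipulation; and use Bernstein with the $\ln(\cdot)/(3n)$ linear term (not the $\ln(\cdot)/n$ form of Lemma~\ref{bennet}), or the final constant becomes $3$ rather than $7/3$.

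The one place your sketch would fail if taken literally is the justification of the key comparison. A per-coordinate oscillation bound of $\sqrt{2/(n-1)}$ (in fact $1/\sqrt{n}$) for $z\mapsto\sqrt{V_n(z)}$ gives a worst-case bounded-differences or Efron--Stein proxy of order $n\cdot\tfrac{2}{n-1}=O(1)$, not $O(1/n)$, so a generic McDiarmid or exponential Efron--Stein argument only yields deviations of order $\sqrt{\ln(1/\delta)}$ for $\sqrt{V_n}$, which is useless here. What actually produces the $\sqrt{2\ln(1/\delta)/(n-1)}$ deviation in Maurer--Pontil is the lower-tail inequality for \emph{self-bounding} functions applied to $g=(n-1)V_n$, namely $\mathbb{P}\left[g\le\mathbb{E}g-t\right]\le\exp\left(-t^2/(2\,\mathbb{E}g)\right)$ with $\mathbb{E}g=(n-1)\sigma^2$; the sub-Gaussian variance factor is proportional to $\sigma^2$ itself rather than to a worst-case Lipschitz constant, which is precisely what makes the bound sharp uniformly in $\sigma$. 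Setting the right-hand side to $\delta$ gives $V_n\ge\sigma^2-\sigma\sqrt{2\ln(1/\delta)/(n-1)}$, and completing the square in $\sigma$ yields $\sigma\le\sqrt{V_n}+\sqrt{2\ln(1/\delta)/(n-1)}$. With that substitution your outline closes exactly as you describe; without it, the stated coefficient in front of $\sqrt{\hat V_n\ln(2/\delta)/(n-1)}$ cannot be recovered.
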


\begin{lemma}[Lemma 10 in \cite{zhang2020model}]\label{self-norm}
Let $(M_n)_{n\geq 0}$ be a martingale such that $M_0=0$ and $|M_n-M_{n-1}|\leq c$ for some $c>0$ and any $n\geq 1$. Let $\mathrm{Var}_{n} = \sum_{k=1}^n \mathbb{E}\left[  (M_{k}-M_{k-1})^2 |\mathcal{F}_{k-1}\right]$ for $n\geq 0$, where $\mathcal{F}_k = \sigma(M_1,...,M_{k})$. Then for any positive integer $n$, and any $\epsilon,\delta>0$, we have that
\begin{align}
\mathbb{P} \left[       |M_n|\geq 2\sqrt{2}\sqrt{\mathrm{Var}_n \ln(1/\delta)} +2\sqrt{\epsilon \ln(1/\delta)} +2c\ln(1/\delta) \right]\leq 2(\log_2(\frac{nc^2}{\epsilon}) +1)\delta.\nonumber
\end{align}
\end{lemma}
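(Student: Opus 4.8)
The plan is to obtain this variance-adaptive (self-normalized) deviation bound by combining the classical Freedman martingale inequality with a \emph{peeling} argument over the random predictable quadratic variation $\mathrm{Var}_n$; this result is not among the lemmas quoted above, but this is the standard route.

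First I would recall Freedman's inequality in its deterministic-budget form: for a martingale $(M_k)$ with $M_0=0$ and increments bounded by $c$, and for any fixed $\sigma^2>0$ and $t>0$,
\[
\mathbb{P}\!\left[M_n\ge t,\ \mathrm{Var}_n\le\sigma^2\right]\le \exp\!\left(-\frac{t^2/2}{\sigma^2+ct/3}\right).
\]
The version with $\mathrm{Var}_n$ inside the event is legitimized by stopping at $\tau=\min\{k\ge 0:\mathrm{Var}_{k+1}>\sigma^2\}$, which is a stopping time because $\mathrm{Var}$ is $\mathcal{F}$-predictable; applying the budgeted Freedman bound to the stopped martingale $(M_{k\wedge\tau})$ (whose predictable variation is at most $\sigma^2$ surely) gives the display, since $M_{n\wedge\tau}=M_n$ on $\{\mathrm{Var}_n\le\sigma^2\}$. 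Applying this also to $-M_n$, and taking the threshold $t_\sigma:=2\sqrt{\sigma^2\ln(1/\delta)}+2c\ln(1/\delta)$, one checks by splitting $t_\sigma^2/2$ into two quarters — one $\ge\sigma^2\ln(1/\delta)$ since $t_\sigma\ge 2\sqrt{\sigma^2\ln(1/\delta)}$, the other $\ge(ct_\sigma/3)\ln(1/\delta)$ since $t_\sigma\ge\tfrac{4}{3}c\ln(1/\delta)$ — that $\mathbb{P}[\,|M_n|\ge t_\sigma,\ \mathrm{Var}_n\le\sigma^2\,]\le 2\delta$.

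Next, since every conditional second moment is at most $c^2$, $\mathrm{Var}_n\le nc^2$ surely. Set $J:=\lceil\log_2(nc^2/\epsilon)\rceil$ and partition $[0,nc^2]$ into $I_0=[0,\epsilon]$ and $I_j=(\epsilon 2^{j-1},\epsilon 2^{j}]$ for $1\le j\le J$, assigning to $I_j$ the variance budget $\sigma_j^2=\epsilon 2^{j}$. Then $\mathrm{Var}_n\in I_j$ implies $\mathrm{Var}_n\le\sigma_j^2$, and for $j\ge1$ it also implies $\sigma_j^2\le 2\mathrm{Var}_n$, so $t_{\sigma_j}\le 2\sqrt{2\mathrm{Var}_n\ln(1/\delta)}+2c\ln(1/\delta)$; on $I_0$ we have $t_{\sigma_0}=2\sqrt{\epsilon\ln(1/\delta)}+2c\ln(1/\delta)$. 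In either case $t_{\sigma_j}\le 2\sqrt2\sqrt{\mathrm{Var}_n\ln(1/\delta)}+2\sqrt{\epsilon\ln(1/\delta)}+2c\ln(1/\delta)=:\Phi$, where the additive $2\sqrt{\epsilon\ln(1/\delta)}$ covers the $j=0$ bucket and the $2\sqrt2$ absorbs the factor-$2$ slack of the peeling. Hence
\[
\mathbb{P}[\,|M_n|\ge\Phi\,]=\sum_{j=0}^{J}\mathbb{P}[\,|M_n|\ge\Phi,\ \mathrm{Var}_n\in I_j\,]\le\sum_{j=0}^{J}\mathbb{P}[\,|M_n|\ge t_{\sigma_j},\ \mathrm{Var}_n\le\sigma_j^2\,]\le 2(J+1)\delta,
\]
and $J+1\le\log_2(nc^2/\epsilon)+1$ up to the harmless rounding in $\lceil\cdot\rceil$, giving the claim.

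There is no conceptual difficulty beyond Freedman's inequality itself; the main obstacle is bookkeeping — (i) the predictable-stopping-time argument that puts $\mathrm{Var}_n$ inside the event, and (ii) tracking constants through the two-quarters estimate of $t_\sigma^2$, the factor-$2$ loss in $\sigma_j^2\le 2\mathrm{Var}_n$, and the exact count of buckets — so that everything lands under the stated threshold with precisely the coefficient $2\sqrt2$, the additive $2\sqrt{\epsilon\ln(1/\delta)}$, and the $2(\log_2(nc^2/\epsilon)+1)$ failure probability.
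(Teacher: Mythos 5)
Your Freedman-plus-peeling argument is correct and is the standard proof of this bound; the paper itself does not prove the statement but imports it verbatim as Lemma 10 of the cited reference, whose argument follows the same route (restrict to a dyadic budget for the predictable variation via the predictable stopping time, apply Freedman's inequality with threshold $2\sqrt{\sigma^2\ln(1/\delta)}+2c\ln(1/\delta)$, and union bound over the logarithmically many budgets). The only caveat is bookkeeping: with buckets anchored at $\epsilon$ you use $\lceil\log_2(nc^2/\epsilon)\rceil+1$ groups, which can exceed the stated count $\log_2(nc^2/\epsilon)+1$ by one when the ratio is not a power of two, so the failure probability you actually prove is $2(\lceil\log_2(nc^2/\epsilon)\rceil+1)\delta$; this is harmless for every use in the paper, but it is better to say so explicitly than to call the rounding negligible.
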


\begin{lemma}[Lemma 11 in \cite{zhang2020reinforcement}]\label{lemma2}
Let $\lambda_1,\lambda_2,\lambda_4\geq 0$, $\lambda_3\geq 1$ and $i' =\log_2(\lambda_1)$.	Let $a_{1},a_{2},...,a_{i'}$ be non-negative reals such that $a_{i}\leq \lambda_{1}$ and $a_{i}\leq \lambda_{2}\sqrt{a_{i+1}+ 2^{i+1} \lambda_3 } +\lambda_4$ for any $1\leq i\leq i'$. 
	 Then we have that $a_{1}\leq \max\{ (\lambda_2 +\sqrt{\lambda_2^2+\lambda_4} )^2  ,\lambda_{2}\sqrt{8\lambda_3}  +\lambda_4 \}$   .
\end{lemma}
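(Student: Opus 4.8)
Since Lemma~\ref{lemma2} is an elementary fact about a scalar recursion (indeed it is quoted as Lemma~11 of \cite{zhang2020reinforcement}), the plan is to prove it by a backward induction on the level index $i$, carrying the invariant
\[
a_i \;\le\; \max\Bigl\{\bigl(\lambda_2+\sqrt{\lambda_2^2+\lambda_4}\bigr)^2,\;\; \lambda_2\sqrt{2^{\,i+2}\lambda_3}+\lambda_4\Bigr\},
\]
which at $i=1$ is exactly the claimed bound (note $2^{1+2}=8$). Write $A=(\lambda_2+\sqrt{\lambda_2^2+\lambda_4})^2$ for the first term and $B_i=\lambda_2\sqrt{2^{\,i+2}\lambda_3}+\lambda_4$ for the second. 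Two elementary facts drive the argument: (i) $A$ dominates the fixed point of $x\mapsto\lambda_2\sqrt{x}+\lambda_4$, and more precisely a direct computation gives $\lambda_2\sqrt{A}+\lambda_4\le A-\lambda_2^2$ as well as $2\lambda_2\sqrt{A}+\lambda_4=A$; (ii) the geometric budget contracts by a factor $\sqrt2$ per level, i.e.\ $2^{\,i+1}\lambda_3$ is one quarter of $2^{\,i+3}\lambda_3$, so $B_{i+1}-\lambda_4=\sqrt2\,(B_i-\lambda_4)$.

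For the base case $i=i'$ I would use $a_{i'+1}\le\lambda_1=2^{i'}$ together with $\lambda_3\ge1$ to get $a_{i'+1}+2^{\,i'+1}\lambda_3\le 2^{i'}+2^{\,i'+1}\lambda_3\le 2^{\,i'+2}\lambda_3$, hence $a_{i'}\le\lambda_2\sqrt{2^{\,i'+2}\lambda_3}+\lambda_4=B_{i'}$, which lies inside the invariant. For the inductive step I would assume the invariant at level $i+1$ and substitute into $a_i\le\lambda_2\sqrt{a_{i+1}+2^{\,i+1}\lambda_3}+\lambda_4$, splitting on which term realizes the max for $a_{i+1}$. If $a_{i+1}\le A$: when $2^{\,i+1}\lambda_3\le A$, bound $a_{i+1}+2^{\,i+1}\lambda_3\le 2A$ and use $\lambda_2\sqrt{2A}+\lambda_4=\sqrt2\lambda_2\sqrt{A}+\lambda_4\le 2\lambda_2\sqrt{A}+\lambda_4=A$; when $2^{\,i+1}\lambda_3>A\ge a_{i+1}$, bound $a_{i+1}+2^{\,i+1}\lambda_3<2\cdot2^{\,i+1}\lambda_3=2^{\,i+2}\lambda_3$, giving $a_i<B_i$. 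If instead $a_{i+1}\le B_{i+1}$ (the genuinely delicate branch), I would write $B_{i+1}+2^{\,i+1}\lambda_3=(\sqrt{2^{\,i+1}\lambda_3}+\lambda_2)^2+(\lambda_4-\lambda_2^2)$ and case-split on the sign of $\lambda_4-\lambda_2^2$ and on the relative sizes of $\lambda_2$ and $\sqrt{2^{\,i+1}\lambda_3}$, using $\sqrt{x+y}\le\sqrt{x}+\sqrt{y}$ and AM--GM to push the outcome either below $A$ via (i) or below $B_i$ via (ii).

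The step I expect to be the main obstacle is precisely this last branch: the square root keeps the freshly injected source term $2^{\,i+1}\lambda_3$ from telescoping cleanly against the inherited geometric budget $B_{i+1}$, so one must check carefully that the $\sqrt2$ contraction in (ii) really beats it, while simultaneously controlling the additive $\lambda_4$. If the crude $\sqrt{x+y}\le\sqrt{x}+\sqrt{y}$ split proves too lossy to close at the stated constant, the fallback is to weaken the invariant to $a_i\le\max\{A,\;B_i+\rho_i\}$ with a slack $\rho_i=c\sum_{j\ge i}2^{-(j-i)}\lambda_2^2$ that forms a convergent geometric series, propagate that, and absorb $\rho_1=O(\lambda_2^2)$ into $A$ at the end. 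Everything here is deterministic scalar manipulation; no concentration or MDP structure enters, and as an alternative one may simply invoke Lemma~11 of \cite{zhang2020reinforcement} verbatim.
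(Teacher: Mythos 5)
The paper itself gives no proof of this lemma --- it is imported verbatim as Lemma~11 of \citet{zhang2020reinforcement} --- so your argument has to stand on its own. Its skeleton is sound: the backward induction with invariant $a_i\le\max\{A,\,B_i\}$, where $A=(\lambda_2+\sqrt{\lambda_2^2+\lambda_4})^2$ and $B_i=\lambda_2\sqrt{2^{\,i+2}\lambda_3}+\lambda_4$, does propagate; the base case (using $\lambda_3\ge1$ and $a_{i'+1}\le\lambda_1=2^{i'}$ to absorb $2^{i'}$ into $2^{\,i'+2}\lambda_3$, which is the natural reading of the loosely stated recursion at $i=i'$) is correct; and the branch $a_{i+1}\le A$ is handled correctly, since your identities $2\lambda_2\sqrt{A}+\lambda_4=A$ and $\lambda_2\sqrt{A}+\lambda_4\le A-\lambda_2^2$ are both true.

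The genuine gap is exactly the branch you flagged, and as proposed it does not close. With $u=\sqrt{2^{\,i+1}\lambda_3}$ one has $B_{i+1}+2^{\,i+1}\lambda_3=(u+\lambda_2)^2+\lambda_4-\lambda_2^2$, and your route via $\sqrt{x+y}\le\sqrt{x}+\sqrt{y}$ gives, when $\lambda_4>\lambda_2^2$, only $a_i\le\lambda_2 u+\lambda_2^2+\lambda_2\sqrt{\lambda_4}+\lambda_4$; for intermediate $u$ (just below $(\sqrt2+1)(\lambda_2+\sqrt{\lambda_4})$) this exceeds both $B_i=\sqrt2\lambda_2 u+\lambda_4$ and $A=2\lambda_2^2+\lambda_4+2\lambda_2\sqrt{\lambda_2^2+\lambda_4}$, with an overshoot of order $\lambda_2\sqrt{\lambda_4}$ once $\lambda_4\gg\lambda_2^2$ (e.g.\ $\lambda_2=1$, $\lambda_4=100$ makes the slack bound roughly $3.4\sqrt{\lambda_4}$ against the $2\sqrt{\lambda_2^2+\lambda_4}$ available in $A$). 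Your fallback with slack $\rho_i=O(\lambda_2^2)$ therefore cannot absorb the error, and inflating $A$ would weaken the stated bound. The branch can be closed at the stated constants, but by a sharper argument than subadditivity: prove the single inequality $\lambda_2\sqrt{(u+\lambda_2)^2+\lambda_4-\lambda_2^2}+\lambda_4\le\max\{A,\ \sqrt2\lambda_2 u+\lambda_4\}$ for all $u\ge0$. Indeed, at the crossover $u^*=\sqrt2\sqrt{A}$ (where $\sqrt2\lambda_2 u^*+\lambda_4=A$) the left side is at most $A$, because squaring reduces the claim, via $2\lambda_2\sqrt{A}+\lambda_4=A$, to $2\sqrt2\lambda_2\sqrt{A}+\lambda_4\le 2A$; for $u\le u^*$ use monotonicity of the left side, and for $u\ge u^*$ check that the derivative of $\sqrt2\lambda_2 u+\lambda_4$ dominates that of the left side, which holds since $u^*\ge\sqrt2\lambda_2$. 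Substituting this for your crude split makes the induction deliver exactly the claimed $\max\{(\lambda_2+\sqrt{\lambda_2^2+\lambda_4})^2,\ \lambda_2\sqrt{8\lambda_3}+\lambda_4\}$; alternatively, invoking Lemma~11 of \citet{zhang2020reinforcement} verbatim is precisely what the paper does.
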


\section{Miss Proofs in Section \ref{sec:pf_sam}}\label{sec:mpf_sam}
\subsection{Proof of Lemma \ref{lemma:bd_Ri}}\label{sec:mpf_lemma_bd_Ri}

\textbf{Lemma \ref{lemma:bd_Ri} (Restate)}\emph{For any $1\leq i\leq K$, by running Algorithm \ref{alg2} with input $i$, with  probability $1-2(\log_2(T_0H)+1)\log_2(T_0 H)\delta -4SA(\log_2(Z_i)+2)H\delta$,  $R_{i}$ is bounded by  
\begin{align}
& O\left(Z_i \sqrt{SAl_i( \iota + S\ln(T_0^3H^4/\iota^3)) T_0}+ Z_il_i ( \iota + S\ln(T_0^3 H^4/\iota^3)) SA\right)
\\& = O\left(Z_i \sqrt{SAl_i( \iota + S\ln(SAH/\epsilon)) T_0}+ Z_il_i ( \iota + S\ln(SAH/\epsilon)) SA\right) \label{eq:add2},
\end{align}
where  $l_i = \log_2(Z_i)$.} Here \eqref{eq:add2} holds because $T_0$ is bounded by $\mathrm{poly}(S,A,\ln(H),1/\epsilon)\iota$.

To facilitate the proof, we introduce some additional notations. We use $V_h^k$, $Q_h^k$, $r^k$ and $P^{k,\ddagger}$ to denote respectively the value function, $Q$-function, reward function and transition probability of the extended MDP for the $k$-th episode. Besides, we use
$b^k_h(s,z,a)$, $\hat{P}^{k,\ddagger}_{s,z,a}$ and $n^k(s,a)$ to denote respectively the values of $b_h(s,z,a)$, $\hat{P}^{k,\ddagger}_{s,z,a}$ and $\max\{n(s,a),1 \}$ in \eqref{equpdate1} when computing $Q_h^k(s,z,a)$. We also use $\hat{P}^k_{s,a}$ to denote the empirical transition probability of $(s,a)$ in \eqref{equpdate1} when computing $Q_h^k(s,z,a)$.
 Recall that $P_{s,a}$   denotes  the true transition probability of $(s,a)$ under the original MDP

Let $i$ be fixed.
We first introduce the good event in the $i$-th stage. 

Recall $\epsilon_1 \min\{\frac{\iota}{T_0H} ,\frac{\iota^2}{T_0^2H^3} \} $ , $\delta_1 = \delta \epsilon_1^{S}$ and $\iota_1 = \ln(1/\delta_1) \leq \iota S(\ln(T_0H/\iota)+\ln(T_0^2H^3/\iota^2))$.
Also recall that $\mathcal{L} = \{[i_1\epsilon_1,i_2\epsilon_1,...,i_S \epsilon_1]^{T} | i_1,i_2,...,i_{S}\in \mathbb{Z}  \} \cap [0,Z_i]^{S}$. 
For $0\leq j \leq \log_2(Z_i)+1$,we define $\mathcal{G}_{i}^{(j)}(s,a)$ be the event where
\begin{align}
    & |(\hat{P}^{(j)}_{s,a}-P_{s,a})x| \leq \sqrt{\frac{2\mathbb{V}(P_{s,a},x)\iota_1}{2^j}}+ \frac{Z_i\iota_1}{3\cdot 2^j} ,\quad \forall x\in \mathcal{L} ;   \label{eq_gsa_1}
  \\   & |(\hat{P}^{(j)}_{s,a}-P_{s,a})x| \leq \sqrt{\frac{4\mathbb{V}(\hat{P}^{(j)}_{s,a},x)\iota_1}{2^j}}+ \frac{14Z_i\iota_1}{3\cdot 2^j} ,\quad \forall x\in \mathcal{L}     .  \label{eq_gsa_2}
\end{align}
 hold, where $\hat{P}^{(j)}_{s,a,s'}$ is the empirical transition probability computed by the first $2^j$ samples in the current stage. By Bernstein's inequality and empirical Bernstein inequality, we have that $\mathbb{P}\left[ \mathcal{G}_{i,j}(s,a) \right]\geq 1-4|\mathcal{L}|\delta_1 = 1-4\delta$. Then the good event $\mathcal{G}_i$ is defined as $\cup_{s,a,j} \mathcal{G}_{i}^{(j)}(s,a)$. Via a union bound over all possible $(s,a,j)$, we have that $\mathbb{P}\left[ \mathcal{G}_i \right]\geq 1-4S^2A(\log_2(Z_i)+2)\delta$. In the rest of this section, we will prove conditioned on $\mathcal{G}_i$.
 
 We first establish a concentration bound for $(P_{s,a}-\hat{P}^{(j)}_{s,a})v$ for $v\in [0,Z_i]^S$.
 Define $\mathrm{Proj}_{\mathcal{L}}(v) = \arg\min_{v'\in \mathcal{L}}\|v-v'\|_{1}$.
Conditioned on $\mathcal{G}_i$ holds, we have that for any $(s,a)$ and $v\in [0,Z_i]^S$,
\begin{align}
   | (P_{s,a}-\hat{P}^{(j)}_{s,a})v|\leq & \sqrt{\frac{2\iota_1 \mathbb{V}(P_{s,a},\mathrm{Proj}_{\mathcal{L}}(v) )}{2^j}} +\epsilon_{1}+ \frac{Z_i\iota_1}{3\cdot 2^j}\nonumber
  \\& \leq  \sqrt{\frac{2\iota_1 \mathbb{V}(P_{s,a},v )}{2^j}} +3\epsilon_{1} +\frac{Z_i\iota_1}{3\cdot 2^j}. \label{eq:rb_1}
\end{align}
Similarly, we have
\begin{align}
     | (P_{s,a}-\hat{P}^{(j)}_{s,a})v|\leq  \sqrt{\frac{4\iota_1 \mathbb{V}(\hat{P}_{s,a},v )}{2^j}} +3\epsilon_{1} +\frac{14Z_i\iota_1}{3\cdot 2^j} .\label{eq:rb_2}
\end{align}

For $1\leq k \leq T_0$, we 
let $r_h^k$ be shorthand of $r^k(s_h^k,z_h^k,a_h^k)$.
We define the optimal $Q$-function  for the extended MDP as
\begin{align}
  &  Q_h^{*k}(s,z,a)  = \sup_{\pi} \mathbb{E}_{\pi}\left[\sum_{h'=h}^{H}r^k_{h'}| (s_h^k,z_h^k,a_h^k) = (s,z,a) \right], \quad \forall  (s,z,a,h);\nonumber
\\  & V_h^{*k}(s,z) = \max_{a}Q_h^{*k}(s,z,a),\quad  \forall (s,z,a). \nonumber
\end{align}
By the definition of $r^k$, it is obvious that $Q_h^{*k}(s,z,a)\leq Z_i$ for any $(s,z,a,h)$.
Recall that
\begin{align}
    R_{i} = \sum_{k \text{ in stage} i}\left(\sup_{\pi}\mathbb{E}_{\pi}\left[\sum_{h=1}^H r_h^k \right] - \sum_{h=1}^H r_h^k \right).\nonumber
\end{align}
So it corresponds to bounding for Algorithm \ref{alg2} that
\begin{align}
    \text{Regret}:=  \sum_{k= 1}^{T_0}\left( V^{*k}_{1}(s_1,1) - \sum_{h=1}^H r_h^k \right).\nonumber
\end{align}

Define the policy $\pi^k$ by  $\pi^k_h(s,z) = \arg\max_{a} Q^k_h(s,z,a)$.
It is not hard to verify $Q^k$ is optimistic because the size of support of $\mathbb{P}(\cdot|s,z,a)$ is at most $S$ for any $(s,z,a)$. 
\begin{lemma}\label{lemma:alg2_Qopt}
Conditioned on $\mathcal{G}_i$, $Q^k_h(s,z,a)\geq Q^{*k}_h(s,z,a)$ for any $(s,z,a)\in \mathcal{S}\times [Z_i+1] \times \mathcal{A}$ and any $1\leq k\leq T_0$.
\end{lemma}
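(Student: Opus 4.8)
The plan is to prove the statement by backward induction on $h$, establishing simultaneously that $Q^k_h(s,z,a) \ge Q^{*k}_h(s,z,a)$ and $V^k_h(s,z) \ge V^{*k}_h(s,z)$ for every $(s,z,a)$ and every $1\le k\le T_0$. The base case $h = H+1$ is immediate since both sides are $0$. For the inductive step, fix $(s,z,a,h)$ and assume $V^k_{h+1} \ge V^{*k}_{h+1}$ pointwise; the claim for $V^k_h$ then follows from the claim for $Q^k_h$ by taking $\max_a$.

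First I would dispose of the truncation: since $Q^{*k}_h(s,z,a) \le Z_i$ always (at most $Z_i$ units of reward can be accumulated in $\mathcal{M}^k$, because the $z$-coordinate increments by one on each unit of reward and $r^k(s,z,a)$ vanishes once $z > Z_i$), the operation $\min\{\cdot,Z_i\}$ in \eqref{equpdate2} never hurts optimism, so it suffices to show $r^k(s,z,a) + \hat P^{k,\ddagger}_{s,z,a} V^k_{h+1} + b^k_h(s,z,a) \ge r^k(s,z,a) + P^{k,\ddagger}_{s,z,a} V^{*k}_{h+1}$. Using the induction hypothesis in the form $P^{k,\ddagger}_{s,z,a} V^{*k}_{h+1} \le P^{k,\ddagger}_{s,z,a} V^k_{h+1}$, this reduces to the one-step deviation bound $b^k_h(s,z,a) \ge (P^{k,\ddagger}_{s,z,a} - \hat P^{k,\ddagger}_{s,z,a}) V^k_{h+1}$.

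The key step then exploits the structure of the extended MDP: the $z$-coordinate of the next state is a \emph{deterministic} function of $(s,z,a)$, so $P^{k,\ddagger}_{s,z,a}$ and $\hat P^{k,\ddagger}_{s,z,a}$ are merely $P_{s,a}$ and $\hat P^k_{s,a}$ pushed forward along a fixed map, whence $(P^{k,\ddagger}_{s,z,a} - \hat P^{k,\ddagger}_{s,z,a}) V^k_{h+1} = (P_{s,a} - \hat P^k_{s,a}) w$ and $\mathbb{V}(\hat P^{k,\ddagger}_{s,z,a}, V^k_{h+1}) = \mathbb{V}(\hat P^k_{s,a}, w)$, where $w$ is the slice of $V^k_{h+1}$ along the relevant $z$-fiber, with $w \in [0,Z_i]^S$ because of the truncation in \eqref{equpdate2}. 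Writing the frozen dyadic count as $n^k(s,a) = 2^j$ so that $\hat P^k_{s,a} = \hat P^{(j)}_{s,a}$, the good-event bound \eqref{eq:rb_2} applied with $v = w$ gives exactly $|(P_{s,a}-\hat P^k_{s,a})w| \le \sqrt{4\mathbb{V}(\hat P^k_{s,a},w)\iota_1 / n^k(s,a)} + 14 Z_i\iota_1/(3 n^k(s,a)) + 3\epsilon_1 = b^k_h(s,z,a)$, closing the induction. The degenerate case $n(s,a) = 0$ is handled directly: then $b^k_h(s,z,a) \ge \tfrac{14}{3}Z_i\iota_1 \ge Z_i$, so $Q^k_h(s,z,a) = Z_i \ge Q^{*k}_h(s,z,a)$.

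The main thing to get right is the bookkeeping around the trigger mechanism: one must confirm that the $b^k_h$, $\hat P^{k,\ddagger}_{s,z,a}$, and $n^k(s,a)$ that actually enter the backward pass producing $Q^k_h$ are the ones frozen at a dyadic count $2^j$ with $2^j \le T_0 H$, so that \eqref{eq:rb_2} (stated for $\hat P^{(j)}$) applies with the correct index $j$, and that replacing $w$ by $\mathrm{Proj}_{\mathcal{L}}(w)$ costs only the $O(\epsilon_1)$ slack already absorbed in \eqref{eq:rb_2} (this uses $w \in [0,Z_i]^S$ and that the support of $\hat P^{k,\ddagger}_{s,z,a}$ has size at most $S$, so the discretization $\mathcal{L}$ suffices). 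Everything else is the standard optimism recursion.
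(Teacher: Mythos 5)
Your proposal is correct and follows essentially the same route as the paper: backward induction with the standard optimism recursion, where the one-step deviation $(P^{k,\ddagger}_{s,z,a}-\hat P^{k,\ddagger}_{s,z,a})V^k_{h+1}$ is controlled by observing that the extended transition is a pushforward of $P_{s,a}$ (deterministic $z'$), so the good-event bound \eqref{eq:rb_2} on $[0,Z_i]^S$ vectors applies to the $z$-fiber slice with matching variance — exactly the content of the paper's Lemma~\ref{lemma:aux1}. Your explicit treatment of the $n(s,a)=0$ case and the trigger/frozen-count bookkeeping are correct refinements of details the paper leaves implicit.
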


\begin{proof}
We will prove by backward induction. Assuming $Q^k_{h'}(s,z,a)\geq Q^{*k}_{h'}(s,z,a)$ for any $(s,z,a)$ and $h<h'\leq H+1$.
By the update rule \eqref{equpdate1} and \eqref{equpdate2}, we have that
\begin{align}
    Q_h^k(s,z,a)&  = \min \{ r^{k+1}(s,z,a)+ \hat{P}^{k,\ddagger}_{s,z,a}V_{h+1}^k + b_h^k(s,z,a)  , Z_i \} \nonumber
    \\ & = \min \{ r^{k+1}(s,z,a)+ P^{k,\ddagger}_{s,z,a}V_{h+1}^{k} +   \zeta_h^k(s,z,a) +b_h^k(s,z,a) ,Z_i \} \nonumber
    \\ & \geq \min \{r^{k+1}(s,z,a)+  P^{k,\ddagger}_{s,z,a}V_{h+1}^{*k} + \zeta_h^k(s,z,a) +b_h^k(s,z,a),Z_i\} \nonumber
    \\& \geq \min\{Q^{*k}_h(s,z,a)+ \zeta_h^k(s,z,a) +b_h^k(s,z,a),Z_i \},\nonumber
\end{align}
where $\zeta_h^k(s,z,a) =vV_{h+1}^{k}$. So it suffices to verify conditioned on $\mathcal{G}_{i}$, it holds that $\zeta_h^k(s,z,a) +b_h^k(s,z,a)\geq 0$.

\begin{lemma}\label{lemma:aux1}
Conditioned on $\mathcal{G}_i$, for any $v\in [0,Z_i]^{S\times Z_i}$ and any proper $(k,s,z,a)$, it holds that
\begin{align}
    &| (\hat{P}^{k,\ddagger}_{s,z,a} -P^{k,\ddagger}_{s,z,a})v|
    \\ &\leq \min \left\{ \sqrt{\frac{2\mathbb{V}(P^{k,\ddagger}_{s,z,a},v)\iota_1 }{n^k(s,a)}} +3\epsilon_{1}+ \frac{Z_i\iota_1}{3n^k(s,a)},\quad \sqrt{\frac{4 \mathbb{V}(\hat{P}^{k,\ddagger}_{s,z,a},v) \iota_1 }{n^k(s,a)}}+3\epsilon_1+ \frac{14Z_i\iota_1}{3n^k(s,a)} \right\}.
\end{align}
\end{lemma}

\begin{proof}

Direct computation gives that
\begin{align}
 &  |(\hat{P}^{k,\ddagger}_{s,z,a} -P^{k,\ddagger}_{s,z,a})v |  \nonumber
 \\  & =\Big| \sum_{s'}\sum_{z'}  \left(\hat{P}^k_{s,a,s'} - P_{s,a,s'}\right)\cdot\mathbb{I}\left[P^{k,\ddagger}_{s,z,a,s',z'}>0 \right]\cdot (v_{s,z} -P^{k,\ddagger}_{s,z,a}v) \Big|\nonumber
 \\ & =\Big| \sum_{s'} \left(\hat{P}^k_{s,a,s'} - P_{s,a,s'}\right) \sum_{z'} \mathbb{I}\left[P^{k,\ddagger}_{s,z,a,s',z'}>0 \right]\cdot (v_{s,z} -P^{k,\ddagger}_{s,z,a}v) \Big| .\nonumber
\end{align}
Let $x$ be the $S$-dimensional vector such that $x_s =\sum_{z'} \mathbb{I}\left[P^{k,\ddagger}_{s,z,a,s',z'}>0 \right]\cdot v_{s',z'} $ and $x' = x -\min_{s}x_{s}\textbf{1}$. Because for any $(s,z,a,s')$, there is at most one $z'$ such that $P^{k,\ddagger}_{s,z,a,s',z'}>0$, we have that $x_{s}^2  =\sum_{z'} \mathbb{I}\left[P^{k,\ddagger}_{s,z,a,s',z'}>0 \right]\cdot v^2_{s',z'} $ and $x'\in [0,Z_i]^{S}$. Then by \eqref{eq:rb_1}, \eqref{eq:rb_2} and the definition of $n^k(s,a)$, we have that 
\begin{align}
   & | (\hat{P}^{k,\ddagger}_{s,z,a} -P^{k,\ddagger}_{s,z,a})v| 
    \\ & \leq \min\left\{\sqrt{\frac{2\iota_1 \mathbb{V}(P^k_{s,a},x' )}{n^k(s,a)}} +3\epsilon_1 +\frac{Z_i\iota_1}{3n^k(s,a)},  \quad  \sqrt{\frac{4\iota_1 \mathbb{V}(\hat{P}^k_{s,a},x' )}{n^k(s,a)}} +3\epsilon_1 +\frac{14Z_i\iota_1}{3n^k(s,a)} \right\}.\label{eq:add1}
\end{align}

Noting that
\begin{align}
   & \mathbb{V}(\hat{P}^{k,\ddagger}_{s,z,a},v) 
    \\ &= \sum_{s'}\hat{P}^k_{s,a,s'} \sum_{z'} \mathbb{I}\left[P^{k,\ddagger}_{s,z,a,s',z'}>0 \right]v_{s',z'}^2   - \big(\sum_{s'} \hat{P}^k_{s,a,s'}\sum_{z'}\mathbb{I}\left[P^{k,\ddagger}_{s,z,a,s',z'}>0 \right]v_{s',z'}\big)^2
    \\ & = \sum_{s'}\hat{P}^{k}_{s,a,s'}x_{s'}^2  - (\hat{P}_{s,a}^k x)^2
    \\ & = \mathbb{V}(\hat{P}^k_{s,a},x)
    \\ & = \mathbb{V}(\hat{P}^k_{s,a},x').
\end{align}
By \eqref{eq:add1}, we have that
\begin{align}
   & | (\hat{P}^{k,\ddagger}_{s,z,a} -P^{k,\ddagger}_{s,z,a})v|
   \\ & \leq \min \left\{  \sqrt{\frac{2\iota_1 \mathbb{V}(P^{k,\ddagger}_{s,z,a},v)}{n^k(s,a)}} +3\epsilon_1 +\frac{Z_i\iota_1}{3n^k(s,a)},\quad \sqrt{\frac{4\iota_1 \mathbb{V}(\hat{P}^{k,\ddagger}_{s,z,a},v)}{n^k(s,a)}} +3\epsilon_1 +\frac{14Z_i\iota_1}{3n^k(s,a)}. \right\}.
\end{align}
The proof is completed.
\end{proof}

By Lemma~\ref{lemma:aux1}, we have that
\begin{align}
    |\zeta_h^k(s,z,a) | \leq \sqrt{\frac{4\mathbb{V}(\hat{P}^{k,\ddagger}_{s,z,a},V_{h+1}^{k} )\iota_1}{n^k(s,a)}} +3\epsilon_1 +\frac{14Z_i\iota_1}{3n^k(s,a)}.\label{eq:lemma1_2}
\end{align}
Combining this with the definition of $b_h^k(s,z,a)$, we conclude that $Q_h^k(s,z,a)\geq Q^{*k}_{h}(s,z,a)$.  We finish the proof by noting that $Q^k_{H+1}(s,z,a) = Q^{*k}_{H+1}(s,z,a) = 0$ for any $(s,z,a)$.
\end{proof}

By \eqref{eq:rb_1} and similar arguments as above, we can bound for any $(s,z,a)$
\begin{align}
Q_h^k(s,z,a)& \leq r^k(s,z,a) + \hat{P}^{k,\ddagger}_{s,z,a} V^k_{h+1} + b_h^k(s,z,a)\nonumber
\\ & \leq r^k(s,z,a) +P^{k,\ddagger}_{s,z,a} V^k_{h+1}  +b_h^k(s,z,a) +\sqrt{\frac{4\mathbb{V}(P^{k,\ddagger}_{s,z,a},V_{h+1}^{k}) \iota_1}{n^k(s,a)}} + 3\epsilon_1 +\frac{Z_i\iota_1}{3n^k(s,a)}.\label{eq:local1}
\end{align}


Now we aim to bound $\mathbb{V}(\hat{P}^{k,\ddagger}_{s,z,a},V_{h+1}^k$ by $\mathbb{V}(P^{\ddagger,k}_{s,z,a},V_{h+1}^k)$. Because $\mathbb{V}(p,x) = \min_{\lambda\in \mathbb{R}}p(x-\lambda\textbf{1})^2$, we can find $\lambda\in \mathbb{R}$, such that $\mathbb{V}(P_{s,z,a}^{k,\ddagger}, V_{h+1}^k) = P_{s,z,a}^{k,\ddagger}(V_{h+1}^k -\lambda\textbf{1})^2$. Let $v = V_{h+1}^k -\lambda\textbf{1}$, then we have that $\mathbb{V}(\hat{P}^{k,\ddagger}_{s,z,a},V_{h+1}^k)\leq \hat{P}^{k,\ddagger}_{s,z,a}v^2$.

Again by Lemma~\ref{lemma:aux1}, we have that
\begin{align}
    \hat{P}^{k,\ddagger}_{s,z,a}v^2 -P_{s,z,a}^{k,\ddagger}v^2 &\leq Z_{i}\left(\sqrt{\frac{2 \mathbb{V}(P^{k,\ddagger}_{s,z,a}, \frac{v^2}{Z_i} ) \iota_1}{n^k(s,a) }} +3\epsilon_1+ \frac{Z_i\iota_1}{3n^k(s,a)} \right)
    \\ & \leq  Z_{i}\left(\sqrt{\frac{2 P^{k,\ddagger}_{s,z,a}v^2  \iota_1}{n^k(s,a) }} +3\epsilon_1+ \frac{Z_i\iota_1}{3n^k(s,a)} \right)
    \\ & \leq \frac{P^{k,\ddagger}_{s,z,a}v^2}{2} +\frac{4Z_i^2\iota_1}{3n^k(s,a)} +3\epsilon_{1}Z_i.
\end{align}
Then it follow that
\begin{align}
    \mathbb{V}(\hat{P}^{k,\ddagger}_{s,z,a},V_{h+1}^k) \leq \frac{3}{2}\mathbb{V}(P^k_{s,z,a},V_{h+1}^k)+ \frac{4Z_i^2\iota_1}{3n^k(s,a)}+3\epsilon_1 Z_i, 
\end{align}
and
\begin{align}
     b_h^k(s,z,a) + \sqrt{\frac{4\mathbb{V}(P^{k,\ddagger}_{s,z,a},V_{h+1}^{k}) \iota_1}{n^k(s,a)}} +\frac{Z_i\iota_1}{3n^k(s,a)} \leq 4\sqrt{\frac{ \mathbb{V}(P^{k,\ddagger}_{s,z,a},V_{h+1}^{k})  \iota_1}{n^k(s,a)}} +\frac{6Z_i\iota_1}{n^k(s,a)}+6\epsilon_{1}+\sqrt{12\epsilon_{1}Z_i}.\nonumber
\end{align}

Let $N_h^k(s,a)$ denote the visit count of $(s,a)$ before the $h$-th step in the $k$-th episode.
We define $\mathcal{K}$ be the set of indexes of episodes in which  update of empirical transition model does not occur. We further define $h_0(k) = \min\{h |N_h^k(s_h^k,a_h^k)+1\in \mathcal{L} \}$ for each $k\in \mathcal{K}^{C}$ and $\mathcal{B} = \{(k,h)| k\in \mathcal{K}^{C},h_0(k)+1\leq h \leq H\}$. In words, $h_0(k)$ denotes the time when the first update of empirical transition model occurs and $\mathcal{B}$ consists of $(k,h)$ pairs after such updates. We use $I(k,h)$ as a shorthand of the indicator $\mathbb{I}\left[ (k,h)\notin \mathcal{B} \right]$.

Define $\beta_h^k(s,z,a) =\min\{4\sqrt{\frac{ \mathbb{V}(P^{k,\ddagger}_{s,z,a},V_{h+1}^{k})  \iota_1}{n^k(s,a)}} +\frac{6Z_i\iota_1}{n^k(s,a)}+6\epsilon_{1}+\sqrt{12\epsilon_{1}Z_i}, Z_i \}$. By \eqref{eq:local1}, we have that
\begin{align}
  Q_h^k(s,z,a)I(k,h)\leq   \left(r^k(s,z,a) +P^k_{s,z,a} V^k_{h+1}   +\beta_h^k(s,z,a)\right) I(k,h+1) +Z_{i}\mathbb{I}\left[ I(k,h)\neq I(k,h+1)\right]\nonumber
 \\ \label{eq:local2}
\end{align}
holds for any $(s,z,a)\in\mathcal{S}\times [Z_i+1]\times\mathcal{A}$. 
By \eqref{eq:local2} and the fact $V^k_h(s_h^k,z_h^k) = Q^k_h(s_h^k,z_h^k,a_h^k)$, we obtain that
\begin{align}
&\sum_{k=1}^{T_0}V^{k*}_{1}(s_1,1)  - \sum_{k=1}^{T_0}\sum_{h=1}^H r^k_h \nonumber
\\ & \leq \sum_{k=1}^{T_0} V^{k}_1(s_1,1)I(k,1) - \sum_{k=1}^{T_0}\sum_{h=1}^H r^k_h I(k,h+1) \nonumber
\\ & \leq  \sum_{k=1}^{T_0}\sum_{h=1}^H ( P^{k,\ddagger}_{s_h^k,z_h^k,a_h^k } -\textbf{1}_{s_{h+1}^k,z_{h+1}^k  }) V_{h+1}^k\cdot I(k,h+1)+\sum_{k=1}^{T_0}\sum_{h=1}^H \beta_h^k(s_h^k,z_h^k,a_h^k)I(k,h+1)  +Z_i|\mathcal{K}^{C}|\label{eq:local3} .
\end{align}
Here \eqref{eq:local3} is by the fact that $\sum_{k=1}^{T_0}\sum_{h=1}^H \mathbb{I}\left[I(k,h)\neq I(k,h+1)\right]\leq |\mathcal{K}^C|$.
Define $M_1 = \sum_{k=1}^{T_0}\sum_{h=1}^H (P^k_{s_{h}^k, z_h^k,a_h^k}-\textbf{1}_{s_{h+1}^k , n_{h+1}^k}  )V_{h+1}^{k}\cdot I(k,h+1)$ and $M_2 = \sum_{k=1}^{T_0} \sum_{h=1}^H \beta_{h}^k(s_h^k,z_h^k,a_h^k)I(k,h+1)$. We will separately bound $M_1$ and $M_2$. 

\subsubsection{Bound of $M_1$}\label{sec:bdM1}
Define $\check{V}^k_{h} = I(k,h) V_{h}^k$. By definition of $\mathcal{B}$, $I(k,h+1)$ is measurable with respect to $\mathcal{F}_h^k$, where $\mathcal{F}_h^k=\sigma\left(\{s_{h'}^{k'},z_{h'}^{k'},a_{h'}^{k'},r_{h'}^{k'},s_{h'+1}^{k'}\}_{1\leq k'<k,1\leq h'\leq H} \cup \{s_{h'}^k,z_{h'}^k,a_{h'}^k,r_{h'}^k,s _{h'+1}^k\}_{1\leq h'\leq h-1}\right) $, i.e., all past trajectories before $(s_{h}^k,z_h^k,a_h^k$ is executed. Therefore, $M_1$ could be viewed as a martingale and we then have by Lemma \ref{self-norm} that
\begin{align}
    \mathbb{P}\left[ |M_1|> \sqrt{2\sum_{k=1}^{T_0}\sum_{h=1}^H \mathbb{V}\left( P^{k,\ddagger}_{s_h^k,z_h^k,a_h^k} , \check{V}_{h+1}^k \right)\iota }+ 6Z_i \iota  \right]\leq 2(\log_2(T_0H)+1)\delta. \label{eq:glo0}
\end{align}
We define $M_3 = \sum_{k=1}^{T_0}\sum_{h=1}^H \mathbb{V}\left(P^{k,\ddagger}_{s_h^k,z_h^k,a_h^k},\check{V}_{h+1}^k  \right)$ and deal with this term in Section \ref{sec:bdM2}.

\subsubsection{Bound of $M_2$}\label{sec:bdM2}

By the definition of $\beta_{h}^k(s,z,a)$, we have
\begin{align}
    M_2 &= \sum_{k=1}^{T_0}\sum_{h=1}^H \beta_{h}^k(s_{h}^k,z_h^k,a_h^k)I(k,h+1) \nonumber
    \\ & \leq \sum_{k=1}^{T_0}\sum_{h=1}^H 4\sqrt{\frac{ \mathbb{V}(P^{k,\ddagger}_{s_h^k,z_h^k,a_h^k} ,V_{h+1}^k )\iota_1 }{n^k(s,a)}} I(k,h+1) +\sum_{k=1}^{T_0}\sum_{h=1}^H 6\frac{ Z_i\iota_1}{n^k(s,a)}I(k,h+1)+ 6T_{0}H\epsilon_1 +\sqrt{12\epsilon_1Z_i}T_0H \nonumber
    \\ & = \sum_{k=1}^{T_0}\sum_{h=1}^H 4\sqrt{\frac{ \mathbb{V}(P^{k,\ddagger}_{s_h^k,z_h^k,a_h^k} ,\check{V}_{h+1}^k )\iota_1 }{n^k(s,a)}}  +\sum_{k=1}^{T_0}\sum_{h=1}^H 6\frac{ Z_i\iota_1}{n^k(s,a)}I(k,h+1)+ 6T_{0}H\epsilon_1 +\sqrt{12\epsilon_1Z_i}T_0H. \label{eq:local6}
\end{align}
Define $l_{i} = \left\lfloor \log_{2}(Z_i) \right\rfloor+1$.
By the update rule, for any  we have that
for any $(s,a)$ and any $ 3\leq j\leq l_i$, we have
 \begin{align}
     &  \sum_{k=1}^{T_0}\sum_{h=1}^H  \mathbb{I}\left[ (s_h^k,a_h^k)=(s,a), n^k(s,a)=2^{j-1}\right]\cdot I(k,h+1) \nonumber
     \\ & \leq  \sum_{k=1}^{T_0}\sum_{h=1}^H  \mathbb{I}\left[ (s_h^k,a_h^k)=(s,a), n^k(s,a)=2^{j-1}\right]\cdot I(k,h)  \nonumber
     \\ &\leq 2^{j-1}.\label{eq_sec3_a01}
 \end{align}
 We then obtain
 \begin{align}
     &\sum_{k=1}^{T_0} \sum_{h=1}^H\sqrt{ \frac{\mathbb{V}(P^{k,\ddagger}_{s_h^k,z_h^k,a_h^k} ,\check{V}_{h+1}^k ) }{n^k(s_h^k,a_h^k)  }} \nonumber
\\ &  \leq  \sum_{k=1}^{T_0}\sum_{h=1}^H   \sum_{s,a}\sum_{j=3}^{l_{i}}\mathbb{I}\left[ (s_h^k,a_h^k)=(s,a), n^k(s,a)=2^{j-1}\right]  \sqrt{\frac{\mathbb{V}(P^{k,\ddagger}_{s_h^k,z_h^k,a_h^k} ,\check{V}_{h+1}^k )}{2^{i-1}}}+  8SAZ_i \label{eq:local5}
\\ & = \sum_{s,a}\sum_{j=3}^{l_i} \frac{1}{\sqrt{2^{j-1}}} \sum_{k=1}^{T_0}\sum_{h=1}^H  \mathbb{I}\left[ (s_h^k,a_h^k)=(s,a), n^k(s,a)=2^{j-1}\right] I(k,h+1) \cdot \sqrt{\mathbb{V}(P^{k,\ddagger}_{s_h^k,z_h^k,a_h^k} ,\check{V}_{h+1}^k ) } + 8SAZ_i\nonumber
\\ &  \leq \sum_{s,a}\sum_{j=3}^{l_i} \sqrt{\frac{ \sum_{k=1}^{T_0}\sum_{h=1}^H  \mathbb{I}\left[ (s_h^k,a_h^k)=(s,a), n^k(s,a)=2^{j-1}\right] I(k,h+1) }{2^{j-1}}}\cdot \nonumber
\\  & \quad \quad \quad  \sqrt{  \left( \sum_{k=1}^{T_{0}}\sum_{h=1}^H  \mathbb{I}\left[ (s_h^k,a_h^k)=(s,a), n^k(s,a)=2^{j-1}\right]\mathbb{V}(P^{k,\ddagger}_{s_h^k,z_h^k,a_h^k} ,\check{V}_{h+1}^k ) \right)   } + 8SAZ_i\label{eq_sec3_01}
\\ & \leq \sqrt{ SAl_{i}\sum_{k=1}^{T_0} \sum_{h=1}^H \mathbb{V}(P^{k,\ddagger}_{s_h^k,z_h^k,a_h^k} ,\check{V}_{h+1}^k )  }+ 8SAZ_i \label{eq:local3.5}
 \end{align}
 Here \eqref{eq:local5} holds by bounding $ \sqrt{ \frac{\mathbb{V}(P^{k,\ddagger}_{s_h^k,z_h^k,a_h^k} ,\check{V}_{h+1}^k ) }{n^k(s_h^k,a_h^k)  }}$ by $Z_i$ for the $(k,h)$ pairs such that $n^k(s_h^k,a_h^k)\leq 4$, \eqref{eq_sec3_01} holds by Cauchy-Schwartz inequality and \eqref{eq:local3.5} is by \eqref{eq_sec3_a01}.
 
On the other hand, in a similar way we have that
\begin{align}
    \sum_{k=1}^{T_0}\sum_{h=1}^H \frac{I(k,h+1)}{n^k(s,a)}& \leq SAl_i +8SA.
     \label{eq:local4}
\end{align}

Recall that $M_3 = \sum_{k=1}^{T_0}\sum_{h=1}^H \mathbb{V}(P^{k,\ddagger}_{s_h^k,z_h^k,a_h^k} ,\check{V}_{h+1}^k)$.
By \eqref{eq:local3}, \eqref{eq:local6} and \eqref{eq:local4} , we obtain that
\begin{align}
M_2& \leq O\left(  \sqrt{SAl_i\iota_1 M_3}   +SAl_iZ_i+ SAZ_i\iota_1 + 6T_0H\epsilon_1 +\sqrt{12\epsilon_1}T_0H \right)
\\ & = O\left(  \sqrt{SAl_i\iota_1 M_3}   +SAl_iZ_i+ SAZ_i\iota_1 \right).\label{eq:glo1}
\end{align}

Now we deal with $M_3$ by a recursive-based concentration bound. 

Define $F(m) =  \sum_{k=1}^{T_0} \sum_{h=1}^H (P^{k,\ddagger}_{s_h^k,z_h^k,a_h^k}-\textbf{1}_{s_{h+1}^k,z_{h+1}^k}) \left(\frac{\check{V}_{h+1}^k}{Z_i}\right)^{2^{m}}$ for $m \geq 1$.
As argued before, $\check{V}_{h+1}^k$ is measurable with respect to $\mathcal{F}_h^k$.
By Lemma \ref{self-norm}, for each $1\leq m\leq \left \lfloor \log_{2}(T_0 H)\right \rfloor +1$,  it holds that
\begin{align}
    \mathbb{P}\left[ |F(m)|>2\sqrt{2\sum_{k=1}^{T_0}\sum_{h=1}^H\mathbb{V}\left(P^{k,\ddagger}_{s_h^k,z_h^k,a_h^k} ,\left(\frac{\check{V}_{h+1}^k}{Z_i}\right)^{2^m} \right) \iota }+ 6\iota \right]\leq 2(\log_{2}(KH)+1)\delta.
\end{align}
Direct computation gives that
\begin{align}
&  \sum_{k=1}^{T_0}\sum_{h=1}^H\mathbb{V}\left(P^{k,\ddagger}_{s_h^k,z_h^k,a_h^k} , (\check{V}_{h+1}^k)^{2^m} \right) = \sum_{k=1}^{T_0} \sum_{h=1}^H \left(   P^{k,\ddagger}_{s_h^k,z_h^k,a_h^k} (\check{V}_{h+1}^k)^{2^{m+1}} - \left(    P^{k,\ddagger}_{s_h^k,z_h^k,a_h^k} (\check{V}_{h+1}^k)^{2^m} \right)^2       \right)\nonumber
\\ &   = \sum_{k=1}^{T_0} \sum_{h=1}^H (P^{k,\ddagger}_{s_h^k,z_h^k,a_h^k}-\textbf{1}_{s_{h+1}^k,z_{h+1}^k}) (\check{V}_{h+1}^k)^{2^{m+1}} \nonumber
\\& \quad \quad \quad \quad +\sum_{k=1}^{T_0} \sum_{h=1}^H \left(  (\check{V}_{h}^k(s_h^k,z_h^k))^{2^{m+1}} -   \left(    P^{k,\ddagger}_{s_h^k,z_h^k,a_h^k}  (\check{V}_{h+1}^k))^{2^{m+1}}  \right)  \right)   - \sum_{k=1}^{T_0} (V_1^k(s_1^k,1))^{2^{m+1}}\nonumber
\\ & \leq  \sum_{k=1}^{T_0} \sum_{h=1}^H (P^{k,\ddagger}_{s_h^k,z_h^k,a_h^k}-\textbf{1}_{s_{h+1}^k,z_{h+1}^k}) (\check{V}_{h+1}^k)^{2^{m+1}} +\sum_{k=1}^{T_0} \sum_{h=1}^H \left(  (\check{V}_{h}^k(s_h^k,z_h^k))^{2^{m+1}} -   \left(    P^{k,\ddagger}_{s_h^k,z_h^k,a_h^k} \check{V}_{h+1}^k)^{2^{m+1}} \right)  \right)  \nonumber
\\ & \leq \sum_{k=1}^{T_0} \sum_{h=1}^H (P^{k,\ddagger}_{s_h^k,z_h^k,a_h^k}-\textbf{1}_{s_{h+1}^k,z_{h+1}^k}) (\check{V}_{h+1}^k)^{2^{m+1}}  +2^{m+1}Z_i^{2^{m+1}-1} \sum_{k=1}^{T_0} \sum_{h=1}^H \max\{    r^k(s_h^k,z_h^k,a_h^k) +\beta_h^k(s_h^k,z_h^k,a_h^k)  ,0\}I(k,h+1)\nonumber
\\ & \leq  \sum_{k=1}^{T_0} \sum_{h=1}^H (P^{k,\ddagger}_{s_h^k,z_h^k,a_h^k}-\textbf{1}_{s_{h+1}^k,z_{h+1}^k}) (\check{V}_{h+1}^k)^{2^{m+1}}+   2^{m+1}Z_i^{2^{m+1}-1} \sum_{k=1}^{T_0} \sum_{h=1}^H \beta_h^k(s_h^k,z_h^k,a_h^k)I(k,h+1)   + 2^{m+1}T_0Z_i^{2^{m+1}}\nonumber
\\ & =  \sum_{k=1}^{T_0} \sum_{h=1}^H (P^{k,\ddagger}_{s_h^k,z_h^k,a_h^k}-\textbf{1}_{s_{h+1}^k,z_{h+1}^k}) (\check{V}_{h+1}^k)^{2^{m+1}}+2^{m+1}Z_{i}^{2^{m+1}}( M_{2}/Z_i  +T_0).\label{eq:local7}
\end{align}

Therefore, for each $m\geq 1$ with probability $1-2(\log_2(T_0 H)+1)\delta$, it holds that
\begin{align}
  |F(m)|\leq 2\sqrt{2F(m+1) +2^{m+2}(M_2/Z_i+T_0) }
\end{align}
By Lemma \ref{lemma2} with $\lambda_1=T_0H$, $\lambda_2= 8\iota$, $\lambda_3= M_2/Z_i+T_0 +|\mathcal{K}|^{C}$ and $\lambda_4=6\iota$, we have that
\begin{align}
    \mathbb{P}\left[ |F(1)|> \max\{  46\iota, 8\sqrt{ (M_2/Z_i+T_0)\iota }+6\iota       \}  \right]\leq 2(\log_2(T_0H)+1)\log_2(T_0H)\delta.\label{eq:local8}
\end{align}
Plugging $m = 0$ into \eqref{eq:local7}, we have that
\begin{align}
    M_3 \leq Z_i^2F(1)+ 2(Z_i M_2+ T_0 Z_i^2).\nonumber
\end{align}
It then holds that 
\begin{align}
\mathbb{P}\left[ M_3 > 6(Z_i M_2+ T_0 Z_i^2) +46Z_i\iota \right]\leq 2(\log_2(T_0H)+1)\log_2(T_0H)\delta.\label{eq:glo2}
\end{align}
By \eqref{eq:glo1} and \eqref{eq:glo2}, with probability $1-2(\log_2(T_0H)+1)\log_2(T_0H)\delta$, it holds that
\begin{align}
   & M_2\leq  O\left(\sqrt{SAl_i\iota_1 M_3} + SA(\iota_1+l_i) Z_i \right) ;\nonumber
    \\& M_3 \leq 6(Z_i M_2+ T_0 Z_i^2) +46Z_i\iota ,\nonumber
\end{align}
which implies that $M_3 \leq O\left(Z_i \sqrt{SAT_0 l_i \iota_1} + SAl_i \iota_1 Z_i \right)$.

Combining this with \eqref{eq:glo0}, we finally conclude with probability $1-2(\log_2(T_0H)+1)\log_2(T_0 H)\delta +4SA(\log_2(Z_i)+2)\delta$, it holds that
\begin{align}
 \textbf{Regret}&\leq  O\left(Z_i \sqrt{SAT_0 l_i \iota_1} + SAl_i \iota_1 Z_i \right)
\\& = O\left(Z_i \sqrt{SAl_i( \iota + S\ln(SAH/\epsilon)) T_0}+ SAZ_il_i ( \iota + S\ln(SAH/\epsilon)) \right). \label{eq:add2}
\end{align}
 Here \eqref{eq:add2} holds because $T_0$ is bounded by $\mathrm{poly}(S,A,\ln(H),1/\epsilon)\iota$.
The proof is completed.

\section{Missing Proofs in Section \ref{sec:pf_plan}}\label{sec:mpf_plan}

\subsection{Proof of Lemma \ref{lemma:Qopt}}\label{sec:mpf_lemma_Qopt}
We will prove by backward induction from $h=H+1$. Firstly, the conclusion holds trivially for $h=H+1$. Assume $Q_{h'}(s,a)\geq Q_{h'}^{\dagger* }(s,a)$ for any $(s,a)$ and $h+1\leq h'\leq H+1$. By the update rule \eqref{eq:update} and the fact that $Q_h^{\dagger *}\leq 1$, we have that
\begin{align}
    &Q_h(s,a) -Q_h^{\dagger*}(s,a) \nonumber
    \\ & \geq \min\{b_h(s,a) +\hat{P}_{s,a}^{\dagger}V_{h+1}-P_{s,a}^{\dagger}V_{h+1}+ P_{s,a}^{\dagger}(V_{h+1}- V^{\dagger*}_{h+1})   ,     0\}\nonumber
    \\ & \geq \min\{b_h(s,a) +\hat{P}_{s,a}^{\dagger}V_{h+1}-P_{s,a}^{\dagger}V_{h+1},      0\}\nonumber
    \\ & \geq \min\{b_h(s,a)- \sum_{s'} \left(2(1-\frac{1}{Z_{i(s,a)}})\sqrt{\frac{\hat{P}^{\dagger}_{s,a}\iota  }{(1-\frac{1}{Z_{i(s,a)}})N(s,a)}} +\frac{14\iota}{3N(s,a)} \right) \cdot |V_{h+1}-\hat{P}_{s,a}^{\dagger}V_{h+1}|, 0             \}\nonumber
    \\ & \geq \min \{b_h(s,a) -2\sqrt{\frac{S\iota \mathbb{V}(\hat{P}^{\dagger}_{s,a} ,V_{h+1} ) }{N(s,a)}} - \frac{14S \iota }{3N(s,a)} \}\nonumber
    \\ & \geq 0.\nonumber
\end{align}

\section{Other Missing Proofs}\label{app:omf}
\subsection{Proof of Proposition \ref{pro1}}
Recall the definition of good event $\mathcal{G}$ in Section \ref{sec:mpf_plan}. We will prove conditioned on $\mathcal{G}$.

With a slight abuse of notation, we use $\{Q_h(s,a )\}_{(s,a,h)\in \mathcal{S}\times\mathcal{A}\times [H]}$ and $\{V_h(s)\}_{(s,h)\in \mathcal{S}\times [H]}$ to denote respectively the $Q$-function and $V$-function  returned by \textsc{Q-Computing}($\hat{P},N,r$). 
Following similar lines in the proof of Lemma \ref{lemma:Qopt}, we have that conditioned on $\mathcal{G}$, $Q_h(s,a)\geq Q^{*}(s,a)$ for any $(s,a,h)\in \mathcal{S}\times \mathcal{A}\times [H]$. By \eqref{eq:alg4_update1} and \eqref{eq:alg4_update2}, the Bellman error of the computed $Q$-function is bounded by 
\begin{align}
    Q_h(s,a)-r(s,a) -P_{s,a}V_{h+1} & \leq b_h(s,a) +(\hat{P}_{s,a}-P_{s,a})V_{h+1}\nonumber
    \\ & \leq b_h(s,a) +\sum_{s'}\left( \sqrt{\frac{2P_{s,a,s'}\iota}{N(s,a)}}+ \frac{\iota}{3N(s,a)} \right) \cdot \left| V_{h+1}(s')- P_{s,a}V_{h+1} \right| \label{eq:appc_local1}
    \\ & \leq b_h(s,a)+ \sqrt{\frac{2S\iota \mathbb{V}(P_{s,a},V_{h+1}) }{N(s,a)}}+ \frac{S\iota}{3N(s,a)}\label{eq:appc_local2}
    \\ & \leq 6\sqrt{\frac{S\iota \mathbb{V}(P_{s,a},V_{h+1} )}{N(s,a)}} +\frac{9S\iota}{N(s,a)}.\nonumber
\end{align}
Here \eqref{eq:appc_local1} holds by $|\hat{P}_{s,a,s'}-P_{s,a,s'} |\leq \sqrt{\frac{2\iota P_{s,a,s'}}{N(s,a)}}+ \frac{\iota}{3N(s,a)}$. On the other hand, $Q_h(s,a)\leq 1$ implies that $Q_h(s,a)-r(s,a)-P_{s,a}V_{h+1}\leq 1$. Re-define $\beta_h(s,a) = \min \{ 6\sqrt{\frac{S\iota \mathbb{V}(P_{s,a},V_{h+1} )}{N(s,a)}} +\frac{9S\iota}{N(s,a)}  ,1  \}$. It then follows
\begin{align}
Q_h(s,a)-r(s,a)-P_{s,a}V_{h+1}\leq \beta_h(s,a).\label{eq:appc_local3}
\end{align}
Let $\pi$ be the policy such that $\pi_h(s) = \arg\max_{a}Q_h(s,a)$ for any $(s,h)\in\mathcal{S}\times [H]$ and re-define $w_h(s,a,\pi): =\mathbb{E}_{\pi,\mathcal{M}}\left[ \mathbb{I}\left[(s_h,a_h)=(s,a) \right] \right]$. Then we have
\begin{align}
   & V_{1}^{*}(s_1)-V_{1}^{\pi}(s_1)\nonumber
    \\&\leq V_{1}(s_1)-V_{1}^{\pi}(s_{1})\nonumber
    \\ & \leq \beta_{1}(s_1,a_1) +P_{s_1,a_1}(V_{2}-V_{2}^{\pi})
    \\ & \leq ...\nonumber
    \\ & \leq \sum_{s,a,h}w_{h}(s,a,\pi)\beta_{h}(s,a)\nonumber
   \\& \leq  O \left(\sum_{(s,a)\notin\mathcal{X}_{K+1}}\sum_h w_h(s,a,\pi)\sqrt{\frac{S\iota \mathbb{V}(P_{s,a},V_{h+1} )}{N(s,a)}}  +\sum_{(s,a)\notin \mathcal{X}_{K+1}}\sum_h w_h(s,a,\pi)\frac{S\iota}{N(s,a)} \right )\nonumber
  \\ & \quad \quad \quad \quad  + \sum_{(s,a)\in \mathcal{X}_{K+1}}\sum_{h}w_h(s,a,\pi) \nonumber
   \\ & \leq\sum_{i=1}^K O\left( \sum_{(s,a)\in \mathcal{X}_i}\sum_{h}w_h(s,a,\pi)\sqrt{\frac{S\iota \mathbb{V}(P_{s,a},V_{h+1}) }{N_i}}  +\sum_{(s,a)\in \mathcal{X}_i}\frac{S\iota}{N_i} \right)  +O(\epsilon)\nonumber
   \\ & \leq \sum_{i=1}^K O\left( \sqrt{\frac{\sum_{h=1}^H w_h(s,a,\pi)}{N_i}} \cdot \sqrt{S\iota \sum_{h=1}^H w_h(s,a,\pi)\mathbb{V}(P_{s,a},V_{h+1})}  +\sum_{h=1}^H w_h(s,a,\pi)\cdot \frac{S\iota}{N_i} \right) +O(\epsilon)\nonumber
    \\ & \leq \sum_{i=1}^K O\left(\epsilon \sqrt{\sum_{h=1}^H w_h(s,a,\pi)\mathbb{V}(P_{s,a},V_{h+1})} +\epsilon^2 \right) + O(\epsilon)\nonumber
   \\ & \leq O(K\epsilon \sqrt{\sum_{h=1}^H w_h(s,a,\pi)\mathbb{V}(P_{s,a},V_{h+1})}+ K\epsilon^2+\epsilon).\label{eq:appc_local4}
\end{align}
Note that
\begin{align}
   & \sum_{h=1}^H w_h(s,a,\pi)\mathbb{V}(P_{s,a},V_{h+1})\nonumber
   \\ & = \mathbb{E}_{\pi,\mathcal{M}}\left[\sum_{h=1}^H \left( P_{s,a}(V_{h+1})^2 - (P_{s,a}V_{h+1})^2 \right)  \right] \nonumber
   \\ & \leq \mathbb{E}_{\pi,\mathcal{M}}\left[ \sum_{h=1}^H (V_h(s_h))^2 - (P_{s,a}V_{h+1})^2 \right] \nonumber
   \\ & \leq 2\mathbb{E}_{\pi,\mathcal{M}}\left[\sum_{h=1}^H (r(s_h,a_h)+\beta_h(s_h,a_h) )  \right]\nonumber
\\ & \leq 2+ 2\sum_{s,a,h}w_h(s,a,\pi)\beta_h(s,a).\label{eq:appc_local5}
\end{align}
By \eqref{eq:appc_local4} and \eqref{eq:appc_local5}, we have that
\begin{align}
    \sum_{s,a,h}w_h(s,a,\pi)\beta_h(s,a)\leq O\left(K\epsilon \sqrt{2+   \sum_{s,a,h}w_h(s,a,\pi)\beta_h(s,a)} +\epsilon  \right),
\end{align}
which implies that 
\begin{align}
    V_{1}^*(s_1)-V_{1}^{\pi}(s_1)\leq O(\sum_{s,a,h}w_h(s,a,\pi)\beta_h(s,a))\leq O(K\epsilon +K^2\epsilon^2) .
\end{align}
By rescaling $\epsilon$, we finish the proof.


\end{document}